\theoremstyle{plain}
\newtheorem{thm}[subsection]{Theorem}
\theoremstyle{definition}
\newtheorem{conj}[subsection]{Conjecture}
\newtheorem{pro}[subsection]{Proposition}
\newtheorem{exa}[subsection]{Example}
\newtheorem{defn}[subsection]{Definition}
\newtheorem{rem}[subsection]{Remark}
\numberwithin{equation}{section}
\title{\vspace*{-4cm} Topological Foundations of
	Reinforcement Learning}
\author{
	Krame Kadurha David \\
	\texttt{david.krame@aims-cameroon.org} \\
	African Institute for Mathematical Sciences (AIMS) \\
	Cameroon \\
	\vspace{0.5cm} % Adds a small space between your info and the supervisor's name
	{\small Supervised by: Dr. Yaé U. Gaba} \\
	{\small AIMS-Research \& Innovation Center (AIMS RIC)} \\
	{\small Kigali, Rwanda}
}
\date{{\small 18 May 2024}\\
  {\scriptsize\it Submitted in Partial Fulfillment of a Structured Masters Degree at AIMS-Cameroon}
}
\begin{document}
\fancypagestyle{empty}{
  \fancyhf{}
  \renewcommand{\headrulewidth}{0pt}
  \fancyhead[C]{\includegraphics[width = \textwidth]{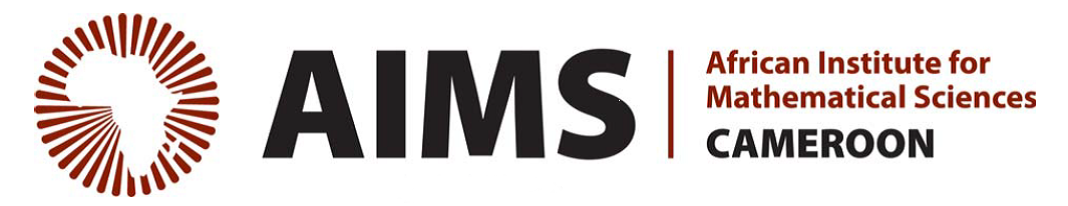}} % Logo
}

\pagestyle{empty}

\setlength{\headheight}{95pt} 

\maketitle

\pagenumbering{roman}
% Abstracts are usually written in English, with a version in your
% mother tongue underneath
\chapter*{Abstract} 
\addcontentsline{toc}{chapter}{Abstract}
% Don't change anything above this.
The goal of this work is to serve as a foundation for deep studies of the topology of state, action, and policy spaces in reinforcement learning. By studying these spaces from a mathematical perspective, we expect to gain more insight into how to build better algorithms to solve decision problems. Therefore, we focus on presenting the connection between the Banach fixed point theorem and the convergence of reinforcement learning algorithms, and we illustrate how the insights gained from this can practically help in designing more efficient algorithms. Before doing so, however, we first introduce relevant concepts such as metric spaces, normed spaces and Banach spaces for better understanding, before expressing the entire reinforcement learning problem in terms of Markov decision processes. This allows us to properly introduce the Banach contraction principle in a language suitable for reinforcement learning, and to write the Bellman equations in terms of operators on Banach spaces to show why reinforcement learning algorithms converge. Finally, we show how the insights gained from the mathematical study of convergence are helpful in reasoning about the best ways to make reinforcement learning algorithms more efficient.

\textbf{Keywords :} Markov Decision Process; Reinforcement Learning; Contraction mapping; Fixed Point; Banach Space; Bellman Operators; Q-Learning.

\vfill
\section*{Declaration}
I, the undersigned, hereby declare that the work contained in this essay is my original work, and that any work done by others or by myself previously has been acknowledged and referenced accordingly.

% Scan your signature into a small picture called 'signature.png' and insert it
% above your name and the date:
\includegraphics[height=4cm]{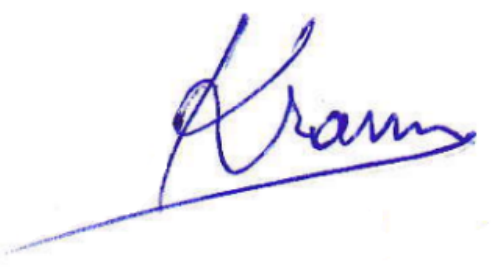} \hrule
% Your name must be in English Capitalisation with no comma, and the Family name comes last. 
% Do note the date below. It is called the "deadline".
Krame Kadurha David, 18 May 2024.

\tableofcontents

\listoffigures
\addcontentsline{toc}{chapter}{List of Figures}
%-----------------------------------------------------------------------------
\newpage

\pagenumbering{arabic}
\pagestyle{myheadings}
%-----------------------------------------------------------------------------

\chapter{Introduction}
There is not much work on the foundations of Reinforcement Learning (RL). Especially from a topological point of view. This work is, to the best of our knowledge, the first attempt to unify the basic concepts of Reinforcement Learning in a coherent and well-ordered way, specifying all the underlying mathematical concepts.

Some researchers have instead been interested in related questions, trying to explore RL concepts from a mathematical point of view, with the aim of improving the effectiveness of RL algorithms. For example:

In the paper \cite{ValuePolytope}, the authors studied the geometric and topological properties of the value functions in Markov decision processes with finite number of states and actions. Their characterization of the value function space as a general polytope provides an understanding of the relationships between policies and value functions. This perspective enhances the theoretical understanding of RL dynamics. The authors of \cite{DBLP:journals/corr/abs-1804-07193} also explored the concept of Lipschitz continuity in model-based RL, emphasizing the importance of learning Lipschitz continuous models for efficient value function estimation. This establishes some error bounds and demonstrates the benefits of controlling the Lipschitz constant in RL setting. The work done in \cite{DBLP:journals/corr/abs-1905-00475} extends model-free RL algorithms to continuous state-action spaces using a Q-learning-based approach, and that advancement opens avenues for applying RL techniques to a broader range of real-world applications with continuous state spaces. A very deep study is then done in \cite{MetricAndContinuityInRL}, introducing a unified formalism for defining state similarity metrics in RL. The authors claim to address the challenge of generalization in continuous-state systems by establishing hierarchies among metrics and examining their implications on RL algorithms.\newline
The aforementioned works can be difficult to understand or even less precise if the foundations are not well clarified. Therefore, the main goal of our work is to serve as a first stone in studying and organising the basic concepts, while presenting Reinforcement Learning in a simple and concise way, to help future researchers to quickly dive into the foundations of RL and to study more deeply, in particular the topology of state, action, and policy spaces. By studying these spaces from a mathematical perspective, we expect to gain more insight into how to build better algorithms to solve decision problems.\newline
At the end of this work, we even illustrate the latter assumption by showing that, with the mathematical investigation we have done, we are able to make very strong and general conjectures which merit to be taken seriously into account. This work is organized then as follows :
 
After this introduction, in chapter \ref{chap1}, we discuss metric spaces and their completeness, as well as the Banach Fixed Point Theorem. We end the chapter giving an application of the Banach Fixed Point Theorem to a concrete example.

In chapter \ref{chap2}, we formally present the Markov Decision Process framework and directly after that, we present the concept of optimality in Reinforcement Learning. At the end, we then discuss some methods which are at the core of how optimality is attained in Reinforcement Learning.

In chapter \ref{chap3}, we express reinforcement in term of Operators to have a simple way of proving why Reinforcement Learning Algorithms work, using the Banach Contraction Principle. In fact, this chapter discusses about Bellman Operators and shows how the optimality can be reached using operators' language.

Finally, before the final conclusion, in chapter \ref{chap4} we present some limitations of the classical Bellman operator and, using some insights gained from the mathematical investigation done before, we propose an alternative to the classical Bellman operator which shows good behaviour in terms of optimality and efficiency.
\chapter{Metric spaces and the Banach fixed point}\label{chap1}
% \section*{Partial Introduction}
% In this chapter, we discuss metric spaces and their completeness because it is necessary to think about it before presenting, in a second point of the chapter, notions about contraction mappings and the Banach Fixed Point Theorem. At the end now, we show an example of how the Banach Fixed Point Theorem can be used to solve a real problem requiring to prove existence and unicity of solutions.

%In this chapter, we discuss metric spaces and their completeness, as well as the Banach Fixed Point Theorem. We end the chapter giving an application of the Banach Fixed Point Theorem on a concrete example.

In this chapter, we present the fundamental concepts of metric spaces and explore their completeness, a crucial property in understanding the behavior of sequences within these spaces. We also study the Banach fixed point theorem, a result which establishes the existence and uniqueness of fixed points for mappings in complete metric spaces. We elucidate the significance of this theorem in providing a theoretical foundation for iterative algorithms and solution techniques in various mathematical and computational contexts. Furthermore, we culminate our discussion by illustrating the practical relevance of the Banach Fixed Point Theorem through a concrete example, demonstrating its applicability in solving real-world problems and providing insight into its computational implications.

\section{Complete metric spaces}
%It can be interesting to have a way to always measure how elements are a part one another given a certain set.
Metrics and metric spaces provide a structured framework for precisely measuring the relationship between elements within a set. This facilitates deeper insights into their interconnections and dependencies, enabling pattern recognition, association identification, and informed decision-making across diverse domains. In this section, we delve into complete metric spaces, which offer a comprehensive framework for understanding the entirety of a set's elements and their interrelations.

\begin{defn}[Metric and metric space]
	A metric on a set $M$ is a function $ d : M\times M \rightarrow \mathbb{R}$, such that  for $x, y, z \in M$:
	\begin{itemize}
		\item[(\textit{i})] [Non-negativity] $d(x,y) \geq  0 $   and $d(x,x) =0 $        
		\item[(\textit{ii})] [Identity of indiscernibles] $d(x,y) = 0  \implies x=y$ 
		\item[(\textit{iii})] [Symmetry] $d(x,y) = d(y,x)$   
		\item[(\textit{iv})] [Triangle Inequality] $d(x,y) \leq d(x,z)+d(z,y)$  
	\end{itemize}
	The pair $(M,d)$ is called a metric space.
\end{defn}
Let's now give some interesting examples of metric spaces.
\begin{exa}\phantom{c}
	\begin{itemize}
		\item[1)] Let $M=\mathbb{R}^n$ be the set of real n-uplets. For $x=(x_1,x_2,\cdots,x_n)$ , $y=(y_1,y_2,\cdots,y_n)$ and $z=(z_1,z_2,\cdots,z_n)$ in $ \mathbb{R}^n $ let's define :
		\[d_p(x,y) = \Big(\sum_{i=1}^n|x_i-y_i|^p\Big)^{1/p}~~~\text{with}~~p\geq 1 \]
		The conditions (\textit{i}), (\textit{ii}) and (\textit{iii}) hold due to the properties of the absolute value and the triangle inequality also holds because it is the result of the Minkowsky's inequality \cite{kantorovich2016functional,kumaresan2005topology}. Indeed assuming that $a_i = x_i-z_i$ and $b_i = z_i-y_i$. We can write the triangle inequality as follows :
		\begin{equation}
			\Big(\sum_{i=1}^n|a_i+b_i|^p\Big)^{1/p} \leq \Big(\sum_{i=1}^n |a_i|^p\Big)^{1/p} + \Big(\sum_{i=1}^n |b_i|^p\Big)^{1/p} \label{ToBeProven2}
		\end{equation}
		So, $\Big(\mathbb{R}^n,d_p\Big)$ is a metric space. When $p=1$ we get the manhattan distance, if $p=2$ we have the classical euclidian distance. But one case which is for a particular interest in this work is the case when $p\to \infty$. We get the infinity distance :
		\begin{equation}
			d_{\infty}(x,y) = \underset{1\leq i\leq n}{\max}~|x_i-y_i|
		\end{equation}
		\item[2)] Let $S$ be any non-empty set and $B(S)$, the space of all bounded real-valued functions on $S$. Consider $f$ and $g$ in $B(S)$, and define :
		\begin{equation}\label{MetricBoundedFunc}
			d(f,g) = \underset{x\in S}{sup}~|f(x)-g(x)|
		\end{equation}
		The pair $\Big(B(S), d\Big)$ is a metric space. This metric is called the \textit{supremum metric} or \textbf{uniform metric} \cite{kumaresan2005topology}.
	\end{itemize}
\end{exa}

Having presented some interesting examples of metric spaces, we give some definitions and propositions, necessary to introduce the completeness of metric spaces.
\begin{defn}[Sequences]
	A sequence in a metric space $(X,d)$ is a function $u:\mathbb{N}\to X$. It can be written $\{u_n\}_{n\in\mathbb{N}}$ or just $\{u_n\}$.
\end{defn}
\begin{defn}[Convergence]
	A sequence  $\{x_n\}$ of points in $(X,d)$ is said to be convergent if there is a point $x^*\in X$ such that, for any $\varepsilon>0$, there exists a positive integer $N_0$ such that $d(x_n,x^*)<\varepsilon$ whenever $n>N_0$. The point $x^*$ is termed the \textit{limit of the sequence} and we write $x_n\rightarrow x^*$.
\end{defn}

\begin{pro}
	The limit of a sequence in a metric space is unique.
\end{pro}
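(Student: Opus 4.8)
The plan is to prove uniqueness by contradiction, exploiting the fact that the two key axioms at play are the triangle inequality and the identity of indiscernibles. Suppose a sequence $\{x_n\}$ in $(X,d)$ has two limits $x^*$ and $y^*$ with $x^* \neq y^*$. By the identity of indiscernibles, $d(x^*, y^*) > 0$, so I may set $\varepsilon = d(x^*, y^*)/2 > 0$. The goal is to derive a contradiction by showing that the tail of the sequence cannot simultaneously be close to two distinct points.

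First I would invoke the definition of convergence twice. Since $x_n \to x^*$, there is an integer $N_1$ such that $d(x_n, x^*) < \varepsilon$ for all $n > N_1$; since $x_n \to y^*$, there is an integer $N_2$ such that $d(x_n, y^*) < \varepsilon$ for all $n > N_2$. Taking $N_0 = \max\{N_1, N_2\}$ ensures that for any fixed $n > N_0$ both bounds hold at once. This coupling step — forcing both convergence conditions to be active on the same tail of the sequence — is the structural crux of the argument.

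Next I would bring in the triangle inequality together with symmetry to bound the distance between the two candidate limits using the sequence as an intermediate point:
\begin{equation*}
	d(x^*, y^*) \leq d(x^*, x_n) + d(x_n, y^*) < \varepsilon + \varepsilon = 2\varepsilon = d(x^*, y^*).
\end{equation*}
This yields $d(x^*, y^*) < d(x^*, y^*)$, an immediate contradiction. Hence the assumption $x^* \neq y^*$ is untenable, and the limit must be unique.

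The argument is short and the steps are routine, so the main subtlety is not computational but conceptual: one must be careful that the single index $n > N_0$ is genuinely available to both inequalities simultaneously, rather than passing to the limit prematurely or choosing different indices for the two estimates. I would therefore emphasise the role of $N_0 = \max\{N_1, N_2\}$ in the write-up, since a sloppy version of this proof that uses separate indices fails to close the gap. The choice $\varepsilon = d(x^*,y^*)/2$ is what makes the factor of two collapse the inequality onto itself; any $\varepsilon \le d(x^*,y^*)/2$ would work equally well.
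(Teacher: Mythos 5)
Your proof is correct: the choice $\varepsilon = d(x^*,y^*)/2$, the coupling of the two convergence conditions via $N_0 = \max\{N_1,N_2\}$, and the triangle-inequality collapse $d(x^*,y^*) < d(x^*,y^*)$ are all sound, and your emphasis on using a single index $n > N_0$ for both estimates is exactly the right point of care. However, your route differs from the paper's: the paper does not give a self-contained argument at all, but instead cites a reference and appeals to the fact that metric spaces are $T_2$ (Hausdorff) topological spaces, in which limits of sequences are automatically unique. The two approaches are closely related --- your triangle-inequality contradiction is precisely the verification, in metric terms, that the open balls $B(x^*,\varepsilon)$ and $B(y^*,\varepsilon)$ with $\varepsilon = d(x^*,y^*)/2$ are disjoint, which is the Hausdorff separation the paper invokes. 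What your version buys is a completely elementary, self-contained proof using only the four metric axioms stated in the paper; what the paper's route buys is generality and context, since uniqueness of limits then holds in any Hausdorff space, with metric spaces as a special case. One cosmetic remark: strictly speaking, $d(x^*,y^*) > 0$ follows from the contrapositive of the identity of indiscernibles together with non-negativity, which is worth phrasing precisely since those are separate axioms in the paper's definition.
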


\begin{proof}
	The proof of that can be seen in \cite{shirali2005metric}. It leverages the property that metric spaces belong to the family of $T_2$ (or Hausdorff) topological spaces 
\end{proof}

Cauchy sequences play a fundamental role in the study of completeness within metric spaces. A sequence is considered Cauchy if, for any arbitrarily small positive distance, there exists a point in the sequence beyond which all subsequent points are within that distance from each other. Completeness, on the other hand, refers to the property of a metric space wherein every Cauchy sequence converges to a limit within the space itself. Together, Cauchy sequences and completeness provide essential tools for understanding the convergence behavior and structural integrity of metric spaces.

\begin{defn}[Cauchy sequences]
	A sequence $\{x_n\}$ in a metric space $\Big(X,d\Big)$ is called a \textbf{Cauchy sequence} if for any given $\varepsilon > 0$, we can find $N_0\in \mathbb{N}$ such that whenever $\min\{m,n\}>N_0$ we have $$d(x_m,x_n)<\varepsilon.$$ 
\end{defn}

While every convergent sequence is a Cauchy sequence, the reverse of this is only true in `complete metric spaces'.

\begin{defn}[Completeness]
	 The metric space $\Big(X,d\Big)$ is said \textbf{complete} if every Cauchy sequence converges in $X$.
\end{defn}

\begin{exa}
	Using the standard metric (absolute value), $\mathbb{R}$ is complete. Using also the associated standard metric, $\mathbb{R}^n$ is complete \cite{kumaresan2005topology}.  
\end{exa}

\begin{defn}[Uniformly Cauchy sequence of functions]
	We say that a sequence $ f_n : X \rightarrow \mathbb{R}$ is uniformly Cauchy if for a given $\varepsilon>0$, there exists $N \in \mathbb{N}$ with the following
	property:
	\[|f_n(x)-f_m(x)|< \varepsilon ~~\text{for all} ~~ x\in X~~ \text{and for all}~~m,n\geq N.\]
\end{defn}

Normed spaces are a subset of metric spaces, where vectors have associated norms. They are useful because they provide a framework for measuring distances and magnitudes in vector spaces, aiding in the analysis of mathematical structures.

\begin{defn}[Norms and normed spaces]
	Let's $V$ be a linear space over a field $\mathbb{K}$. A \textbf{norm} on $V$ is a map $ ||\cdot|| : V\times V\to \mathbb{R}^{+} $ such that for any $x,y \in V$ and $\alpha \in \mathbb{K}$ :
	\begin{itemize}
		\item[(i)] [Positivity] $~~~~~~~~~~~~~||x||>0$ for every $x\neq 0$
		\item[(ii)] [Homogeneity] $~~~~~~~~||\alpha x|| = |\alpha|||x||$
		\item[(iii)] [Triangle inequality] $~~||x+y||\leq ||x||+||y||$
	\end{itemize} 
\end{defn}
The pair $\Big(V, ||\cdot ||\Big)$ is called a \textbf{normed space}.
\begin{rem}
While normed spaces focus on vector spaces and their associated norms, metric spaces provide a broader context for studying distances and relationships between points in arbitrary sets. 
	If $||\cdot||$ is a norm, we can get a metric $d$ from it by doing $d(x,y) = ||x-y||$. This is called an \textbf{induced metric}.
\end{rem}
%From this definition (Cauchy Sequence), we can directly notice that
%\begin{exa}
%	\textcolor{red}{SIMPLE EXAMPLE}
%\end{exa}
\begin{exa}\label{ExampleBX}
	Let $X$ be any nonempty set. Let us consider the linear space $B(X)$ of all \textit{bounded real valued functions} on $X$ under the \textit{sup norm} $||\cdot||_\infty$. In $B(X)$, a sequence $\{f_n\} $ is Cauchy if and only if it is \textit{uniformly Cauchy}.
\end{exa}
\begin{proof}
	Detailed proof in \cite{amath731}.
\end{proof}

Banach spaces are a special class of normed vector spaces that are also complete metric spaces. This property of completeness ensures that every Cauchy sequence in the space will converge to a limit within the space itself, making Banach spaces crucial in various areas of analysis and functional analysis.
Let now define the concept of \textbf{Banach Space} which will be important in the last chapter.

\begin{defn}
	A Banach space is a complete normed space.
\end{defn}

\begin{exa}\label{Exa:Banach}
	The normed space $\left(B(X),||\cdot||_\infty\right)$ as defined in the Example \ref{ExampleBX} is a Banach space.
\end{exa}

\begin{proof}
	The proof can be found in \cite{kumaresan2005topology}. We here just sketch it.\newline
	Let $\{f_n\}\in B(X)$ be a Cauchy sequence. By Example \ref{ExampleBX}, the sequence $\{f_n\}$ is uniformly Cauchy.
	
	Let's fix $x\in X$ and consider the sequence of scalars $\{f_n(x)\}$. Since $\mathbb{R}$ is complete, the sequence converges to a real number $f(x)\in \mathbb{R}$.\newline
	%	, to show the dependance of $\alpha=\lim_{n\to\infty}f_n(x)$ on $x$.
	First, we have to show that $f$ is bounded on $X$ and,\newline 
	Secondly we have to show that $\{f_n\}$ converges uniformly to $f$ on $X$. That will complete the proof.
\end{proof}
%\textbf{Nota :} In this proof, we are using the continuity of the distance function.
Metric spaces are not always complete. We here present a way to make complete an incomplete metric space.
\subsection{Completion of Metric Spaces \cite{kumaresan2005topology,shirali2005metric}}
Let $(X, d)$ be a non-complete metric space. It is always possible to complete the space into a larger space in such a way that every Cauchy
sequence in X has a limit in the completion. To do this, we should add new points to $(X, d)$ and extend $d$ to all these new points so that each non-convergent Cauchy sequences in $X$ find limits among these new points.
% Let $(X, d)$ be a metric space that is not complete. It is always possible to construct a larger space which is complete and contains just enough points so that every Cauchy
% sequence in X has a limit in the larger space. In fact, we need to adjoin new points to
% $(X, d)$ and extend $d$ to all these new points in such a way that the formerly nonconvergent Cauchy sequences find limits among these new points and the new points are limits of sequences in $X$ \cite{shirali2005metric}.

\begin{defn}[Completion]
	Let $(X, d)$ be a metric space. A complete metric space $(X^{*}, d^{*})$ is considered as a completion of the metric space $(X, d)$ if :
	\begin{itemize}
		\item[(i)] $X$ is a subspace of $X^{*}$
		\item[(ii)] Every point of $X^{*}$ is the limit of some sequence in $X$
	\end{itemize}
\end{defn}

\begin{thm}
	Let $(X,d)$ be a metric space. Then there exists a completion of $(X,d)$.
\end{thm}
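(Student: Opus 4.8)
The plan is to construct the completion explicitly out of Cauchy sequences, which is the classical approach. Let $\mathcal{C}$ denote the set of all Cauchy sequences in $(X,d)$, and define a relation on $\mathcal{C}$ by declaring $\{x_n\}\sim\{y_n\}$ whenever $d(x_n,y_n)\to 0$ as $n\to\infty$. First I would check that this is an equivalence relation: reflexivity and symmetry are immediate, and transitivity follows from the triangle inequality. I then set $X^*=\mathcal{C}/\!\sim$, the set of equivalence classes, and for classes $\xi=[\{x_n\}]$ and $\eta=[\{y_n\}]$ I define $d^*(\xi,\eta)=\lim_{n\to\infty}d(x_n,y_n)$.

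Before $d^*$ even makes sense I must show the defining limit exists, and the key observation is that $\{d(x_n,y_n)\}$ is a Cauchy sequence of real numbers. Indeed, from the triangle inequality one gets $|d(x_n,y_n)-d(x_m,y_m)|\leq d(x_n,x_m)+d(y_n,y_m)$, and both terms on the right tend to zero because $\{x_n\}$ and $\{y_n\}$ are Cauchy; completeness of $\mathbb{R}$ then supplies the limit. Next I would verify that $d^*$ does not depend on the chosen representatives (a similar triangle-inequality estimate) and that it satisfies the metric axioms, all of which transfer from $d$ by passing to the limit. The one point requiring attention is the identity of indiscernibles: $d^*(\xi,\eta)=0$ means precisely that $d(x_n,y_n)\to 0$, i.e. $\{x_n\}\sim\{y_n\}$, so that $\xi=\eta$.

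To satisfy the definition of a completion, I would embed $X$ into $X^*$ by sending each $x\in X$ to the class $[\{x,x,\dots\}]$ of the constant sequence; this map is an isometry, so $X$ is identified with a subspace of $X^*$, giving condition (i). For condition (ii), density, I would take any $\xi=[\{x_n\}]\in X^*$ and show that the images of the $x_n$ converge to $\xi$ in $d^*$, which follows directly from the fact that $\{x_n\}$ is Cauchy. The main obstacle, and the longest step, is proving that $X^*$ is itself complete. Here I would start with a Cauchy sequence $\{\xi^{(k)}\}$ in $X^*$, use density to choose for each $k$ a point $z_k\in X$ whose image lies within $1/k$ of $\xi^{(k)}$, argue that $\{z_k\}$ is then Cauchy in $X$ and hence defines a class $\zeta=[\{z_k\}]\in X^*$, and finally show $\xi^{(k)}\to\zeta$. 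This diagonal-style argument, in which one descends from a Cauchy sequence of equivalence classes back to a single Cauchy sequence in $X$, is where the real care is needed, since two layers of limits must be controlled simultaneously.
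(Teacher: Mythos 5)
Your proposal is correct: it is the classical construction of the completion as the quotient of the set of Cauchy sequences by the relation $d(x_n,y_n)\to 0$, and each of the steps you outline (existence of the defining limit via the quadrilateral inequality $|d(x_n,y_n)-d(x_m,y_m)|\leq d(x_n,x_m)+d(y_n,y_m)$, well-definedness, the isometric embedding by constant sequences, density, and the diagonal argument for completeness) is sound and standard. Note that the paper itself does not prove this theorem at all --- its ``proof'' is only a pointer to the reference \cite{kumaresan2005topology} --- so you have supplied an actual argument where the paper defers entirely to the literature; your construction is precisely the kind of argument that reference contains. One small point worth making explicit relative to the paper's definition of completion: that definition demands that $X$ literally be a subspace of $X^{*}$, whereas your construction produces an isometric copy of $X$ inside $X^{*}$. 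Your phrase ``$X$ is identified with a subspace of $X^{*}$'' is the standard resolution, but a fully rigorous write-up should either state this identification as a convention or replace the isometric image by $X$ itself inside $X^{*}$; this is a cosmetic repair, not a gap. It may also interest you that there is a second classical route, via the Kuratowski embedding $x\mapsto d(x,\cdot)-d(x_0,\cdot)$ of $X$ into the Banach space $\left(B(X),\|\cdot\|_\infty\right)$ of bounded functions (Example \ref{Exa:Banach} of the paper), followed by taking the closure of the image; that route is shorter because it borrows completeness from $B(X)$ instead of proving it by hand, which is exactly the step you identify as the longest in your approach.
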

\begin{proof}
	For the proof, see \cite{kumaresan2005topology}.
\end{proof}
%\begin{proof}
%	Let $(X, d)$ be any metric space. Fix a point $o\in X$. For each $x\in X$, consider the function $f_x(y) := d(y,x)-d(y,o)$. Show that $f_x\in B(X)$ and that the map $\varphi : x \mapsto f_x$ is an isometry of X into $\left(B(X),||\cdot||_\infty\right)$. If we let $Y$ to be closure of $\varphi(X)$ in $B(X)$, then $(Y,d_\infty)$ is a completion of $(X, d)$ via the isometry $\varphi(X)$ of $X$
%	into $Y$. (Here $d_\infty$ denotes the restriction of the metric on $B(X)$ to $Y$.)
%	
%	Clearly now $f_x \in B(X)$ :
%	\[|f_x(y)|\leq |d(x,y)- f(y,o)|\leq d(x,o)~~~\forall y\in X.\]
%	Hence $||f_x||_{\infty}\leq d(x,o)$. To show that the map $\varphi$ is a isometry, we need to prove that for any $x,y \in X$,
%	\[d(x,y) = d(\varphi(x),\varphi(y)) =  \underset{z\in X}{sup}|f_x(z)-f_y(x)|.\]
%	We have, for any $x\in X$,
%	\begin{align*}
%		|f_x(z)-f_y(z)| & = |d(x,z)-d(z,o) - [d(y,z)-d(z,o)]|\\
%		& = |d(x,z)-d(y,z)|\\
%		& \leq d(x,y).
%	\end{align*}
%	Also, when $z=y$, we find that $f_x(y)-f_y(y)= d(x,y)$. Thus $||f_x-f_y||=d(x,y)$. Thus $\varphi$ is isometry of $X$ into $B(X)$. If we take $Y$ to be the closure of $\varphi(X)$ in $B(x)$, then $Y$ is complete, being a closed subset of the complete metric space $B(X)$. By our very consytruction, $\varphi(X)$ id dense in $Y$. Thus we have a completion of $(X,d)$.
%\end{proof}

Let us now present the contraction mapping and the Banach Contraction Principle. Those are the core concepts that we will use throughout the work.
\section{Contraction mapping and fixed-point}
\begin{defn}
	Let $X$ be an nonempty set and $f:X\to X$ be a mapping on that set :
	\begin{itemize}
		\item[-] A point $x$ is said to be a \textbf{fixed point} of $f$ if $f(x)=x$.
		\item[-] We will write $Fix(f) = \{x\in X : f(x) = x\}$ the set of fixed points of $f$ on $X$.
	\end{itemize}
\end{defn}
\begin{pro}\label{prop : propFix}
	Let $X$ be a nonempty set and $f:X\to X$ a mapping defined on it. If $x\in X$ is a unique fixed point of $f^n$ with $f^n = \underset{n-times}{\underbrace{f\circ f\circ\cdots\circ f}}$ for any $n>1$, then it is the unique fixed point of $f$ and reversely :
	\[Fix(f^n) = \{x\} \iff Fix(f) = \{x\}\]
\end{pro}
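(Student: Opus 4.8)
The plan is to prove the stated equivalence by splitting it into its two implications, and first to record the one inclusion that holds for every $f$. The single algebraic fact I would lean on is that $f$ commutes with its iterates, $f\circ f^n = f^n\circ f = f^{n+1}$. From this one gets $Fix(f)\subseteq Fix(f^n)$ for every $n$: if $f(y)=y$, then applying $f$ repeatedly yields $f^n(y)=y$, so each fixed point of $f$ is automatically a fixed point of $f^n$.

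First I would prove $Fix(f^n)=\{x\}\implies Fix(f)=\{x\}$. To see that $x$ is fixed by $f$, note that $f^n\big(f(x)\big)=f\big(f^n(x)\big)=f(x)$, so $f(x)$ is itself a fixed point of $f^n$; since $Fix(f^n)=\{x\}$ this forces $f(x)=x$, giving $x\in Fix(f)$. Uniqueness is then immediate: any $y\in Fix(f)$ lies in $Fix(f^n)=\{x\}$ by the inclusion above, so $y=x$. Hence $Fix(f)=\{x\}$. This direction is short and is, in my view, the genuinely useful half, since in the Banach setting it is what lets one transfer a unique fixed point of a contraction $f^n$ back to $f$ itself.

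For the converse $Fix(f)=\{x\}\implies Fix(f^n)=\{x\}$, the inclusion $\{x\}\subseteq Fix(f^n)$ is again immediate. The reverse inclusion $Fix(f^n)\subseteq\{x\}$ is where I expect the main obstacle to lie: I would take $y$ with $f^n(y)=y$ and try to force the finite orbit $y,f(y),\dots,f^{n-1}(y)$ to collapse onto the single fixed point, but nothing in the bare hypotheses rules out a genuine period-$n$ orbit. Indeed $f(z)=e^{2\pi i/n}z$ on $\mathbb{C}$ has $Fix(f)=\{0\}$ yet $Fix(f^n)=\mathbb{C}$, so the converse cannot hold for an arbitrary set $X$ and arbitrary map $f$ without an extra hypothesis excluding nontrivial periodic points (for instance $f$ being a contraction on a complete space, which is the context of interest here). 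I would therefore close the argument either by adding such a hypothesis or by reading the claim within that contractive context, where the uniqueness supplied by the Banach principle removes the periodic orbits outright.
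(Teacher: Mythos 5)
Your proof of the forward implication $Fix(f^n)=\{x\}\implies Fix(f)=\{x\}$ is correct and is essentially the paper's own argument: the commutation $f\circ f^n=f^n\circ f=f^{n+1}$ shows $f(x)$ is again a fixed point of $f^n$, forcing $f(x)=x$, and the inclusion $Fix(f)\subseteq Fix(f^n)$ then gives uniqueness of the fixed point of $f$. Where you diverge from the paper is on the converse, and there you are right and the paper is not. The paper's proof in fact establishes only the forward implication: its ``uniqueness'' step ($y=f(y)=f^2(y)=\cdots=f^n(y)$, hence $y=x$) is exactly your inclusion $Fix(f)\subseteq Fix(f^n)$ deployed inside that same direction, and no argument for $Fix(f)=\{x\}\implies Fix(f^n)=\{x\}$ appears anywhere, despite the displayed $\iff$ asserting it. Your counterexample settles the matter: for $f(z)=e^{2\pi i/n}z$ on $\mathbb{C}$ one has $Fix(f)=\{0\}$ while $f^n=\mathrm{id}$, so $Fix(f^n)=\mathbb{C}$; hence the equivalence as stated is false for an arbitrary self-map of a bare set. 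The converse does hold under the hypotheses actually in force where the proposition is used (e.g.\ $f$ a contraction on a complete metric space, since then $f^n$ is itself a contraction and its unique fixed point must be the fixed point of $f$), which is precisely the restricted reading you propose. So your proposal has no gap; rather, it exposes one in the paper, whose statement should either drop the $\iff$ or add a hypothesis excluding nontrivial periodic orbits.
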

\begin{proof}
	Let $x$ be a fixed point for $f^n$ in $X$. Then we can write :
	\[f(x)=f\left(f^n(x)\right)=f^{n+1}(x) = f^n\left(f(x)\right)\implies ~~f(x) ~~\text{is a fixed point of}~~ f^n\]
	Since $x$ is the only one fixed point for $f^n$, then $f(x)=x$, i.e. $x$ is also a fixed point for $f$. Let's now show that the aforementioned fixed point is unique also for $f$ :\newline
	Let's take $y\neq x$ another fixed point of $f$, and compute
	$y=f(y)=f^2(y) = \cdots =f^n(y)$ or $f^n(y)=y\iff y=x$. Then, we've clearly seen that $x$ is the unique fixed point for $f$.
\end{proof}
Before stating the Banach Contraction Principle, let's define all the important notions related to that specific theorem.
\begin{defn}\cite{debnath2021metric, gopal2017background,naylor1982linear,pata2019fixed}
	Let $(X,d)$ be a metric space and $f:X\to X$ a mapping on $X$. $f$ is said to be \textit{Lipschitz} continuous if there exist $\alpha>0$ such that :
	\begin{equation}
		d\left(f(x),f(y)\right)\leq \alpha\cdot d(x,y)~~~ \forall x,y \in X
	\end{equation}
	\begin{itemize}
		\item[-] If $\alpha\in [0,1)$, $f$ is said to be a \textit{contraction}.
		\item[-] If $\alpha=1$ then $f$ is said \textit{non-expansive}.
		\item[-] If $d\left(f(x),f(y)\right)<d(x,y), ~ \forall x\neq y$ then $f$ is contractive.
	\end{itemize}
\end{defn}
\begin{pro}
	Let $(X,d)$ be a metric space and $f:X\to X$ a contraction mapping with $\alpha \in (0,1)$. If $f$ has a fixed point, that point is unique.
\end{pro}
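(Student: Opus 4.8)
The plan is to proceed by contradiction, or equivalently by a direct algebraic squeeze, exploiting the contraction inequality together with the metric axioms from the opening definition. First I would suppose that $f$ admits two fixed points $x, y \in X$, that is, $f(x) = x$ and $f(y) = y$, and I would aim to conclude that necessarily $x = y$.

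The key computation is to feed these two fixed points into the contraction inequality $d(f(x),f(y)) \leq \alpha\cdot d(x,y)$. Since $f(x) = x$ and $f(y) = y$, the left-hand side collapses to $d(x,y)$, yielding
\[
d(x,y) = d(f(x),f(y)) \leq \alpha\cdot d(x,y).
\]
From here I would rearrange to obtain $(1-\alpha)\,d(x,y) \leq 0$. The decisive step is to observe that $\alpha \in (0,1)$ forces $1 - \alpha > 0$, so dividing through (a legitimate operation precisely because $\alpha < 1$ rather than $\alpha = 1$) gives $d(x,y) \leq 0$. Combining this with non-negativity of the metric, condition (\textit{i}) in the definition of a metric, we get $d(x,y) = 0$, and then the identity of indiscernibles, condition (\textit{ii}), delivers $x = y$.

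This argument is short and the only genuine subtlety — which I would be careful to flag — is that the strictness $\alpha < 1$ is essential: it is exactly what makes $1 - \alpha$ strictly positive and hence what licenses the final squeeze. Were $\alpha = 1$ (the non-expansive case), the inequality $d(x,y) \leq d(x,y)$ would be vacuous and uniqueness would fail, so the so-called ``main obstacle'' here is really just the bookkeeping of invoking the right metric axioms rather than any deep difficulty. Note also that this proposition only asserts uniqueness \emph{assuming} a fixed point exists; existence is a separate matter that the forthcoming Banach Contraction Principle will supply via completeness of $X$.
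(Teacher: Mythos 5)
Your proof is correct and follows essentially the same route as the paper: both plug the two fixed points into the contraction inequality to get $d(x,y) \leq \alpha\, d(x,y)$ and conclude $d(x,y)=0$ from $\alpha<1$. The only difference is cosmetic—the paper phrases it as a direct contradiction with $d(x,y)\neq 0$, while you rearrange to $(1-\alpha)\,d(x,y)\leq 0$ and invoke the metric axioms explicitly, which is if anything slightly more careful.
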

\begin{proof}
	Suppose we have two fixed points $x$ and $y$ for $f$ with always $x\neq y$. Because $f$ is a contraction, with $\alpha\in (0,1)$ we can write :
	\[0\neq d(x,y) = d\left(f(x),f(y)\right)\leq \alpha\cdot d(x,y),\]
	which is impossible (contradictory). So, the fixed point is unique.
\end{proof}
As we have now a formal definition of fixed point, contractions and some obvious implications, let's now state the Banach Contraction Principle.
\begin{thm}[Banach Contraction Principle : BCP \cite{fixedpoint}]\label{thm:BCP}
	Let $(X,d)$ be a complete metric space, $f:X\to X$ a contraction. Then $f$ has a unique fixed point $x^*$ and for each $x \in X$, $\lim\limits_{n\to\infty}f^n(x)=x^*$. Moreover, $$d\left(f^n(x),x^*\right)\leq \dfrac{\alpha^n}{1-\alpha}d\left(x,f(x)\right).$$.
\end{thm}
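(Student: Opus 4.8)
The plan is to use the method of successive approximations (Picard iteration). First I would fix an arbitrary starting point $x \in X$ and form the orbit $x_n = f^n(x)$, that is, the sequence defined by $x_0 = x$ and $x_{n+1} = f(x_n)$. The heart of the argument is to show that this sequence is Cauchy; once that is done, completeness of $X$ supplies a limit, and the contraction property (hence continuity) of $f$ forces that limit to be the fixed point. Uniqueness I would simply inherit from the proposition on uniqueness of fixed points of contractions proved earlier in this section.

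To establish the Cauchy property, I would first prove by induction that consecutive terms shrink geometrically,
\[
d(x_{n+1}, x_n) \leq \alpha^n \, d(x_1, x_0),
\]
which follows by applying the contraction inequality $n$ times to $d(x_{n+1},x_n) = d\!\left(f(x_n), f(x_{n-1})\right)$. Then, for $m > n$, the triangle inequality telescopes the gap and the geometric series sums:
\[
d(x_m, x_n) \leq \sum_{k=n}^{m-1} d(x_{k+1}, x_k) \leq \left(\sum_{k=n}^{m-1} \alpha^k\right) d(x_1, x_0) \leq \frac{\alpha^n}{1-\alpha}\, d(x_1, x_0).
\]
Since $\alpha \in [0,1)$, the factor $\alpha^n \to 0$, so the right-hand side becomes arbitrarily small for large $n$, which is exactly the Cauchy condition.

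By completeness, $\{x_n\}$ converges to some $x^* \in X$. To see that $x^*$ is a fixed point, I would use that a contraction is Lipschitz and therefore continuous, so passing to the limit in $x_{n+1} = f(x_n)$ yields $x^* = f(x^*)$; uniqueness is already guaranteed by the earlier proposition. Finally, for the error estimate I would let $m \to \infty$ in the displayed bound above. Because the metric is continuous, this gives $d(x^*, x_n) \leq \tfrac{\alpha^n}{1-\alpha}\, d(x_1, x_0)$, and rewriting $x_0 = x$, $x_1 = f(x)$, $x_n = f^n(x)$ produces precisely the claimed inequality.

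The main obstacle I anticipate is the Cauchy estimate: the delicate point is keeping the geometric-series bound uniform in $m$, so that the constant $\tfrac{\alpha^n}{1-\alpha}$ survives the passage to the limit $m \to \infty$ and the error bound comes out with exactly that factor. The remaining steps (convergence from completeness, the continuity argument, and uniqueness) are comparatively routine, the last one having effectively been settled already.
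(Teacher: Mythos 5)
Your proposal is correct and follows essentially the same route as the paper's proof: the same Picard iterates $x_n = f^n(x)$, the same geometric estimate on consecutive terms, the same telescoping triangle-inequality bound $d(x_m,x_n) \leq \frac{\alpha^n}{1-\alpha}d(x_1,x_0)$, completeness plus continuity of $f$ to obtain the fixed point, uniqueness from the earlier proposition, and the error bound by letting $m \to \infty$. No meaningful differences to report.
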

\begin{proof}\label{proofBCP}
	Let us construct a sequence $\{x_n\}$, starting on $x_0$ and using the following recurrence :
	\[x_{n} = f(x_{n-1}) ~~ \forall n\in \mathbb{N}~~~ \text{i.e.}~~~x_n = f^n(x_0)\]
	We show that $\{x_n\}$ is a Cauchy-Sequence. In fact, since $f$ is a contraction, we should have :
	\begin{align}\label{eq : demons1}
		d(x_{m+1},x_{m}) &= d\left(f(x_{m}),f(x_{m-1})\right)\leq \alpha\cdot d(x_m,x_{m-1})\nonumber\\
		& = \alpha\cdot d\left(f(x_{m-1}),f(x_{m-2})\right)\leq \alpha^2\cdot d(x_{m-1},x_{m-2})\nonumber\\
		& \vdots _{ }\nonumber\\
		& = \alpha^{m-1}\cdot d\left(f(x_{1}),f(x_{0})\right)\leq \alpha^m\cdot d(x_{1},x_{0})
	\end{align}
	Now, for any given $m$ and $n$, positive and with $n<m$, the triangle inequality applied recursively gives :
	\begin{align}\label{eq : demons2}
		d(x_m,x_n) &\leq d(x_m,x_{m-1}) + d(x_{m-1},x_{m-2})+\cdots + d(x_{n-1},x_{n})\nonumber\\
		&\leq \left(\alpha^{m-1}+\alpha^{m-2}+\cdots +\alpha^{n}\right)\cdot d(x_1,x_0) ~~~ \text{from \ref{eq : demons1}}\nonumber\\
		&\leq \alpha^n\left(\alpha^{m-n-1}+\alpha^{m-n-2}+\cdots +1\right)\cdot d(x_1,x_0) \nonumber\\
		&\leq \frac{\alpha^n}{1-\alpha}\cdot d(x_1,x_0)
	\end{align}
	As we can now see from $\lim\limits_{n\to\infty}\alpha^n = 0$ and $d(x_1,x_0)$ is fixed, we can say $d(x_m,x_n)\to 0$ as $m,n\to \infty$. So, $\{x_n\}$ is a Cauchy sequence and since $X$ is complete, there exits only one $x^*\in X$ such that $x_n\to x^*$. So, by continuity of contractions,  we can write :
	\[x^* = \lim\limits_{n\to\infty}x_{n+1} = \lim\limits_{n\to\infty}f(x_n) = f\left(\lim_{n \to \infty}x_n\right) = f\left(\lim_{n \to \infty}f^n(x_0)\right) = f(x^*)\]
	And the proposition above (Proposition \ref{prop : propFix}), we have a way to find this unique fixed point independently on the starting point.\newline
	And to show a good estimate of how we are approaching the limit we can rewrite the Expression \ref{eq : demons2} like this :
	\[d\left(f^n(x),x^*\right)\leq \dfrac{\alpha^n}{1-\alpha}d\left(x,f(x)\right) ~~~\equiv~~~ d\left(x_n,x^{*}\right)\leq \dfrac{\alpha^n}{1-\alpha}d\left(x_0,x_1\right)\]
\end{proof}
\begin{rem}
	The beauty of the Banach Contraction Principle is that it only require completeness for the metric space and contraction property on the mapping. Also, we can get closer and closer to the fixed point, starting from any point $x\equiv x_0\in X$ by just applying recursively the mapping on that specific point.
\end{rem}
\section{Application of the Banach Contraction Principle - An example}
The Banach Fixed Point principle or Banach Contraction Principle (BCP) is very important in many fields, in mathematics as well as in its applications \cite{ansari2023fixed, mannan2021study, amath731, zeidler2012applied}. We here introduce a problem whose solution is guaranteed by the \textit{Picard-Lindelof} Theorem, the uniqueness of solutions of a first order Ordinary Differential Equation (ODE). We show that its solution can be obtained  using the BCP.\newline
Let's say we have the following initial value problem (which is here an ODE with initial conditions) :
\begin{equation}
	y' = F(x,y)~~~;~~~ y(x_0) = y_0~~~ \text{with}~~ y' = \dfrac{dy}{dx}\label{EquaDiff}
\end{equation}
This problem is in fact equivalent to the following :\newline
\begin{align}
	y(x)-y(x_0) &= \int_{x_0}^{x}F\Big(t,y(t)\Big)dt~~\text{as we have $y_0=y(x_0)$}\nonumber\\
	\Rightarrow y(x) &= y_0 + \int_{x_0}^{x}F\Big(t,y(t)\Big)dt\label{EquivalentEqDif}
\end{align}
So, the two problems \ref{EquaDiff} and \ref{EquivalentEqDif} are exactly the same. And from this last Expression \ref{EquivalentEqDif} we can define a sequence given by :
\begin{equation}
	y_{n+1}(x) = y_0 + \int_{x_0}^{x}F\Big(t,y_{n}(t)\Big)dt\label{EqDifRec} ~~~ \text{with $y_0 = y_n(x_0)~~\forall~ n$.}
\end{equation}
The problem can be seen as demonstrating the \textit{Picard-Lindelof Theorem} which states the following :

\begin{thm}[Picard-Lindelof Theorem \cite{noupelah2023ordinary}]
	Let $D \subset \mathbb{R} \times \mathbb{R}^n$ be an open set, $F: D \to \mathbb{R}^n$ a continuous function and $(x_0, \mathbf{y}_0) \in D$. Then the initial value problem
	
	\[
	\frac{d\mathbf{y}}{dx} = y' = f(x, \mathbf{y}), \quad \mathbf{y}(x_0) = \mathbf{y}_0
	\]
	
	has a unique solution $\mathbf{y}(x)$ on closed intervals $I=[x_0-\varepsilon,x_0+\varepsilon]$ with $\varepsilon>0$.
\end{thm}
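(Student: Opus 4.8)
The plan is to recast the initial value problem as a fixed-point problem for the \emph{Picard operator} and then invoke the Banach Contraction Principle (Theorem \ref{thm:BCP}). First I would fix the functional setting. Since $F$ is continuous and $D$ is open, I would choose a closed rectangle $R = [x_0 - a, x_0 + a] \times \overline{B}(\mathbf{y}_0, b) \subset D$ on which $|F| \leq M$ for some $M > 0$, and set $\varepsilon \leq \min\{a, b/M\}$. On the interval $I = [x_0 - \varepsilon, x_0 + \varepsilon]$ I would work in the space $C(I, \mathbb{R}^n)$ of continuous functions under the sup norm; by the argument of Example \ref{Exa:Banach} (uniform limits of continuous functions are continuous and $\mathbb{R}^n$ is complete) this is a complete metric space. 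Restricting to the closed ball $E = \{\mathbf{y} \in C(I, \mathbb{R}^n) : \|\mathbf{y} - \mathbf{y}_0\|_\infty \leq b\}$, which is closed in a complete space and hence itself complete, I would define
\[
(T\mathbf{y})(x) = \mathbf{y}_0 + \int_{x_0}^{x} F\big(t, \mathbf{y}(t)\big)\, dt,
\]
whose fixed points are exactly the solutions of the integral equation \ref{EquivalentEqDif}, by the equivalence already established.

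The next step is to verify $T(E) \subseteq E$, so that $T$ is a genuine self-map. For $\mathbf{y} \in E$ the bound $|F| \leq M$ gives $\|T\mathbf{y} - \mathbf{y}_0\|_\infty \leq M\varepsilon \leq b$, so $T\mathbf{y}$ again lies in $E$. To obtain the contraction property I would impose a Lipschitz condition in the second variable, $|F(t,\mathbf{u}) - F(t,\mathbf{v})| \leq L\,|\mathbf{u}-\mathbf{v}|$ on $R$. This is the hypothesis the argument genuinely consumes: mere continuity of $F$ yields only existence (Peano), not uniqueness. A direct estimate then gives $\|T\mathbf{y}_1 - T\mathbf{y}_2\|_\infty \leq L\varepsilon\,\|\mathbf{y}_1 - \mathbf{y}_2\|_\infty$.

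Rather than shrink $\varepsilon$ artificially to force $L\varepsilon < 1$, I would iterate this estimate and prove by induction the sharper bound
\[
\big|(T^n\mathbf{y}_1)(x) - (T^n\mathbf{y}_2)(x)\big| \leq \frac{\big(L\,|x - x_0|\big)^n}{n!}\,\|\mathbf{y}_1 - \mathbf{y}_2\|_\infty \leq \frac{(L\varepsilon)^n}{n!}\,\|\mathbf{y}_1 - \mathbf{y}_2\|_\infty,
\]
where the inductive step rests on computing $\int_{x_0}^{x} \frac{(L\,|t-x_0|)^n}{n!}\, dt = \frac{(L\,|x-x_0|)^{n+1}}{L\,(n+1)!}$. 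Since $(L\varepsilon)^n/n! \to 0$, for $n$ large enough this factor is strictly less than $1$, so $T^n$ is a contraction on the complete space $E$. Applying Theorem \ref{thm:BCP} to $T^n$ yields a unique fixed point, and Proposition \ref{prop : propFix} upgrades this to the conclusion that $T$ itself has a unique fixed point $\mathbf{y}^* \in E$, which is precisely the unique solution of \ref{EquivalentEqDif}, and hence of \ref{EquaDiff}, on $I$.

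I expect the induction establishing the factorial estimate to be the main technical obstacle, since it requires carrying the iterated integral through each step and handling the sign of $x - x_0$ on both halves of $I$. The two subtler points are the choice of $\varepsilon$ reconciling the self-mapping condition $M\varepsilon \leq b$ with the domain constraint, and the explicit recognition that it is the Lipschitz hypothesis, not bare continuity, that powers the contraction bound and therefore the uniqueness half of the theorem.
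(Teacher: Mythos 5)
Your proposal is correct, and it proves the theorem by a genuinely different route than the paper. The paper (via Proposition \ref{propPicard}) makes the Picard operator $T$ itself a contraction: it adds the hypothesis $hL \leq 1$ (read as $k = hL < 1$ in its proof), i.e.\ it shrinks the interval until the direct estimate $d(Tg,Tg_1)\leq Lh\, d(g,g_1)$ yields a contraction constant below $1$, and then applies Theorem \ref{thm:BCP} to $T$ on the continuous functions over $[x_0-h,x_0+h]$. You instead keep the interval as large as the self-mapping condition allows, $\varepsilon \leq \min\{a, b/M\}$, prove the factorial estimate $\|T^n\mathbf{y}_1 - T^n\mathbf{y}_2\|_\infty \leq \frac{(L\varepsilon)^n}{n!}\|\mathbf{y}_1-\mathbf{y}_2\|_\infty$ by induction, conclude that some iterate $T^n$ is a contraction, and transfer the unique fixed point back to $T$ via Proposition \ref{prop : propFix}. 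What your version buys is an existence-uniqueness interval independent of the Lipschitz constant $L$ (no artificial shrinking when $L$ is large), and it is the only one of the two arguments that actually uses Proposition \ref{prop : propFix}, which the paper states but never invokes; what the paper's version buys is brevity, since a single one-line estimate makes $T$ a contraction and Theorem \ref{thm:BCP} applies immediately. Both treatments share the same essential observation, which you make explicit: the Lipschitz condition in the second variable, absent from the bare statement of the theorem (which assumes only continuity, enough for Peano existence but not uniqueness), is the hypothesis that the contraction estimate genuinely consumes -- the paper repairs this the same way, by writing the Lipschitz condition into the assumptions of Proposition \ref{propPicard}.
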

To easily show the usefulness of the BCP for the problem \ref{EquaDiff}, we can enunciate the \textit{Picard-Lindelof Theorem} in especially the context of that problem as follows and then prove the existence and uniqueness of the solution :
\begin{pro} \cite{zeidler2012applied}\label{propPicard}
        ~~~Assume that :
	\begin{enumerate}
		\item With $S$ a rectangle domain defined as follows
		\[ S = \{(x,w) \in \mathbb{R}^2 : |x - x_0| \leq a ~~\text{and}~~ |w-y_0|\leq b\}  ~~ \text{with $a,b\in \mathbb{R}$ and $w = y(x)$}\]
		The function $F:S\to \mathbb{R}$, given in the ODE \ref{EquaDiff} above, is \textbf{continuous}, and there exists $c\in \mathbb{R}^+ $ such that $|F(x,y)|\leq c.$
		\item $F(x,y)$ \textbf{satisfies the Lipschitz condition} with respect to $y$ on $S$, i.e. there exists $L\geq 0$ such that :
		\[|F(x,y_1(x))-F(x,y_2(x))|\leq L\cdot|y_1(x)-y_2(x)|,~~~\text{for all}~~ (x,y_1),(x,y_2)\in S \]
		\item The real number $h$ satisfies : $0<h\leq a,~~ hc\leq b,~~ hL\leq 1$
	\end{enumerate}
	The following holds :
	\begin{itemize}
		\item[(\textit{i})] The sequence $\{y_n\}$ constructed in Relation \ref{EqDifRec} converges to a certain function $y_{*}$.
		\item[(\textit{ii})] The initial value problem stated in \ref{EquaDiff} and equivalently in its integrale form given by the Expression \ref{EquivalentEqDif} has a unique solution which is exactly $y_{*}$.
		\item[(\textit{iii})] For $n\in \mathbb{N}$ and $k := hL$ we have the following estimates :
		\begin{itemize}
			\item[$\bullet$] $||y_n-y_{*}||_{\infty} \leq \dfrac{k^n}{1-k}\cdot||y_1-y_{0}||_{\infty}$ see the last Expression \ref{eq : demons2}.
			\item[$\bullet$] $||y_{n+1}-y_{*}||_{\infty} \leq \dfrac{k}{1-k}\cdot||y_{n+1}-y_{n}||_{\infty}$
		\end{itemize}
		% With , and as seen in the last assumption, $hL\leq 1\implies k\leq 1$.
	\end{itemize}
\end{pro}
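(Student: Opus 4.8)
The plan is to recast the integral formulation \ref{EquivalentEqDif} as a fixed-point equation for a suitable operator on a complete metric space, and then apply the Banach Contraction Principle (Theorem \ref{thm:BCP}). First I would fix the interval $I = [x_0 - h,\, x_0 + h]$ and work inside the set
\[ M = \{\, y \in C(I) : \|y - y_0\|_\infty \leq b \,\}, \]
where $y_0$ denotes the constant function and $C(I)$ carries the sup norm. Since $C(I)$ is a closed subspace of the Banach space $(B(I), \|\cdot\|_\infty)$ of Example \ref{Exa:Banach} (a uniform limit of continuous functions is continuous), and $M$ is a closed ball in $C(I)$, the pair $(M, \|\cdot\|_\infty)$ is itself complete — exactly the setting that Theorem \ref{thm:BCP} requires.

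Next I would introduce the Picard operator
\[ (Ty)(x) = y_0 + \int_{x_0}^{x} F\big(t, y(t)\big)\, dt, \]
whose fixed points are, by the equivalence established between \ref{EquaDiff} and \ref{EquivalentEqDif}, precisely the solutions of the initial value problem. The argument then splits into two verifications. For the self-map property $T(M) \subseteq M$, I would combine the uniform bound $|F| \leq c$ from hypothesis (1) with $hc \leq b$ from hypothesis (3) to get
\[ |(Ty)(x) - y_0| \leq \int_{x_0}^{x} |F(t, y(t))|\, dt \leq c\,|x - x_0| \leq ch \leq b, \]
noting that $Ty$ is continuous (indeed $C^1$) as the integral of a continuous integrand, so $Ty \in M$. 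For the contraction property, I would apply the Lipschitz condition (2) to $y_1, y_2 \in M$ and take the supremum over $x \in I$:
\[ |(Ty_1)(x) - (Ty_2)(x)| \leq \int_{x_0}^{x} L\,|y_1(t) - y_2(t)|\, dt \leq Lh\, \|y_1 - y_2\|_\infty, \]
which yields $\|Ty_1 - Ty_2\|_\infty \leq k\, \|y_1 - y_2\|_\infty$ with $k = hL$.

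With both facts in hand, Theorem \ref{thm:BCP} applies at once: $T$ has a unique fixed point $y_* \in M$, and the Picard iterates \ref{EqDifRec} satisfy $y_{n+1} = T y_n$, hence converge to $y_*$ in the sup norm, proving (i); since this fixed point is unique and corresponds to a solution of the problem, it is the unique solution, proving (ii). Claim (iii) would follow by transcribing the a priori estimate of Theorem \ref{thm:BCP} into this norm, giving $\|y_n - y_*\|_\infty \leq \frac{k^n}{1-k}\|y_1 - y_0\|_\infty$, and by deriving the a posteriori bound from the same geometric-series argument used in \ref{eq : demons2}, which gives $\|y_{n+1} - y_*\|_\infty \leq \frac{k}{1-k}\|y_{n+1} - y_n\|_\infty$.

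The main obstacle I anticipate is not any single computation but the interface between $k = hL$ and the strict-contraction hypothesis of BCP: hypothesis (3) only furnishes $hL \leq 1$, whereas Theorem \ref{thm:BCP} and the estimates in (iii), which divide by $1 - k$, genuinely require $k < 1$. I would therefore either read hypothesis (3) as the strict inequality $hL < 1$, or, to cover the borderline case, replace $T$ by a high iterate $T^m$ and use a bound of the form $\|T^m y_1 - T^m y_2\|_\infty \leq \frac{(Lh)^m}{m!}\|y_1 - y_2\|_\infty$ to make $T^m$ a genuine contraction for $m$ large, concluding via Proposition \ref{prop : propFix} that $T$ and $T^m$ share the same unique fixed point. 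The only remaining care is bookkeeping the sign of $x - x_0$ so that the integral bounds hold uniformly on both halves of $I$.
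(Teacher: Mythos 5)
Your proposal is correct and follows essentially the same route as the paper's own proof: the Picard operator on continuous functions over $[x_0-h,x_0+h]$ under the sup norm, shown to map into the ball of radius $b$ via $|F|\leq c$ and $hc\leq b$, and shown to be a contraction with constant $k=hL$ via the Lipschitz hypothesis, after which the Banach Contraction Principle (Theorem \ref{thm:BCP}) delivers (\textit{i})--(\textit{iii}). If anything, you are more careful than the paper at two points: you restrict the operator to the closed ball $\|y-y_0\|_\infty\leq b$, which is needed for $F(t,y(t))$ to be defined at all since $F$ lives only on $S$ (the paper takes $X$ to be all of $\mathcal{C}([x_0-h,x_0+h])$ and only checks afterwards that images stay in the ball), and you correctly flag that hypothesis (3) supplies only $hL\leq 1$, whereas the contraction argument and the $\frac{1}{1-k}$ estimates genuinely require $k<1$ --- a gap the paper glosses over by asserting $k<1$ ``from the third assumption,'' and which your fallback via a high iterate $T^m$, the bound $\frac{(Lh)^m}{m!}$, and Proposition \ref{prop : propFix} repairs cleanly.
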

We will not prove this proposition explicitly. We will follow a process, inspired with the assumptions made in the Proposition \ref{propPicard} and that will help us to demonstrate immediately $(\textit{i})$ and $(\textit{ii})$. The point $(\textit{iii})$ about the estimates will follow immediately, due to the Expression \ref{eq : demons2}.
\begin{proof}
	Let $R$ be the rectangular domain :
	\[ R = \{(x,w) \in \mathbb{R}^2 : |x - x_0| \leq h ~~\text{and}~~ |w-y_0|\leq h.c\}  ~~ \text{with $h,c\in \mathbb{R}$ and $w = y(x)$}\]
	Following the assumptions made for this Proposition, we know that $0<h\leq a,~~ hc\leq b$, i.e. $R\subset S$.\newline
	Let $X$ be the set of all real-valued continuous functions $y = y(x)$ on $[x_0-h,x_0+h]$. Then $X$ is a closed subset of the normed space $\mathcal{C}\left([x_0-h,x_0+h]\right)$ with the sup norm $||\cdot||_\infty$.\newline
	Let's now define an operator (a functional) like this (inspired by the second formulation of our problem, given in \ref{EquivalentEqDif}) :
	\begin{align}
		T : X\to & X \nonumber\\
		g(x) \mapsto & h(x) = Tg = y_0 + \int_{x_0}^{x}F\Big(t,g(t)\Big)dt\label{Operator}
	\end{align}
Since $$d\Big(h(x),y_0\Big)=\underset{x}{\sup}{\left| \int_{x_0}^{x}F\Big(t,g(t)\Big)dt \right|} = \left|\left| \int_{x_0}^{x}F\Big(t,g(t)\Big)dt \right|\right|_{\infty} \leq c(x-x_0)\leq c\cdot h \leq b,$$
we can say that the operator $T$ in \ref{Operator} is well defined.\newline
Now, we have to prove that this operator is a contraction mapping. For that, let's take $g,g_1\in X$. We'll have :
\begin{align}
	d\left(Tg,Tg_1\right) = d(h,h_1) = &~~ \left|\left|\int_{x_0}^{x}\Big(F\left(t,g(t)\right)-F\left(t,g_1(t)\right)\Big)dt\right|\right|_{\infty}\nonumber\\
	\leq & ~~ \int_{x_0}^{x}\left|\left|\Big(F\left(t,g(t)\right)-F\left(t,g_1(t)\right)\Big)\right|\right|_{\infty}dt\nonumber\\
	\leq &~~ L\cdot\int_{x_0}^{x}\left|g(t)-g_1(t)\right|dt\text{~~~using the second assumption.}\nonumber\\
	\leq &~~ Lh\cdot d\left(g,g_1\right) = k\cdot d\left(g,g_1\right) \label{Contraction}.
\end{align}
In \ref{Contraction}, from the third assumption, we know that $k<1$, then the operator $T$ is a contraction.\newline
Hence now $T$ is a contraction mapping and X is a closed subset of a metric space, $X$ is also a complete metric space and by the Banach Fixed Point principle, $T$ has one unique fixed point $g$ which is the exactly the unique solution to the given ODE.
\end{proof}
\begin{exa}\label{ExampleEQUADIFF}
    Consider the following first order initial value problem 
    \[x'(t) = \frac{1}{2}x(t)-t, ~~~~ x(0) = 0.\]
\end{exa}
From the Picard-Lindelof Theorem above, we are sure that we'll get a solution, and to converge to it, we started with a random function and used an iterative approach, as suggested by the Banach contraction principle that we used to prove the theorem (through the Proposition \ref{propPicard}). We obtained the following results :
% \begin{figure}[htbp!]
%     \centering
%     \includegraphics[width=0.9\textwidth]{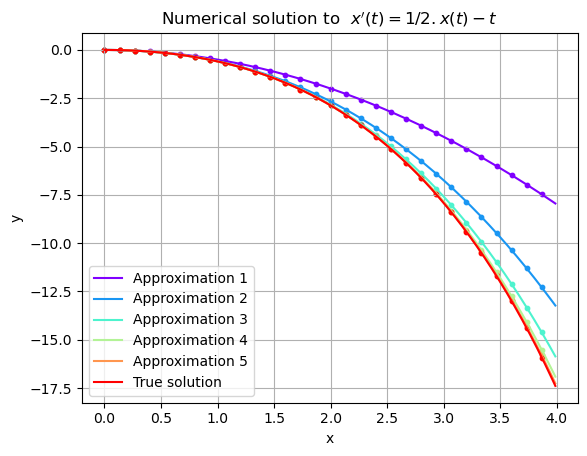}
%     \captionof{figure}{Numerical solution to $~x'(t)=1/2.x(t)-t~$ applying insights from the Banach Fixed Point Theorem}\label{fig:EDOwithBCP} 
% \end{figure}
\begin{center}
    \includegraphics[width=0.9\textwidth]{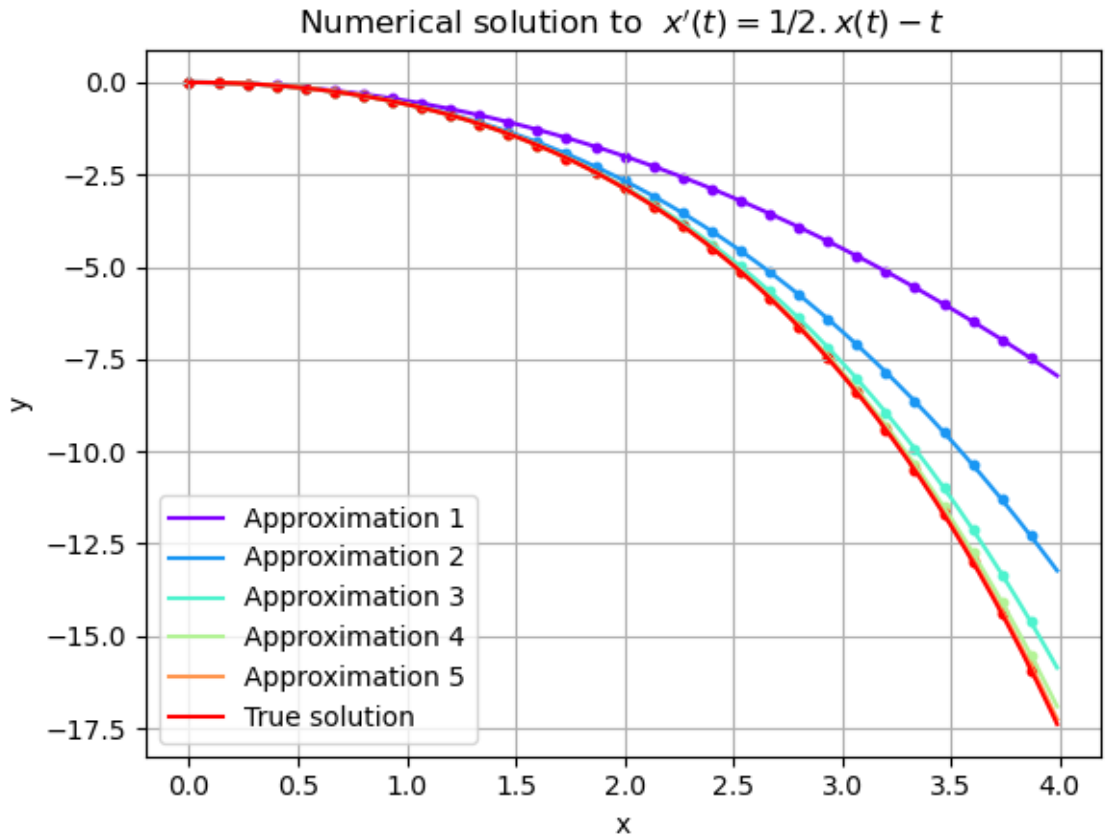}
    \captionof{figure}[Numerical solution to $~x'(t)=1/2.x(t)-t~$ using the Banach Contraction Principle]{Numerical solution to $~x'(t)=1/2.x(t)-t~$ applying insights from the Banach Fixed Point Theorem}\label{fig:EDOwithBCP} 
\end{center}
In the Figure \ref{fig:EDOwithBCP} we can see that the algorithm converges quickly to the true function (just after 5 iterations on $[0,4]$. To make it work, we implemented the operator \ref{Operator} and used the same kind of iterative process as suggested by the Banach contraction principle. The full implementation we did, as well as the use of other methods for validation and stability, can be seen in our Github \cite{MyGithub}. The Example \ref{ExampleEQUADIFF} has also been treated in \cite{haciouglu2021iterative} and the interested reader can compare our results with those found in that article.

\section*{Chapter Conclusion}
% In this chapter, we have discussed about contraction mappings in a metric space and the Banach Contraction Principle (or the Banach Fixed Point Theorem). We have also shown how the theorem can be used to prove the existence and uniqueness of solutions for first order Ordinary Differential Equations with initial conditions. In the next chapter now, we will present an overview of Reinforcement Learning and this will help us to establish the connection with the theories introduced here.
In this chapter we have discussed contraction mappings in a metric space and the Banach contraction principle (or Banach fixed point theorem). We have also shown how the theorem can be used to prove the existence and uniqueness of solutions to first-order ordinary differential equations with initial conditions, and even to find the solution. In the next chapter we will give an overview of Reinforcement Learning, which will help us to relate it to the theories introduced in this chapter.
\chapter{Overview on Reinforcement Learning}\label{chap2}
In this chapter, we will first formally present the Markov Decision Process framework and how it relates to Reinforcement Learning. Second, we will talk about optimality in Reinforcement Learning, and finally we will discuss some methods that are at the core of how optimality is realised in Reinforcement Learning.
\section{Markov Decision Processes in the Reinforcement Learning setting}
We here present a formal definition of the Markov Decision Process. But before that, we discuss about Markov Chain because it allows us to present the fact that, there are situations in which the previous observation contains all the information about the past, in such a way that we don't need to look at the whole history.
\begin{defn}[Markov chain \cite{lazaric2013markov}]
	Let $\mathcal{S}$ (a subset of $\mathbb{R}^n$ for example) represent the state space. The discrete-time dynamic system $\{S_t\}_{t\in \mathbb{N}}\in \mathcal{S}$ is a Markov chain if :
	\begin{equation}
		\mathbb{P}(S_{t+1}=s|S_t,S_{t-1},\cdots,S_0) = \mathbb{P}(S_{t+1}=s|S_t),
	\end{equation}
i.e. everything we need to predict (in probability) the future is contained in the current state (this is what we call Markov property). Given an initial state $S_0\in \mathcal{S}$, a \textbf{Markov chain} is defined by the following transition probability :
\begin{equation}
	p(s'|s) = Pr(S_{t+1}=s'|S_t=s).
\end{equation}
\end{defn}
We can now clearly define the Markov Decision Process and related concepts, important for Reinforcement Learning.
\begin{defn}[Markov Decision Process \cite{lazaric2013markov, sigaud2013markov}]
	Formally, an Markov Decision Process (MDP) is a 5-uplet $\mathcal{M} = \langle\mathcal{S},\mathcal{A}, p, r,\gamma\rangle$ where :
	\begin{itemize}
		\item $\mathcal{S}$ represents the \textbf{state space}. It can be finite, countable infinite or continuous.
		\item $\mathcal{A}$ represents the \textbf{action space} which could also be finite, countable infinite or continuous.
		\item $p(s,a,s')$ is the \textbf{transition probability} (called  environment dynamics) defined by :
		\begin{equation}
			p(s,a,s') \equiv p(s'|s,a) = Pr(S_{t+1}=s'|S_t=s,A_t=a),
		\end{equation}
		with $s,s'\in \mathcal{S}$ and $a\in\mathcal{A}$.
		This represents the probability of observing a next state $s'$ when the action $a$ is taken in the state $s$.
		\item $r(s,a,s')$ is the \textbf{reward function} called also a \textbf{reinforcement} obtained when taking the action $a$, in the state $s$ and the next state observed is $s'$. So :
		\[r : \mathcal{S}\times\mathcal{A}\times\mathcal{S}\to \mathbb{R}\]
		\item The transition probability $p$ and the reward function $r$ defines the \textbf{environment model}.
		\item $\gamma \in [0,1)$ is the \textbf{discount factor} used to define (let's say for discrete time here for illustration) the expected cumulative reward :
		\[G = \sum_{t=0}^{\infty}\gamma^t\cdot r(S_t,A_t,S_{t+1}).\]
		\end{itemize}
\end{defn}
If we have a problem which can be modeled as an MDP, the goal is generally to maximize the quantity $G$ by choosing at each time and for each state the action $A_t$. That choice is defined by a function $\pi:\mathcal{S}\to \Delta\mathcal{A}$ called \textbf{policy}. Where $\Delta\mathcal{A}$ represents a probability distribution under actions.
\begin{defn}[Policy or decision rule]
	Policy is a mapping of states to actions, which can be :
	\begin{itemize}
		\item Deterministic : in the same conditions, the same action is always chosen.
		\item Stochastic : at each time, actions are only chosen with a certain probability. In this case the policy is a mapping from the state space to a probability distribution over actions.
	\end{itemize}
	The policy can also be varying as the time evolve (Non-stationary policy) or constant (Stationary policy).
\end{defn}
We have given the foundational definitions, let us now reformulate everything in the Reinforcement Learning setting : 
\begin{enumerate}
	\item \textbf{The agent-environment interface}\newline
	Markov Decision Processes (MDPs) are indeed a simple way to frame the problem of learning from interaction to achieve a goal. We are here define some important terms in the field of Reinforcement Learning :
	\begin{itemize}
		\item The element that has to learn and make decisions is called \textbf{agent}.
		\item Everything with which the agent interacts, including any entity outside the agent, is called the \textbf{environment}.
        \item The agent and the environment are in constant interaction, the former selecting actions, the latter responding to those actions and presenting new situations to the agent. These new situations are called \textbf{states} of the environment.
		\item The agent also receives \textbf{rewards} (special numerical values) from the environment and tries to maximise these over time through its choice of actions.
		\item We call \textbf{action} the way the agent chooses to interact with its environment. Or simply, the interaction of the agent with its environment.
	\end{itemize}
	Let us now call $S_t$, $A_t$ and $R_t$ respectively the state, the action and the reward at a specific time-step $t$.The MDP and the agent will now produce the following kind of sequences :
	\[S_0,A_0,R_1~~,~~S_1,A_1,R_2~~,~~S_2,A_2,R_3~~,...\]
	The set of rewards, actions and states will be respectively written as $\mathcal{R},\mathcal{A}$ and $ \mathcal{S}$. We'll also call $\mathcal{A}(s)$ the actions which are feasible in a specific state $S_{t}=s$.\newline

	The Figure \ref{fig:agentSuttonBarto} shows schematically all the above concepts and how they are related.
	\begin{center}
		\includegraphics[scale=0.5]{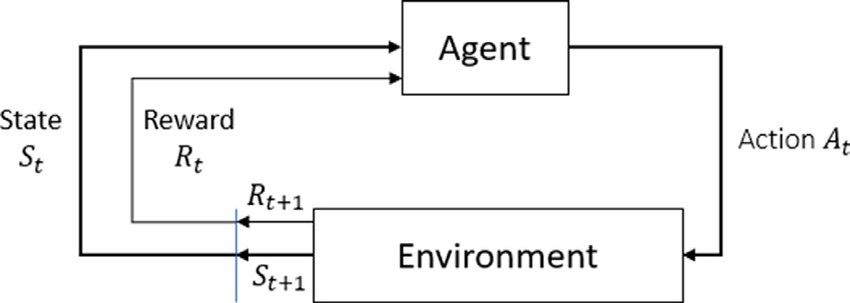}
		\captionof{figure}[Agent-environment interaction in an MDP]{Agent-environment interaction in an MDP \cite{sutton2018reinforcement}}\label{fig:agentSuttonBarto}
	\end{center}
	\item \textbf{Model of the environment} \label{section : model_of_env}\newline
	Let us now suppose that we are able to have access to all the dynamic of our environment.  Then, we can generally define a function which gives the probability of transiting in a certain state $S_{t}$ and receive a certain reward $R_{t}$ if we were previously in the state $S_{t-1}$ and made the action $A_{t-1}$. That kind of function, defined for the whole action and state spaces is called the \textbf{model of the environment}. It can be generally defined as follows :
	\begin{equation}\label{equation : dynamic-environment}
		p(s',r|s,a) \equiv Pr\Big(\{S_t = s', R_t = r | S_{t-1}=s, A_{t-1}=a \}\Big)~\forall s,s' \in \mathcal{S} ~, r\in \mathcal{R}; ~ a\in \mathcal{A}(s)
	\end{equation}
	Most of the time, it is assumed that, the next values depend only on the immediate previous values (\textbf{Markov property}) \cite{lazaric2013markov}. The Markov Property can be formalized as follows :
	\[p(s',r|s,a)  = Pr\Big(\{S_t, R_t | S_{t-1},A_{t-1}, S_{t-2},A_{t-2},...  \}\Big)= Pr\Big(\{S_t = s', R_t = r | S_{t-1}=s, A_{t-1}=a \}\Big)\]
	This function defines the dynamic of a Markov decision process.
	\item \textbf{Other important definitions from the model of the environment}\newline
	MDPs are powerful tools for modeling and solving problems where decision making involves uncertainty and sequential actions. We are here giving other formal definitions which shows how we can derive well-known concepts from the general Definition given by the Expression \ref{equation : dynamic-environment} \cite{sutton2018reinforcement}.
	\begin{enumerate}
		\item Second form of the environment model\newline
		This definition is the commonly used but is not as general as the one we gave in Relation \ref{equation : dynamic-environment}. The reward is not explicitly considered :
		\begin{equation}\label{equation : dynamic-environment-noReward}
			p(s'|s,a) \equiv Pr\left\{S_t = s' | S_{t-1}=s, A_{t-1}=a \right\} = \sum_{r\in\mathcal{R}}p(s',r|s,a)
		\end{equation}
		\item Definition of the reward\newline
		Here we define the reward without considering the next state of the environment:
		\begin{equation}\label{equation : reward-definition}
			r(s,a) \equiv \mathbb{E}\left[R_t|S_{t-1}=s,A_{t-1}=a\right] = \sum_{r\in\mathcal{R}}\left(r\cdot\sum_{s'\in\mathcal{S}}p(s',r|s,a)\right)
		\end{equation}
		\item Reward depending on next state\newline
		Here we consider the next state in our definition of reward. That captures the most similar to our usual view of the reward function in real life:
		\begin{equation}\label{equation : reward-definition-nextState}
			r(s,a,s') \equiv \mathbb{E}\left[R_t|S_{t-1}=s,A_{t-1}=a, S_{t}=s'\right] = \sum_{r\in\mathcal{R}}\left(r\cdot\dfrac{p(s',r|s,a)}{p(s'|s,a)}\right)
		\end{equation}
	\end{enumerate}
	\item \textbf{Formalization of the concepts of goals, rewards and returns}\newline
	In Reinforcement Learning (RL), the purpose or goal of the agent is formalized in terms of a special signal called a ``reward" as explained previously, which is sent to the agent from the environment. In other words, given the reward at any given time as just a real number, the ``goal" of the agent is to maximize the total amount of rewards it receives. This is not a maximization of the immediate reward, but of the cumulative reward in the long run. With this in mind, if we call $R_t$ the reward we get at time $t$, we can generally define a function that we'll call \textbf{return} at that time like this :
	\begin{equation}\label{equation : return-def}
		G_t = R_{t+1}+\gamma\cdot R_{t+2}+\gamma^2\cdot R_{t+3}+\cdots = \sum_{k=0}^{\infty}\gamma^{k}\cdot R_{t+k+1} ~~~ \text{with}~~ \gamma\in[0,1)
	\end{equation}
	In this Expression \ref{equation : return-def}, the \textbf{discount factor} $\gamma$ can also be taken equal to one but this will lead to the divergence of the return function in infinite cases. This definition is mathematically consistent, and it is also consistent in real-life. In fact :
	\begin{itemize}
		\item \underline{Mathematically} : Taking $\gamma$ in the interval $[0,1)$ ensures the convergence of the series if only the reward function is bounded.
		\item  \underline{In real life} : This definition of the return function is consistent with the concept that immediate rewards are more appreciated than rewards deferred into the future.
	\end{itemize}
	
	\item \textbf{Policies and Value Functions}\newline
	Almost all RL algorithms involve estimating \textbf{value functions}, which are functions of states (or of state-action pairs) that estimate how good it is for an agent to be in a particular state (or to perform a particular action in a particular state).
	
	The reward that an agent can expect to receive in the future depends on which action will be taken. Accordingly, value functions are defined with respect to particular ways of acting, called \textbf{policies}. Formally, a policy is a mapping from states to action space. In general, it quantifies the probability of selecting a certain action in a specific state. So, the policy can be defined as {follows:}
	\begin{equation}\label{equation : policy-definition}
		\pi\left(a|s\right) \equiv \pi\left(A_t=a|S_t=s\right) = Pr\left\{a\in\mathcal{A}(s)|s\right\},
	\end{equation}
	% With this definition of policy, we can redefine other concepts. For example, the definition we gave for the reward in \ref{equation : reward-definition} become (after shifting one step in time for convenience) :
	% \begin{equation}\label{equation : reward-definition}
	%     r(s,a) \equiv \mathbb{E}\left[R_{t+1}|S_{t}=s,A_{t}=a\right] = \sum_{r\in\mathcal{R}}\left(r\cdot\sum_{s'\in\mathcal{S}}p(s',r|s,a)\cdot \pi(a|s)\right)
	% \end{equation}
	where $\mathcal{A}(s)$ represents the set of possible actions when we are in the state $s$.\newline
	The value function now, for a state $s$, under a policy $\pi$, denoted $v_{\pi}(s)$, is the expected payoff of starting in that state and then following that policy. For MDPs we can formally say that the value function is given by :
	\begin{equation}\label{equation : state-value-function}
		v_{\pi}(s) \equiv \mathbb{E}_{\pi}\left[G_t|S_t=s\right] = \mathbb{E}_{\pi}\left[\sum_{k=0}^{\infty}\gamma^{k}\cdot R_{t+k+1}|S_t=s\right]
	\end{equation}
	We can similarly define the value function of taking an action $a$ while following the same policy, starting in the state $s$, which is called the \textit{Q-value} :
	\begin{equation}\label{equation : action-value-function}
		q_{\pi}(s,a) \equiv \mathbb{E}_{\pi}\left[G_t|S_t=s, A_t = a\right] = \mathbb{E}_{\pi}\left[\sum_{k=0}^{\infty}\gamma^{k}\cdot R_{t+k+1}|S_t=s, A_t = a\right]
	\end{equation}
	The equations \ref{equation : state-value-function} and \ref{equation : action-value-function} correspond respectively to the \textbf{state-value function} and the \textbf{action-value function}. Also, the state value function can be seen as the average of all the values that can be achieved by acting from a given state, i.e., we can see the state value function as a weighted average of the action-value functions that we can obtain starting from that state. So, formally :
	\begin{equation}\label{equation : relation-action-and-state-value-function}
		v_{\pi}(s) \equiv \sum_{a\in\mathcal{A}(s)}q_{\pi}(s,a)\cdot\pi(a|s)
	\end{equation}
	Now the following holds between the value of $s$ and the value of its possible successor states for any policy $\pi$ and any state $s$ :
	\begin{align}
		v_{\pi}(s) &= \mathbb{E}_{\pi}\left[G_t|S_t=s\right]\nonumber \\
		&= \mathbb{E}_{\pi}\left[R_{t+1}+\gamma\cdot G_{t+1}|S_t=s\right] \nonumber \\
		&= \sum_{a}\pi(a|s)\left(\sum_{s'}\sum_{r}p(s',r|s,a)\Big[r+\gamma\cdot\mathbb{E}_\pi\left[G_{t+1}|S_{t+1}=s'\right]\Big]\right) \label{in-equation : explanation-in-value-func} \\
		\Rightarrow v_{\pi}(s) &= \sum_{a}\pi(a|s)\left(\sum_{s',r}p(s',r|s,a)\Big[r+\gamma\cdot v_\pi(s')\Big]\right) \\
		& ~\text{using then the equation \ref{equation : reward-definition}}\nonumber\\
		\Rightarrow v_{\pi}(s) &= \sum_{a}\pi(a|s)\cdot\left(r(s,a)+ \gamma\cdot\sum_{s'}p(s'|s,a) v_\pi(s')\right)
		 \label{equation : Bellman-for-state-consistency}
	\end{align}
	The Relation \ref{in-equation : explanation-in-value-func} is explained by the definition of the reward function given by the equation \ref{equation : reward-definition}, and also by the fact that we normally have to do a weighted summation under the action set $\mathcal{A}$ to account for all actions.
	The equation \ref{equation : Bellman-for-state-consistency} is then called \textbf{Bellman equation} for the value function $v_\pi$.\newline
	We can define in the same way the Bellman equation corresponding to the action-value function as follows \cite{TheArtOfRL} :
	\begin{equation}\label{eq:QvalueGen}
		q_\pi(s,a) = r(s,a) + \gamma\cdot\sum_{s'}p(s'|s,a)\cdot\left(\sum_{a'}\pi(a'|s')\cdot q_\pi(s',a')\right)
	\end{equation}
	
\end{enumerate}
From now on we can notice that the MDP framework is a considerable abstraction of the problem of goal-directed learning from interaction.

\section{Optimal Policies and Optimal Value Functions}\label{sec:OptimalPol}
Here we present the optimal value functions and discuss how they can be used to find the optimal policy \cite{sutton2018reinforcement}. In fact, the value function defines a partial ordering over policies. More precisely, given two policies $\pi$ and $\pi'$, we can say,
\begin{equation}
	\pi\geq \pi' \iff v_{\pi}(s)\geq v_{\pi'}(s) ~~~~~ \forall s\in \mathcal{S},\label{PartialOrdering}
\end{equation}
but we are not always able to compare two value functions. So, the policy $\pi$ is defined to be better than or equal to $\pi '$ if its expected return is greater than or equal to that of $\pi '$ for all states.\newline
And now, if $\pi^{*}$ is the optimal policy we have :
\begin{align}
	\left\{
	\begin{array}{lcl}
		v_{*}(s) & \equiv & \underset{\pi}{\max}~ v_{\pi}(s) \\
		q_{*}(s,a)  & \equiv & \underset{\pi}{\max}~ q_{\pi}(s,a) = \mathbb{E}\left[R_{t+1}+\gamma\cdot v_{*}(S_{t+1}) | S_t=s, A_t=a\right]
	\end{array}
	\right.
\end{align}
\subsection{Bellman optimality equation}
Since we take the optimal action at each time (following the optimal policy $\pi^*$), we can write $v_{*}(s)=\underset{a\in\mathcal{A}(s)}{\max}~ q_{\pi^*}(s,a)$ , and this will be the best achievable expected reward at each time. So :
\begin{align}
	v_{*}(s) &=\underset{a\in\mathcal{A}(s)}{\max}~ q_{\pi^*}(s,a)\nonumber \\
	&= \underset{a}{\max}~\mathbb{E}_{\pi^*}\left[G_{t} | S_t=s, A_t=a\right]\nonumber\\
	&= \underset{a}{\max}~\mathbb{E}_{\pi^*}\left[R_{t+1}+\gamma\cdot G_{t+1} | S_t=s, A_t=a\right]\nonumber\\
	&= \underset{a}{\max}~\mathbb{E}_{\pi^*}\left[R_{t+1}+\gamma\cdot v_{*}(S_{t+1}) | S_t=s, A_t=a\right]\nonumber\\
	&= \underset{a}{\max}~\sum_{s',r}p(s',r|s,a)\cdot\left[r+\gamma\cdot v_{*}(s')\right]\nonumber\\
	&= \underset{a}{\max}\left[r(s,a)+\gamma\cdot\sum_{s'}p(s'|s,a) v_{*}(s')\right] \label{equation : Bell-Optim-state-val-func}
\end{align}
The Expression (\ref{equation : Bell-Optim-state-val-func}) follows the definition of the model of the environment given in point \ref{section : model_of_env} on page \pageref{section : model_of_env}. We can then show the consistency with the connection given in Expression \eqref{equation : relation-action-and-state-value-function} by deducing $q_{*}(s,a)$ as follows :
\begin{align}
	q_{*}(s,a) &= \mathbb{E}\left[R_{t+1}+\gamma\cdot\underset{a'}{\max}~ q_{*}(s',a')|S_t=s,A_t=a,~~ S_{t+1}=s', A_{t+1}=a'\right] \nonumber\\
	&= \sum_{s',r}p(s',r|s,a)\cdot\left[r+\gamma\cdot \underset{a'}{\max}~ {q_{*}(s',a')}\right]\nonumber\\
	\Rightarrow q_{*}(s,a) &= r(s,a) + \gamma\cdot\sum_{s'}p(s'|s,a)\cdot \underset{a'}{\max}~ {q_{*}(s',a')}\nonumber\\
	\Rightarrow q_{*}(s,a) &= r(s,a) + \gamma\cdot\sum_{s'}p(s'|s,a)\cdot{v_{*}(s')}
	\label{equation : Bell-Optim-action-val-func}
\end{align}
Equations \eqref{equation : Bell-Optim-state-val-func} and \eqref{equation : Bell-Optim-action-val-func} are called \textbf{Bellman optimality equations}.
For a finite MDP, the Bellman optimality equation has a unique solution (in the case of infinite MDPs, the Bellman optimality equation may not have a unique solution because infinite MDPs can have multiple optimal policies that lead to the same value function).\newline
And having the optimal value function $v_{*}$, one can determine an optimal policy (because the value function defines a partial order on the policy space).

Finally, even if we have a complete and accurate model of the environment (the dynamics), it is usually not possible to compute an optimal policy simply by solving the Bellman equation. Therefore, approximate methods are developed and used \cite{silver2015lecture}.
\section{Solving Markov Decision Processes}
Solving an MDP means try to find an optimal value function $v_{*}$ or $q_{*}$, which corresponds to find the optimal policy $\pi^*$.\newline
There are many ways of doing that, and algorithms were developed for specific challenges. The most used classification consists in distinguishing \textbf{model-based} and \textbf{model-free algorithms}, direct or indirect.\newline
Generally, model-based algorithms concern what is called \textbf{Dynamic Programming} (DP). Model-based algorithms assume that the model of the environment is well known and can be used to find value functions and policies using the Bellman equations.\newline
Model-free algorithms now, is considered as true-RL algorithms. They don't require the availability of a perfect model (we suppose that, the environment is not totally known). This method rely on interaction with the environment. That interaction is done by simulating the policy and generating samples of state transitions and
rewards. Then, the samples are used to estimate value functions and even to improve the used policy.\newline
In model-free algorithms, since no model of the MDP is known, the agent must explore the MDP to obtain information. This introduces a \textbf{exploration-exploitation trade-off} that must be balanced to obtain an optimal policy.\newline
We can also talk about model-based reinforcement learning algorithms in which the agent does not possess, at the beginning, a model of the environment, but estimates
it after a certain amount of time. Once a reasonable model of the environment has been induced, the agent can then apply dynamic programming algorithms to compute the optimal policy.
\subsection{Dynamic Programming based algorithms or Model-based algorithms}
Taking a complex problem, breaking it down into simpler components and solving them recursively is the main idea of dynamic programming.\newline
There are two main Dynamic Programming algorithms from which we can derive others. Policy iteration \cite{howard1960dynamic} and value iteration \cite{bellman1957dynamic}. These two algorithms are based on two principal concepts (\textit{policy evaluation} and \textit{policy improvement}) below explained :
\begin{enumerate}
	\item Policy evaluation :\newline
		The partial order defined by the Relation \ref{PartialOrdering} gives a way of comparing two policies. Now, by establishing an order between value functions, we can find the best policy. But before doing that, we have to evaluate each specific policy.\newline
		Policy evaluation consists on finding the value function using the function $v_\pi$ defined in Relation \ref{equation : Bellman-for-state-consistency} iteratively until convergence. This is done using the following discrete version of the value function :
		\begin{equation}
			v_{\pi}^{k+1}(s) \longleftarrow \sum_{a}\pi(a|s)\cdot\left(r(s,a)+ \gamma\cdot\sum_{s'}p(s'|s,a)\cdot v^{k}_\pi(s')\right)
		\end{equation}
	The algorithm \ref{alg:policy_evaluation} below describes how the values of each state are computed iteratively to evaluate the policy \cite{TheArtOfRL} :
	\begin{algorithm}[H]
		\caption{Policy Evaluation Algorithm}
		\label{alg:policy_evaluation}
		\begin{algorithmic}[1]
			\State Policy ($\pi$), Discount rate ($\gamma$), Threshold to stop ($\varepsilon$)
			\Comment{Input}
			\State $v_\pi(s)= 0$ for all $s \in \mathcal{S}$
			\Comment{Initialization}
			\State The estimated state-value function for the policy $\pi$ (all $v(s)$)
			\Comment{Output}
			\While{not converge}
			\State $\delta = 0$
			\For{each state $s \in S$}
			\State $v \gets v_{\pi}(s)$
			\State $v_{\pi}(s) \gets \sum_{a}\pi(a|s)\cdot\left[r(s,a)+ \gamma\cdot\sum_{s'}p(s'|s,a)\cdot v_\pi(s')\right]$
			\State $\delta \gets \max(\delta, |v - v_\pi(s)|)$
			\EndFor
			\If{$\delta < \varepsilon$}
			\State \textbf{break}
			\EndIf
			\EndWhile
		\end{algorithmic}
	\end{algorithm}
%	\begin{center}
%		\captionof{table}{Policy Evaluation (Iterative Algorithm)}
%		\label{tab:policy_evaluation}
%		\begin{tabular}{lll}
%			\hline
%			\hline
%			\textbf{Input:}  & Policy to be evaluated ($\pi$), Discount rate ($\gamma$), Threshold to stop ($\varepsilon$)  & \\
%			\hline
%			\textbf{Initialization:} & $v_\pi(s)= 0$ & for all $s \in \mathcal{S}$ \\
%			\hline
%			\textbf{Output:}  & The estimated state-value function for the input policy $\pi$ (all $v(s)$) & \\
%			\hline
%			\hline
%			1  & \textbf{while} not converge  do  & \\
%			3  & ~~~$\delta = 0$ & \\
%			4  & ~~~\textbf{for} $s \in S$ \textbf{do}  & \\
%			5  & ~~~~~~$v \longleftarrow v_{\pi}(s)$  &  \\
%			6  & ~~~~~~$v_{\pi}(s) \longleftarrow \sum_{a}\pi(a|s)\cdot\left[r(s,a)+ \gamma\cdot\sum_{s'}p(s'|s,a)\cdot v_\pi(s')\right]$  & \\
%			7  & ~~~~~~$ \delta = \max(\delta, |v - v_\pi(s)|)$ &  \\
%			8  & ~~~\textbf{end for} & \\
%			9  & ~~~\textbf{if} $\delta < \varepsilon$ \textbf{then} & \\
%			10 & ~~~~~~ \textbf{break} & \\
%			11 & ~~~\textbf{end if} & \\
%			\hline
%		\end{tabular}
%	\end{center}
	
	\item Policy improvement :\newline
	In RL, the final goal is to find the optimal policy. Since the policy space is very huge, the task becomes infeasible. This is why, the policy improvement algorithm, as a solution to that problem, aim to find iteratively the optimal policy, by making small improvements on the current one.\newline
	The idea is to modify the current policy greedily with respect to the value function. But, here we should definitely use the action-value function instead of the state one because the policy is about choosing an action. This will result in a deterministic policy (i.e. the probabilities for each case are $0$ or $1$). The only remaining problem is to compute at each time the action-value function but we don't have an estimate of it. Fortunately, as we have already a way to get iteratively the state-value function (using \textbf{policy evaluation}), we can now use it in the expression of the action-value function.\newline
	Knowing the relation between $q$ and $v$, (Relation \ref{equation : relation-action-and-state-value-function}), we can modify the Expression \ref{eq:QvalueGen} as follows :
	\begin{equation}\label{eq:QvalueBetter}
		q_\pi(s,a) = r(s,a) + \gamma\cdot\sum_{s'}p(s'|s,a)\cdot v_\pi(s')
	\end{equation}
	So, the algorithm \ref{alg:policy_improvement} below gives with more details how to improve gradually the policy \cite{TheArtOfRL} :
	
	\begin{algorithm}[H]
		\caption{Policy Improvement Algorithm}
		\label{alg:policy_improvement}
		\begin{algorithmic}[1]
			\State \textbf{Input:} Estimated value function $v_\pi$ , discount rate ($\gamma$)
			\State \textbf{Initialization:} $q_\pi(s,a)= 0$ for all $s \in \mathcal{S}$ and $a\in \mathcal{A}$
			\State \textbf{Output:} Improved deterministic policy $\pi'$
			
			\Comment{Calculation of the state-action value function using the estimated state value function}
			\For{each state $s \in \mathcal{S}$}
			\For{each action $a \in \mathcal{A}(s)$}
			\State $q_\pi(s,a) \gets r(s,a) + \gamma\cdot\sum_{s'}p(s'|s,a)\cdot v_\pi(s')$
			\EndFor
			\EndFor
			
			\Comment{Calculation of an improved deterministic policy}
			\For{each state $s \in \mathcal{S}$}
			\State $a_{*} = \underset{a}{argmax}~ q_\pi(s,a)$
			\For{each action $a \in \mathcal{A}(s)$}
			\If{$a = a_{*}$}
			\State $\pi'(a|s) \gets 1$
			\Else
			\State $\pi'(a|s) \gets 0$
			\EndIf
			\EndFor
			\EndFor
		\end{algorithmic}
	\end{algorithm}
	
%	\begin{center}
%		\captionof{table}{Policy Improvement (Iterative Algorithm)}
%		\label{tab:policy_improvement}
%		\begin{tabular}{lll}
%			\hline
%			\hline
%			\textbf{Input:}  & Estimated value function  $v_\pi$ and the discount rate ($\gamma$)  & \\
%			\hline
%			\textbf{Initialization:} & $q_\pi(s,a)= 0$ & for all $s \in \mathcal{S}$ and $a\in \mathcal{A}$ \\
%			\hline
%			\textbf{Output:}  & Improved deterministic policy $\pi'$ & \\
%			\hline
%			\hline
%			1 & /*** Compute state-action value function using  & \\
%			& ~~~~~~~~~~~estimated state value
%function***/ & \\
%			2  & \textbf{for} $s \in \mathcal{S}$ \textbf{do}  & \\
%			3  & ~~~\textbf{for} $a \in \mathcal{A}(s)$ \textbf{do}  &  \\
%			4  & ~~~~~~$q_\pi(s,a) \longleftarrow r(s,a) + \gamma\cdot\sum_{s'}p(s'|s,a)\cdot v_\pi(s')$  & \\
%			& & \\
%			5  & /*** Compute an improved deterministic policy ***/ & \\
%			6  & \textbf{for} $s \in \mathcal{S}$ \textbf{do}  & \\
%			7  & ~~~$a_{*} = \underset{a}{arg max}~ Q_\pi(s,a) $ & \\
%			8  & ~~~\textbf{for} $a \in \mathcal{A}(s)$ \textbf{do}  & \\
%			9  & ~~~~~~\textbf{if} $a =a_{*} $ \textbf{then} &  \\
%			10  & ~~~~~~~~~$\pi'(a|s) = 1 $ & \\
%			11  & ~~~~~~\textbf{else} & \\
%			12  & ~~~~~~~~~$\pi'(a|s) = 0 $ & \\
%			\hline
%		\end{tabular}
%	\end{center}
Now we can look easily on the value iteration and the policy iteration, which are our goals for these Dynamic Programming techniques of solving Markov Decision Processes.
\end{enumerate}
\begin{enumerate}
	\item \textbf{Policy iteration} :\newline
	Policy iteration is used to find the optimal policy in a Markov decision process (MDP). This algorithm consists of two main steps, policy evaluation and policy improvement. Those two steps are iteratively applied to refine the policy until it converges to the optimal one.
	\begin{center}
		\includegraphics[scale=0.48]{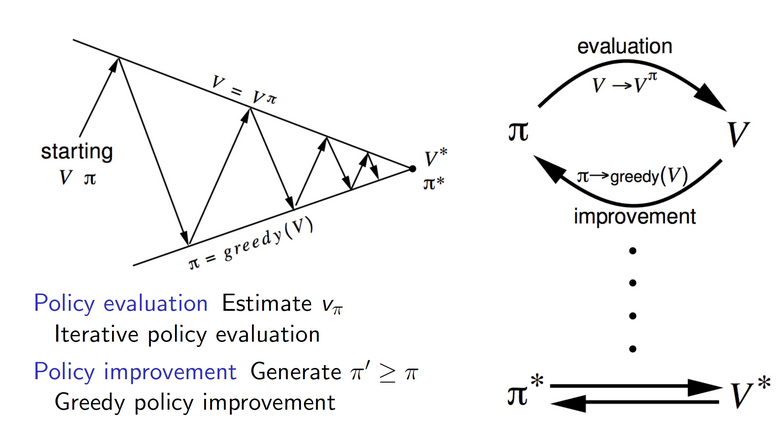}
		\captionof{figure}[Policy iteration]{Policy iteration \cite{silver2015lecture}}\label{fig:policyIteration}
	\end{center}
	\item \textbf{Value iteration} :\newline
	This value iteration algorithm is like an improvement of policy iteration. In fact, the latter algorithm works effectively and find the optimal policy but it can take a lot of time because at each epoch it computes the policy evaluation and the policy improvement. Value iteration combines policy evaluation and policy improvement into one step. The idea is to replace the general expression of the value function for a given policy (Expression \ref{equation : Bellman-for-state-consistency}) by the one giving the optimal value function (Expression \ref{equation : Bell-Optim-state-val-func}) in the Expression \ref{eq:QvalueBetter}.
	\begin{center}
		\includegraphics[scale=0.7]{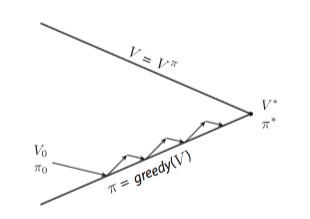}
		\captionof{figure}[Value Iteration]{Value Iteration \cite{stackoverflowValueIter}}\label{fig:ValueIteration}
	\end{center}
\end{enumerate}
Generally, DP algorithms provide a powerful framework for solving reinforcement learning problems when a perfect model of the environment is known. It decomposes the problem into simpler subproblems and then uses the Bellman equations to compute the value functions. Optimality is reached when the expected cumulative reward is maximised. \newline
Unfortunately, the model of the environment is not always available. Also, DP algorithms can be computationally expensive, especially for problems with large state and action spaces. For those reasons, other RL methods have been developed, such as Monte Carlo methods and temporal difference learning \cite{sutton1995generalization}.

\subsection{Model-free algorithms : Monte Carlo}
Dynamic Programming (DP) assumes that the agent has a perfect access to the model of the environment (dynamics of the environment). But most of the time, we don't have access to that. Additionally, sometimes the state and/or action spaces are very huge in such a way that DP algorithm can be inefficient.\newline
Monte Carlo refers to the method of using experience to estimate value functions by averaging sample returns. Statistically it is proven to converge to the true estimate of the returns \cite{bertsekas2008introduction}. It uses sequences of state-action-reward samples generated by the agent by interacting with the environment.\newline
There are always two important steps to be taken into account, the evaluation step and the improvement step. The idea of Monte Carlo (MC) policy evaluation for example is, following a certain policy $\pi$, to generate a sample of state-action-reward-state or state-action-reward-state-action starting from a certain state. After generating a large amount for many episodes (because here we are talking about processes which have a terminating state, i.e. they can end in a finite number of steps), the expected return is obtained by averaging the returns from a large amount of complete sample episodes. This gives the corresponding value for that state, given that specific policy. As the number increases, the average will converges to the true value of $v_\pi$ \cite{bertsekas2008introduction}.\newline
The previous text just gives a general idea on how MC works. But, there are many implementations of this idea :
\begin{itemize}
	\item First-visit MC policy evaluation (using value function or action-value function)
	\item Every-visit MC policy evaluation (using value function or action-value function)
	\item Incremental MC policy evaluation (using value function or action-value function)
	\item Monte Carlo Control (Monte Carlo Policy Improvement).
\end{itemize}
Here is a succinct presentation of those main MC approaches :
\begin{enumerate}
	\item \textbf{First-visit Monte Carlo Policy Evaluation}\newline
	The idea of this method is to generate a lots of samples, compute the returns at each step within each episode and update the estimate of the state value function for each state (at the end of the considered episode). The update is done by observing whether it is the first time to visit the state in each episode and then update the return and the number of visits. At the end, the expected return will be the average of the returns. What we have just explained is summarised in the following algorithm \cite{TheArtOfRL} :
	
	\begin{algorithm}[H]
		\caption{First-visit Monte Carlo Policy Evaluation for $v_\pi$}
		\label{alg:MC_first_visit_evaluation}
		\begin{algorithmic}[1]
			\State Policy to evaluate ($\pi$), Discount rate ($\gamma$), Number of episodes ($K$)
			\Comment{Input}
			\State $i=0, ~ v_\pi(s)= 0, ~ G(s)=0, ~ N(s)=0$ for all $s \in \mathcal{S}$
			\Comment{Initialization}
			\State The estimated state-value function for $\pi$ (all $v(s)$)
			\Comment{Output}
			\While{$i< K$}
			\State Generate a sample episode $\tau$ by following the policy $\pi$, where $\tau= S_0, R_1, S_1, R_2,...$
			\State Compute the return $G_t$ for every time step $t= 1, 2, 3, 4,...$ in episode $\tau$
			\For{each time step $t$ till the end of episode $\tau$}
			\If{it's the first time state $S_t$ is visited in episode $\tau$}
			\State $N(S_t) \leftarrow N(S_t) + 1$
			\Comment{Increment counter of total visits}
			\State $G(S_t) \leftarrow G(S_t) + G_t$
			\Comment{Increment total return}
			\State $v_\pi(S_t) \leftarrow G(S_t)/N(S_t)$\label{UpdateInAlgo}
			\Comment{Update estimated value}
			\EndIf
			\EndFor
			\State $i = i+1$
			\EndWhile
		\end{algorithmic}
	\end{algorithm}
	
%	\begin{center}
%		\captionof{table}{First-visit Monte Carlo Policy Evaluation for $v_\pi$}
%		\label{tab:MC_first_visit_evaluation}
%		\begin{tabular}{lll}
%			\hline
%			\hline
%			\textbf{Input:}  & Policy to be evaluated ($\pi$), Discount rate ($\gamma$), Number of episodes ($K$)  & \\
%			\hline
%			\textbf{Initialization:} & $i=0, ~ v_\pi(s)= 0, ~ G(s)=0, ~ N(s)=0$  & for all $s \in \mathcal{S}$ \\
%			\hline
%			\textbf{Output:}  & The estimated state-value function for the input policy $\pi$ (all $v(s)$) & \\
%			\hline
%			\hline
%			1  & \textbf{while} $i< K$ \textbf{do}  & \\
%			2  & ~~~Generate a sample episode $\tau$ by following policy $\pi$,& \\ 
%			& ~~~~~~~~~~~~~~~~~~~~~~~~~~~~~~~~~~~~~~~~~~~~~~~~where $\tau= S_0, R_0, S_1, R_1,...$ & \\
%			3  & ~~~Compute the return Gt for every time step $t= 1, 2, 3, 4,...$ in episode $\tau$ & \\
%			4  & ~~~\textbf{for} each time step $t$ till the end of episode $\tau$ \textbf{do}  & \\
%			5  & ~~~~~~\textbf{if} it's the first time state $S_t$ is visited in episode $\tau$ \textbf{then}  &  \\
%			6  & ~~~~~~~~~Increment counter of total visits: $N(S_t) \longleftarrow N(S_t) + 1$  & \\
%			7  & ~~~~~~~~~Increment total return $G(S_t) \longleftarrow G(S_t) + G_t$ &  \\
%			8  & ~~~~~~~~~Update estimated value $v_\pi(S_t) \longleftarrow G(S_t)/N(S_t)$ & \\
%			9  & ~~~$i = i+1$ & \\
%			\hline
%		\end{tabular}
%	\end{center}	
	\item \textbf{Every-visit Monte Carlo Policy Evaluation}\newline
	A visit to a state $s$ is each time the agent passes to that specific state. In the first-visit algorithm, only the first time is considered in each episode while updating the return. For the every-visit, all the visits, even within an episode are taken into account. The former method is unbiased but has an higher variance, the latter is biased with a lower variance. The trade-off is now taken by considering that aspect. Sometime even a combination of them is envisioned. The algorithm comes from a small modification on the one for \textit{first-visit} \cite{TheArtOfRL, sutton2018reinforcement}.
	\item \textbf{Incremental Monte Carlo Policy Evaluation}\newline
	The update method used line \ref{UpdateInAlgo} of the previous algorithm \ref{alg:MC_first_visit_evaluation} is still perfectible. In fact, many researches have shown that the incremental updates converges better than the simple update as done in the previous algorithm \cite{TheArtOfRL}. So, instead of $v_\pi(S_t) \longleftarrow G(S_t)/N(S_t)$, the incremental approach consists on doing :
	\begin{equation}\label{eq:incrementalUpdate}
		v_\pi(S_t) \longleftarrow v_\pi(S_t)+\dfrac{1}{N(S_t)}\cdot \underset{\text{incremental error}}{\underbrace{\left(G(S_t)-v_\pi(S_t)\right)}}
	\end{equation}
\end{enumerate}
All these algorithms, can also be used by considering the action-value function instead of the state value function. We just have to change mutatus mutandi everything. For example, for the incremental update shown in Relation \ref{eq:incrementalUpdate},  we simply do :
\begin{equation}\label{eq:incrementalUpdateQ}
	q_\pi(S_t,A_t) \longleftarrow q_\pi(S_t,A_t)+\dfrac{1}{N(S_t,A_t)}\cdot \Big(G_t-q_\pi(S_t,A_t)\Big)
\end{equation}
Also, exactly as for Dynamic Programming, the control step can be considered. The control step consists now on changing greedily the policy. For this case, there is for example \textbf{Monte Carlo Control or Policy Improvement}. For this, we can even try to do not always choose the best value but combine exploration and exploitation in a good way for better results \cite{van2012reinforcement}.
\subsection{Model-free algorithms : Temporal Difference}
MC is limited to episodic problems. Temporal Difference (TD) methods generalize MC by dealing with both episodic and continuing (no natural ending or terminating state for the considered task) or infinite RL problems.\newline
So, MC has two main drawbacks : it only takes into account episodic problems and we have to wait for the end of an episode to do any update on the estimated values. This is called Temporal Difference because the agent update its estimates based on the immediate reward and the estimated value for the next state without waiting for the end, even if these values will change another time during the episode. The main idea is a continuous updating of values, a guess from guesses.\newline Let's give the most important implementations which really show how TD works :
\begin{enumerate}
	\item \textbf{TD Policy evaluation :}\newline
	For a given policy, samples are generated and at the same time incremental update is applied in the following way, without waiting for the end :
	\begin{align}\label{eq:UpdateTDState}
		v_\pi(S_t) &\longleftarrow v_\pi(S_t) + \lambda\cdot\left(G_t-v_\pi(S_t)\right)\nonumber\\
		&\longleftarrow v_\pi(S_t) + \lambda\cdot\Big(\left[R_t+\gamma\cdot v_\pi(S_{t+1})\right]-v_\pi(S_t)\Big)
	\end{align}
	We here show the corresponding algorithm :
	\begin{algorithm}[H]
		\caption{Temporal Difference TD(0) (policy evaluation for $v_\pi$)}
		\label{alg:TD_evaluation}
		\begin{algorithmic}[1]
			\State \textbf{Input:} Policy ($\pi$), Discount rate ($\gamma$), Step size $\lambda$, Number of episodes ($K$)
			\State Sample initial state $S_0$. Initialization of $i=0$, $v_\pi(s)=0$ for all $s \in \mathcal{S}$.
			\Comment{Initialization}
			\While{$i<K$}
			\State Sample action $A_t$ for $S_t$ following the policy $\pi$
			\State Chose the action $A_t$ in the environment and get $R_t$ while observing $S_{t+1}$
			\State $i=i+1$
			\State Compute TD target:
			\State \quad $\delta_t = \left\{\begin{array}{ll}
				R_t & \text{if the next state $S_{t+1}$ is a terminal state} \\
				R_t+\gamma\cdot v_\pi(S_{t+1}) & \text{otherwise, we are bootstrapping as shown here}
			\end{array}\right.$
			\State Update estimated value $v_\pi(S_t) \leftarrow v_\pi(S_t) + \left(\delta_t-v_\pi(S_t)\right)$
			\State $S_t = S_{t+1}$
			\EndWhile
		\end{algorithmic}
	\end{algorithm}
	
%	\begin{center}
%		\captionof{table}{Temporal difference [TD(0)] policy evaluation for $v_\pi$}
%		\label{tab:TD_evaluation}
%		\begin{tabular}{lll}
%			\hline
%			\hline
%			\textbf{Input:}  & Policy ($\pi$), Discount rate ($\gamma$),  step size $\lambda$, Number of episodes ($K$)  & \\
%			\hline
%			\textbf{Initialization:} & $i=0, ~ v_\pi(s)= 0$  & for all $s \in \mathcal{S}$ \\
%			\hline
%			\textbf{Output:}  & The estimated state-value function for the input policy $\pi$ (all $v(s)$) & \\
%			\hline
%			\hline
%			1  & Sample initial state $S_0$  & \\
%			1  & \textbf{while} $i< K$ \textbf{do}  & \\
%			2  & ~~~Sample action $A_t$ for $S_t$ following $\pi$ & \\
%			3  & ~~~Take action $A_t$ in environment and observe $R_t$ and $S_{t+1}$ & \\
%			3  & ~~~$i=i+1$ & \\
%			4  & ~~~Compute TD target :  & \\
%			5  & ~~~~~~$\delta_t = \left\{\begin{array}{ll}
%				R_t & \text{if the next state $S_{t+1}$ is a terminal state} \\
%				R_t+\gamma\cdot v_\pi(S_{t+1}) & \text{otherwise, we are boostraping as shown here}
%			\end{array}\right.$  &  \\
%			6  & ~~~~~~Update estimated value $v_\pi(S_t)\longleftarrow v_\pi(S_t) + \left(\delta_t-v_\pi(S_t)\right) $  & \\
%			7  & ~~~~~~$S_t = S_{t+1}$ & \\
%			\hline
%		\end{tabular}
%	\end{center}
	If we need to work with action-value functions, we just have to replace $v_\pi$ by $q_\pi$ coherently in equation \ref{eq:UpdateTDState}. One obtains :
	\begin{align}\label{eq:UpdateTDQFunc}
		q_\pi(S_t,A_t) &\longleftarrow q_\pi(S_t,A_t) + \lambda\cdot\left(G_t-q_\pi(S_t,A_t)\right)\nonumber\\
		&\longleftarrow q_\pi(S_t,A_t) + \lambda\cdot\Big(\left[R_t+\gamma\cdot q_\pi(S_{t+1},A_{t+1})\right]-q_\pi(S_t,A_t)\Big)
	\end{align}
	At the end of the algorithm, we'll update the state as well as the action in case we are using the Expression \ref{eq:UpdateTDQFunc}.\newline
	Based on this idea, the update can go far than one step as in the Expression \ref{eq:UpdateTDState}.
	In fact, we can replace $G_t = R_{t}+\gamma\cdot v_\pi(S_{t+1})$ by $G_t = R_{t}+\gamma\cdot R_{t+1}+ \cdots + \gamma^n\cdot v_\pi(S_{t+n})$. Depending on when we stop, it gives us different levels of bootstrapping. This generalizes the TD(0) algorithm we have been illustrating in this part. So, in general we have just an \textbf{n-step TD Policy Evaluation} that we choose to write \textbf{TD(n)} with \textbf{n+1} the number of steps ahead in the formula \cite{TheArtOfRL,sutton2018reinforcement}.
	\item \textbf{TD Control : SARSA}\newline
	SARSA stands literally for State-Action-Reward-State-Action. This method uses in fact the general policy iteration template that we have shown in the Dynamic Programming part to find the optimal policy. But it uses specific methods to manage exploration and exploitation while searching for the optimal policy. In fact, from the previous policy iteration that we saw in the DP part, we have to take at each time the one which gives the maximum return (\textbf{exploitation}). But for this case, we can sometimes take a suboptimal action (\textbf{exploration}) with a certain probability $\varepsilon$. This is what we call an \textbf{$\varepsilon\text{-greedy}$ selection method}.\newline
	Simply, the main idea is to loop on all the states for all the actions (state-action pairs) and then use an $\varepsilon\text{-greedy}$ policy to select the best action. But note that there are many ways of managing the exploration-exploitation trade-off \cite{TheArtOfRL}.\newline
	In general also, we can use \textbf{n-step TD Policy Evaluation} in the evaluation part of this policy iteration. It now gives raise to all other implementations of SARSA \textbf{n-step SARSA} \cite{sutton2018reinforcement}.
	\item \textbf{TD Control : Q-Learning}\newline
	The SARSA algorithm previously introduced is in fact a family of general policy iteration
algorithm which estimates the state-action value function $q_\pi$ for a policy $\pi$ and uses an $\varepsilon\text{-greedy}$ policy to compute a new, better policy $\pi'$. But, as we did for value iteration algorithm, we can skip the step of estimating values for a policy $\pi$ by directly making an estimation using the optimal value function formula, so that we'll be directly estimating the optimal policy. For Q-Learning, we use the same idea by changing the Relation \ref{eq:UpdateTDQFunc} by :
	\begin{equation}\label{eq:UpdateQLearningFunc}
		q(S_t,A_t) \longleftarrow q(S_t,A_t) + \lambda\cdot\left(\left[R_t+\gamma\cdot \underset{a'}{max}~q(S_{t+1}=s',a')\right]-q(S_t,A_t)\right)
	\end{equation}
	One advantage of this algorithm is that it always uses the maximum value $q(S_{t+1},a')$ across all valid
actions for the successor state $S_{t+1}=s'$, regardless of which action is chosen by the policy.\newline
	We should note that, even if Q-Learning is one of the most used in real-world problems, it may be biased. This is why another method called \textbf{Double Q-Learning} was suggested to alleviate that problem \cite{sutton2018reinforcement,van2010double}.
\end{enumerate}
In many of the tasks the state space is very huge. In such cases, even with the most efficient algorithms we have described above, we cannot expect to find an optimal policy or the optimal value function even in the limit of infinite time and data. Most of the time then, the goal is to
find a good approximate of the function we are looking for. Using linear or non-linear approximates. For more information about that, see \cite{TheArtOfRL, ris2023fundamentals, sutton2018reinforcement}.\newline

Now, as we have seen the main approaches to solve an MDP, which in fact resume the key points of RL foundational algorithms, we can now discuss on how it is related to the fixed point theory.
\section*{Chapter Conclusion}
In this chapter, we have shown the fundamentals of Reinforcement Learning and how it is connected to Markov Decision Processes. After that, we have discussed about the optimal value functions and optimal policies and then at the end we have shown how generally the optimality is accomplished in RL. The next chapter will now formalize what we have just presented in this chapter as formula to find the optimal policy in terms of operators. This will now allow us to show the connection with the Banach Contraction Principle and how it can help understand deeply why and how RL algorithms work.
\chapter{Bellman Operators and Convergence of RL algorithms}\label{chap3}
In this chapter, we will express reinforcement in terms of operators so that we have a simple way to prove why reinforcement learning algorithms work well and find the optimal policy. To do this, we will first write the Banach contraction principle on normed spaces in a way that makes it easy to use in our next proofs, then we will discuss the Bellman operators, and finally we will talk about the core methods for finding optimality introduced in the previous chapter (but now using the language introduced in this chapter).
\section{Refinement of the Banach Contraction Principle}
We need to state the Banach Contraction Principle presented in the first chapter in such a way that everything in this part will be expressed in the same way, using the same language.
\subsection{Contraction mapping and fixed-point}
Let us now refine everything in term of norms in such a way that we will have a consistent notation until the end of this work. In fact, as we can see in this work, all the concepts related to reinforcement learning are defined in term of norms. So,
\begin{defn}
	Let $(X,||\cdot||)$ be a normed space. A mapping $\mathcal{T} : X \rightarrow X$ is a $\gamma-$contraction mapping if, there exists a $\gamma\in [0,1)$ such that, for any $x_1,x_2\in X$ we have 
	\[||\mathcal{T}x_1-\mathcal{T}x_2||\leq \gamma\cdot ||x_1-x_2||\]
	From this, each contraction is Lipchitz and thus continuous \cite{kumaresan2005topology}. It implies that :
	\[\text{If} ~~ x_n\underset{||\cdot||}{\longrightarrow}x~~~\text{then}~~~ \mathcal{T}x_n\underset{||\cdot||}{\longrightarrow}\mathcal{T}x,\]
	where $\underset{||\cdot||}{\longrightarrow}$ denotes the convergence using that norm $||\cdot||$.
	Now, by definition, an element $x^*\in X$ is called fixed point for $\mathcal{T}$ if we have :
	\[\mathcal{T}x^* = x^*\]
\end{defn}

\begin{pro}\label{thm:RefinedBCP}
	Let $(X,||\cdot||)$ be a Banach space and~~ $\mathcal{T} : X\rightarrow X$ ~~ a ~~ $\gamma-\text{contraction mapping}$. Then :
	\begin{enumerate}
		\item $\mathcal{T}$ has a unique fixed point $x^*\in X$.
		\item For every starting point $x_0 \in X$, the sequence recursively defined by $x_{n+1}=\mathcal{T}x_n$ converges to $x^*$ in a geometric fashion.
	\end{enumerate}
\end{pro}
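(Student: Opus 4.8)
The plan is to recognise that this proposition is nothing more than the Banach Contraction Principle (Theorem \ref{thm:BCP}) transported to the setting of a Banach space via its induced metric. Since a Banach space is by definition a complete normed space, setting $d(x,y)=||x-y||$ turns $(X,||\cdot||)$ into a complete metric space, and the $\gamma$-contraction condition $||\mathcal{T}x_1-\mathcal{T}x_2||\leq \gamma\cdot||x_1-x_2||$ is exactly the metric contraction condition $d(\mathcal{T}x_1,\mathcal{T}x_2)\leq \gamma\cdot d(x_1,x_2)$. In principle one could therefore close the argument in a single line by invoking Theorem \ref{thm:BCP}. However, since the stated goal of this chapter is to express everything in norm language, I would instead reproduce the argument directly, mirroring step for step the proof of the BCP but writing $||\cdot||$ in place of $d(\cdot,\cdot)$.

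Concretely, I would first fix any $x_0\in X$ and define the Picard iterates $x_{n+1}=\mathcal{T}x_n$, i.e. $x_n=\mathcal{T}^n x_0$. The first key step is the geometric bound on consecutive terms: applying the contraction property repeatedly, exactly as in Expression \ref{eq : demons1}, gives
\[
||x_{m+1}-x_m||\leq \gamma^m\cdot||x_1-x_0||.
\]
The second key step is to convert this into a Cauchy estimate. For $n<m$, the triangle inequality together with the above bound and the summation of the geometric series (as in Expression \ref{eq : demons2}) yields
\[
||x_m-x_n||\leq \frac{\gamma^n}{1-\gamma}\cdot||x_1-x_0||,
\]
and since $\gamma\in[0,1)$ the right-hand side tends to $0$, so $\{x_n\}$ is Cauchy.

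The third step invokes completeness of the Banach space: the Cauchy sequence $\{x_n\}$ converges to some $x^*\in X$. Because a contraction is continuous, passing to the limit in $x_{n+1}=\mathcal{T}x_n$ gives $x^*=\mathcal{T}x^*$, so $x^*$ is a fixed point. For uniqueness, I would suppose $x^*$ and $y^*$ were both fixed and use $||x^*-y^*||=||\mathcal{T}x^*-\mathcal{T}y^*||\leq \gamma\cdot||x^*-y^*||$, which with $\gamma<1$ forces $||x^*-y^*||=0$, hence $x^*=y^*$. Finally, the geometric rate claimed in part (2) is read off directly from the Cauchy estimate by letting $m\to\infty$, giving $||x_n-x^*||\leq \frac{\gamma^n}{1-\gamma}||x_1-x_0||$.

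Honestly, there is no genuine obstacle here: every ingredient has already been established, and the only point requiring any care is making sure the geometric series bound and the appeal to completeness are stated cleanly in norm notation rather than re-derived from scratch. The mild subtlety worth flagging is simply that completeness (not mere normedness) is the property doing the essential work, precisely as emphasised in the remark following Theorem \ref{thm:BCP}.
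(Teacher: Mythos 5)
Your proposal is correct, and its core is the same mathematics as the paper's: the paper likewise disposes of part (1) by simply citing Theorem \ref{thm:BCP}, which is exactly the one-line reduction you mention (a Banach space with the induced metric $d(x,y)=||x-y||$ is a complete metric space, and the norm contraction condition is the metric one). Where you genuinely differ is in part (2). The paper does not re-run the Cauchy-sequence argument at all; it exploits the fixed point directly: since $\mathcal{T}x^*=x^*$, one application of the contraction gives $||x_n-x^*||=||\mathcal{T}x_{n-1}-\mathcal{T}x^*||\leq\gamma\,||x_{n-1}-x^*||$, and iterating yields $||x_n-x^*||\leq\gamma^n\,||x_0-x^*||$, which is the geometric rate with constant $||x_0-x^*||$. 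You instead reproduce the full BCP proof in norm notation (consecutive-term bound, Cauchy estimate, completeness, continuity, uniqueness) and read the rate off the Cauchy estimate by letting $m\to\infty$, obtaining $||x_n-x^*||\leq\frac{\gamma^n}{1-\gamma}\,||x_1-x_0||$. Both bounds are valid and both establish geometric convergence. The paper's argument is shorter and gives a cleaner constant, but that constant involves the unknown limit $x^*$; your bound is an a priori estimate computable from the first two iterates, which is the form useful in practice for stopping criteria, and it is in fact the same estimate the paper records at the end of Theorem \ref{thm:BCP}. The extra work in your write-up, re-proving existence and uniqueness rather than citing the theorem, costs nothing mathematically; it is simply more than the paper chose to do.
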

\begin{proof}
    For the  first part, see the proof of the Banach fixed point Theorem \ref{thm:BCP}. Let's now only justify the fact that the convergence is in geometric fashion :
    \[\mathcal{T}||x_{n-1}-x^*||=||\mathcal{T}x_{n-1}-\mathcal{T}x^*||=||\mathcal{T}x_{n-1}-x^*||= ||x_{n}-x^*||,\]
    and as we are working with a contraction mapping, we also have $||\mathcal{T}x_{n-1}-\mathcal{T}x^*|| \leq \gamma \cdot ||x_{n-1}-x^*||$, then we can conclude directly $||x_{n}-x^*|| \leq \gamma \cdot ||x_{n-1}-x^*||$. Which implies that we get, by iteratively applying the mapping on the second member of this previous inequality, $||x_n-x^*||\leq \gamma^n\cdot ||x_0-x^*||$ thus 
    \[\lim_{n\to\infty} ||x_n-x^*||\leq\lim_{n\to\infty}\left(\gamma^n\cdot ||x_0-x^*||\right)=0\Rightarrow \lim_{n\to\infty} ||x_n-x^*|| = 0\]
\end{proof}
As we may have noticed, the Proposition \ref{thm:RefinedBCP}, is the Banach Contraction Principle as given by the Theorem \ref{thm:BCP}. We restrain it on Banach spaces because this is sufficient for the RL setting. In fact, in many instances encountered in Reinforcement Learning (RL), the vector space $X$ is frequently identified as $\mathbb{R}^d$ or the space of real bounded functions, and the most relevant norms are mostly $||\cdot||_{2}$ and $||\cdot||_{\infty}$. So, stating the theorem on Banach Spaces is sufficient in this setting.
\section{Bellman Optimality Operators}
Operators are maps in space of functions. Here we will be defining them with respect to the associated RL equations. Let's first define the whole mathematical framework that we'll be using here.\newline
\phantom{c}\newline
Let $\mathcal{M}=\langle \mathcal{S}, \mathcal{A},p,r, \gamma \rangle$ be a Markov Decision Process (MDP), $\mathcal{V}_s\equiv\mathcal{V}$ the space of bounded real-valued functions over $\mathcal{S}$ and $\mathcal{Q}$ the space of bounded real-valued functions over $\mathcal{S}\times\mathcal{A}$. With that we will call :
\begin{itemize}
	\item $ \mathcal{V} $ : ~~ the space of all state-value functions,
	\item $\mathcal{Q}$ : ~~ the space of all action-value functions,
	\item $\left(\mathcal{T}^\pi_v\right) : \mathcal{V}\rightarrow\mathcal{V}$ ~~ the Bellman Expectation Operator for the state value function,
	\item $\left(\mathcal{T}^{\pi}_Q\right) : Q\rightarrow Q$ : ~~ the Bellman Expectation Operator for the action-value function,
	\item $\left(\mathcal{T}^*_v\right) : \mathcal{V}\rightarrow\mathcal{V}$ ~~ the Bellman Optimality Operator for the state value function,
	\item $\left(\mathcal{T}^{*}_Q\right) : Q\rightarrow Q$ : ~~ the Bellman Optimality for the action-value function
	\item $\pi : \mathcal{S} \rightarrow \Delta\left(\mathcal{A}\right)$ ~~ the policy (a conditional probability defined by $\pi(a|s) = Pr\{a|s\}$) with $\Delta\left(\mathcal{A}\right)$ denoting a probability distribution of actions.
\end{itemize}
Inspired by the definition given in the Expression \ref{MetricBoundedFunc} and the Example \ref{Exa:Banach} we know that, with $||f(s)||_\infty \equiv \underset{s}{\max}|f(s)|$, the tuple $(\mathcal{V}, ||.||_\infty)$ is a Banach Space.\newline
Also, with $||f(s,a)||_\infty \equiv \underset{s,a}{\max}|f(s,a)|$ we can also see that $(\mathcal{Q}, ||.||_\infty)$ is a Banach Space. Those two supremum norms are different but we will be writing them in the same way because, the real formula will depend on the space we are working with.
\begin{defn}
	Let's recall the fact that, the Bellman Optimality equation for the value function (Relation \ref{equation : Bell-Optim-state-val-func}) is given by the following expression :
	\[v_*(s) = \underset{a}{\max}\Big(r(s,a) + \gamma\cdot\sum_{s'}p(s'|s,a)\cdot v_{*}(s')\Big)\]
	By analogy, we will define, point-wise, the \textbf{Bellman Optimality Operator for state value function}  $\left(\mathcal{T}^*_v\right) : \mathcal{V}\rightarrow\mathcal{V}$ by :
	\begin{equation}\label{equation : Bellman-Operator}
		\left(\mathcal{T}^*_v f\right)(s) \equiv \underset{a}{\max}\left[r(s,a)+\gamma\cdot\sum_{s'}p(s'|s,a)\cdot f(s')\right]~~~\forall f\in \mathcal{V}
	\end{equation}
\end{defn}

\subsection{Properties of this operator}
The convention we'll take here is to write $\left(\mathcal{T}^*_v \right)$ just $\left(\mathcal{T}^* \right)$, knowing that the space will be written $\mathcal{V}$ by default. Now, what is very interesting is that, the operator, defined in Relation \ref{equation : Bellman-Operator} has these properties \cite{deepminducl2021} :
\begin{itemize}
	\item $\mathcal{T}^*$ is a $\gamma-$contraction with respect to the supremum norm $||\cdot||_{\infty}$ on $\mathcal{V}$ :
	\[||\mathcal{T}^*u-\mathcal{T}^*v||_\infty\leq \gamma\cdot ||u-v||_\infty~~\forall~ u,v\in\mathcal{V}~~~\text{with $\gamma\in [0,1)$}\]
	\item $\mathcal{T}^*$ is monotonic :
	\[\forall~ u,v \in \mathcal{V}~\text{s.t}~u\leq v,~\text{in any state,} ~~\mathcal{T}^*u\leq \mathcal{T}^*v \]
\end{itemize}
\begin{proof}
	First, let us rewrite simply the Bellman Optimality Operator in Relation \ref{equation : Bellman-Operator} like this :
	\begin{equation}\label{equation : Bellman-Operator-written-with-Expectation}
		\left(\mathcal{T}^*_v f\right)(s) \equiv \left(\mathcal{T}^* f\right)(s) \equiv \underset{a}{\max}\Big[r(s,a)+\gamma\cdot\mathbb{E}_{s'|s,a}f(s')\Big]
	\end{equation}
	Now, let us prove the two properties of the Bellman Optimality Operator $\mathcal{T}^*$ :
	\begin{enumerate}
		\item Using the definition of a contraction mapping and Bellman Optimality Operator we expand :
		\[\left| \mathcal{T}^*u(s)-\mathcal{T}^*v(s) \right| = \left| \underset{a}{\max}\left[r(s,a)+\gamma\cdot\mathbb{E}_{s'|s,a}u(s')\right]-\underset{b}{\max}\left[r(s,b)+\gamma\cdot\mathbb{E}_{s'|s,b}v(s')\right] \right|\]
		Since for $f,g$ real-valued functions, $|\underset{x}{\max}f(x)-\underset{x}{\max}g(x)|\leq\underset{x}{\max}|f(x)-g(x)|$, it follows that :
		\begin{align}
			\left|\mathcal{T}^*u(s)-\mathcal{T}^*v(s)\right| &\leq \underset{a}{\max}\big| \left[r(s,a)+\gamma\cdot\mathbb{E}_{s'|s,a}u(s')\right]-\left[r(s,a)+\gamma\cdot\mathbb{E}_{s'|s,a}v(s')\right] \big|\nonumber\\
			&= \underset{a}{\max}\big| \gamma\cdot\mathbb{E}_{s'|s,a}u(s')-\gamma\cdot\mathbb{E}_{s'|s,a}v(s') \big|\nonumber\\
			&= \underset{a}{\max}\left(\gamma\cdot\big| \mathbb{E}_{s'|s,a}\left(u(s')-v(s')\right) \big|\right)\nonumber\\
			&\leq \underset{s'}{\max}\big| \gamma\cdot\left(u(s')-v(s')\right) \big|\nonumber\\
			&= \gamma\cdot\underset{s}{\max}\big| \left(u(s)-v(s)\right) \big|\nonumber\\
			\Rightarrow\left|\mathcal{T}^*u(s)-\mathcal{T}^*v(s)\right| &\leq \gamma\cdot ||u-v||_\infty\nonumber\\
			\Rightarrow\underset{s}{\max}\left|\mathcal{T}^*u(s)-\mathcal{T}^*v(s)\right| &\leq \gamma\cdot ||u-v||_\infty\nonumber\\
			\Rightarrow||\mathcal{T}^*u(s)-\mathcal{T}^*v(s)||_\infty &\leq \gamma\cdot ||u-v||_\infty
		\end{align}
		\item We can now prove the monotonicity very simply :\newline
		\newline
		Assume that for all $s$, $v(s)\leq u(s)$. Then $ r(s,a)+\gamma\cdot\mathbb{E}_{s'|s,a}v(s')\leq r(s,a)+\gamma\cdot\mathbb{E}_{s'|s,a}u(s')$.\newline
		\newline
		Let us evaluate $\mathcal{T}^*v(s)-\mathcal{T}^*u(s)$ and see what happens :
		\begin{align}
			\mathcal{T}^*v(s)-\mathcal{T}^*u(s) ~ &=\underset{a}{\max}\left[r(s,a)+\gamma\cdot\mathbb{E}_{s'|s,a}v(s')\right]-\underset{a}{\max}\left[r(s,a)+\gamma\cdot\mathbb{E}_{s'|s,a}u(s')\right]\nonumber\\
			&\leq \underset{a}{\max}\Big( \underset{\leq~~ 0}{\underbrace{\left[r(s,a)+\gamma\cdot\mathbb{E}_{s'|s,a}v(s')\right] - \left[r(s,a)+\gamma\cdot\mathbb{E}_{s'|s,a}u(s')\right]}}\Big)\nonumber\\
			\mathcal{T}^*v(s)-\mathcal{T}^*u(s) ~ &\leq ~~0\nonumber\\
			&\nonumber\\
			\Rightarrow \mathcal{T}^*v(s) ~~ &\leq ~~ \mathcal{T}^*u(s)~~~\forall ~ s
		\end{align}
	\end{enumerate}
\end{proof}
Now, using the \textit{Banach Contraction Principle} as refined by the Proposition \ref{thm:RefinedBCP}, we can conclude that, the Bellman Optimality Operator $\mathcal{T}^*$ on $(\mathcal{V},||\cdot||_{\infty})$ has the following behaviors :
\begin{itemize}
	\item $\mathcal{T}^*$ has a unique fixed point $f^*\in \mathcal{V}$.
	\item For any starting point (function) $f_0$, we can find the fixed point $f^*$ by looking for the convergence of the sequence defined by $f_{n+1}=\mathcal{T}f_n$.\newline 
	Also, the fixed point that we will find will be the optimal value function, since it'll be the maximum (in fact, this sequence is monotonically increasing by our definition of the problem).
\end{itemize}
This fixed point $f^{*}$ is the overall optimal value function because here the equation use exactly the optimal policy $\pi^*$.\newline
We can, in the same way, define the operator for action-value functions.
\begin{defn}
	As we know, the Bellman Optimality equation for Q-values (Relation \ref{equation : Bell-Optim-action-val-func}) is given by this expression :
	\[q_{*}(s,a) = r(s,a) + \gamma\cdot\sum_{s'}p(s'|s,a)\cdot \underset{a'}{\max}~ {q_{*}(s',a')}\]
	Now, giving the same MDP $\mathcal{M}$, let $\mathcal{Q}$ be the space of bounded real-valued functions over $\mathcal{S}\times\mathcal{A}$.\newline
	We can also define, point-wise, the \textbf{Bellman Optimality Operator for the Q-Value} $\mathcal{T}^{*}_Q : Q\rightarrow Q$ as follows :
	\begin{equation}\label{equation : Bellman-Operator-for-Q}
		\left(\mathcal{T}^{*}_Q f\right)(s,a) \equiv r(s,a) + \gamma\cdot\sum_{s'}\left[p(s'|s,a)\cdot\underset{a'\in\mathcal{A}(s')}{\max}f(s',a')\right]~~~\forall f\in \mathcal{Q}
	\end{equation}
\end{defn}
The same properties can be stated for this Bellman optimality operator for the Q-value, using the same kind of proofs.\newline
The Bellman optimality operators as defined here are particular cases of what we will refer to as \textbf{Bellman Expectation Operators}. In this context, we considered the scenario of selecting the action that maximises the expected reward, without consideration of alternative policies. We now consider the policy in more general terms. This does not lead only to the optimal value function, but also to the value function corresponding to the chosen policy.

\section{Bellman Expectation Operators}
We will start with the state-value Bellman Expectation Operator. We know that, the Bellman Expectation equation is given by the Relation \ref{equation : Bellman-for-state-consistency} :
\[v_{\pi}(s) = \sum_{a}\pi(a|s)\cdot\left(r(s,a)+ \gamma\cdot\sum_{s'}p(s'|s,a) v_\pi(s')\right)\]
We can define by analogy the Bellman Expectation Operator by :
\begin{equation}\label{eq:BellmanExpOperat}
	\left(\mathcal{T}^{\pi}_v f\right)(s) = \sum_{a}\pi(a|s)\cdot\left(r(s,a)+ \gamma\cdot\sum_{s'}p(s'|s,a) f(s')\right) ~~\forall f\in \mathcal{V}
\end{equation}
\begin{pro}
	We claim that, the operator $\mathcal{T}^\pi_v$, as defined by the Expression \ref{eq:BellmanExpOperat} is :
	\begin{enumerate}
		\item A $\gamma-$contraction mapping.
		\item A monotonic mapping.
	\end{enumerate}
\end{pro}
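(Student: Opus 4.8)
The plan is to follow the same template used above for the Bellman Optimality Operator $\mathcal{T}^*$, but to exploit a key structural simplification: unlike the optimality operator, $\mathcal{T}^\pi_v$ involves no maximum over actions, only a weighted average against the probabilities $\pi(a|s)$. This means $\mathcal{T}^\pi_v$ is \emph{affine} in $f$, and so neither proof will require the inequality $|\max f - \max g|\leq\max|f-g|$ that was needed before.

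For the contraction property, I would fix two functions $u,v\in\mathcal{V}$ and an arbitrary state $s$, then form the difference $(\mathcal{T}^\pi_v u)(s)-(\mathcal{T}^\pi_v v)(s)$. The reward terms $r(s,a)$ are independent of $f$ and cancel immediately, leaving
\[
(\mathcal{T}^\pi_v u)(s)-(\mathcal{T}^\pi_v v)(s) = \gamma\sum_{a}\pi(a|s)\sum_{s'}p(s'|s,a)\big(u(s')-v(s')\big).
\]
Taking absolute values, applying the triangle inequality, and bounding $|u(s')-v(s')|\leq\|u-v\|_\infty$, the two nested sums collapse because $\sum_{s'}p(s'|s,a)=1$ and $\sum_{a}\pi(a|s)=1$. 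This yields $|(\mathcal{T}^\pi_v u)(s)-(\mathcal{T}^\pi_v v)(s)|\leq\gamma\|u-v\|_\infty$; taking the supremum over $s$ then gives $\|\mathcal{T}^\pi_v u-\mathcal{T}^\pi_v v\|_\infty\leq\gamma\|u-v\|_\infty$, which is the claimed $\gamma$-contraction since $\gamma\in[0,1)$.

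For monotonicity, I would assume $u\leq v$ pointwise and observe that the very same difference expression is a non-negative linear combination of the terms $u(s')-v(s')\leq 0$, since $\gamma\geq 0$, $\pi(a|s)\geq 0$, and $p(s'|s,a)\geq 0$. Hence $(\mathcal{T}^\pi_v u)(s)\leq(\mathcal{T}^\pi_v v)(s)$ for every $s$, which is exactly monotonicity.

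I do not expect any serious obstacle: the affine structure reduces both claims to essentially bookkeeping. The one point deserving care is the handling of the two nested summations, namely confirming that both $\pi(\cdot|s)$ and $p(\cdot|s,a)$ are genuine probability distributions summing to one, so that the double sum telescopes to the constant factor $\|u-v\|_\infty$; everything else follows directly from the triangle inequality and the non-negativity of the coefficients.
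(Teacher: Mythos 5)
Your proposal is correct and follows essentially the same route as the paper's proof: cancel the reward terms, factor out $\gamma$, bound the resulting weighted average of $u(s')-v(s')$ by $\|u-v\|_\infty$ using that $\pi(\cdot|s)$ and $p(\cdot|s,a)$ sum to one, then take the supremum over $s$; monotonicity likewise follows from non-negativity of the coefficients. Your explicit remark that the affine structure removes any need for the $|\max f-\max g|\leq\max|f-g|$ inequality is a nice clarification, but the underlying argument is the same.
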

\begin{proof}
	Let consider two value functions $u$ and $v$ in $\mathcal{V}$.
	\begin{enumerate}
		\item Let us compute the absolute value between the transformations of those two functions and see what happens :
		\begin{align}
			\left|\mathcal{T}^{\pi}_v u(s)-\mathcal{T}^{\pi}_v u(s)\right| &= \left| \sum_{a}\pi(a|s)\cdot\left(\gamma\cdot\sum_{s'}p(s'|s,a) u(s')- \gamma\cdot\sum_{s'}p(s'|s,a) v(s')\right) \right|\nonumber\\
			&= \gamma\cdot \left| \sum_{a}\pi(a|s)\cdot\left(\sum_{s'}p(s'|s,a) u(s')- \sum_{s'}p(s'|s,a) v(s')\right) \right|\nonumber\\
			&= \gamma\cdot \left| \sum_{a}\pi(a|s)\cdot\left(\mathbb{E}_{s'|s,a} u(s')- \mathbb{E}_{s'|s,a} v(s')\right) \right|\nonumber\\
			&\leq \gamma\cdot \underset{s'}{\max}\left|u(s')- v(s') \right|\nonumber\\
			\Rightarrow \left|\mathcal{T}^{\pi}_v u(s)-\mathcal{T}^{\pi}_v u(s)\right| &\leq \gamma\cdot \underset{s}{\max}\left|u(s)- v(s) \right|\nonumber\\
			\Rightarrow \underset{s}{\max}\left|\mathcal{T}^{\pi}_v u(s)-\mathcal{T}^{\pi}_v u(s)\right| &\leq \gamma\cdot ||u(s)- v(s) ||_\infty\nonumber\\
			\Rightarrow ||\mathcal{T}^{\pi}_v u(s)-\mathcal{T}^{\pi}_v u(s)||_\infty &\leq \gamma\cdot ||u(s)- v(s) ||_\infty
		\end{align}
		This proves that the Bellman Expectation Operator for the state value function is a contraction mapping.
		\item Let us now prove the monotonicity :\newline
		Assume that $u(s)\leq v(s)$ in all states, then we will get :
		\begin{align}
			\mathcal{T}^{\pi}_v u(s)-\mathcal{T}^{\pi}_v v(s) &= \gamma\cdot \sum_{a}\pi(a|s)\cdot\left(\mathbb{E}_{s'|s,a} u(s')- \mathbb{E}_{s'|s,a} v(s')\right)\nonumber\\
			&= \gamma\cdot \sum_{a}\pi(a|s)\cdot\left(\mathbb{E}_{s'|s,a}\underset{u(s)\leq v(s)~\forall s}{\underbrace{\left( u(s')- v(s')\right)}}\right)\nonumber\\
			&= \gamma\cdot \sum_{a}\pi(a|s)\cdot\left(\mathbb{E}_{s'|s,a}\left( u(s')- v(s')\right)\right)\leq 0\nonumber\\
			\Rightarrow \mathcal{T}^{\pi}_v u(s)-\mathcal{T}^{\pi}_v v(s) &\leq 0\nonumber\\
			\Rightarrow \mathcal{T}^{\pi}_v u(s) &\leq \mathcal{T}^{\pi}_v v(s) ~~~\text{i.e. $\mathcal{T}^{\pi}_v$ is a monotonic mapping}
		\end{align}
	\end{enumerate}
\end{proof}
From the properties we've just proven, using the Banach Contraction Principle as refined in the Proposition \ref{thm:RefinedBCP}, we can conclude that :
\begin{itemize}
	\item $\mathcal{T}^\pi_v$ has a unique fixed point $f^{\pi}\in \mathcal{V}$.
	\item For any starting point $f_0$, we can find the fixed point $f^{\pi}$ by looking for the convergence of the sequence defined by $f_{n+1}^{\pi}=\mathcal{T}f_n^{\pi}$.
\end{itemize}
This fixed point $f^{\pi}$ is the optimal value function with respect to the policy $\pi$ we are following.

\begin{rem}
	For the action value function, the Bellman Expectation Operator $\mathcal{T}^\pi_Q$ and the Bellman Optimality Operator $\mathcal{T}^*_Q$ as defined by the Expression \ref{equation : Bellman-Operator-for-Q} have exactly the same properties.\newline
	Note also that, the Bellman Expectation Operator, by analogy with the formula \ref{eq:QvalueGen} is given by :
	\begin{equation}\label{eq:BellExpeOperQValues}
		\left(\mathcal{T}^\pi_Q f\right)(s,a) = r(s,a)+\gamma\cdot\sum_{s'}p(s'|s,a)\left(\sum_{a'}\pi(a'|s')\cdot f(s',a')\right)
	\end{equation}
\end{rem}
Here we have presented in a condensed way, the Bellman Operators. Now, we are able to say precisely how the Reinforcement Learning algorithms work really. This is why, in the following section, we are talking about the policy evaluation and policy iteration methods but now in the operator setting and knowing all the properties of those operators, it shows us why the RL algorithm come to work.
\section{Policy evaluation and iteration in operators setting}
From these properties (of the Bellman Operator), we can now deduce two ways of searching for the state-value function and the best policy. There are the well-known foundational \textbf{policy evaluation} and \textbf{policy improvement} (but this last one will be illustrated through policy iteration) algorithms (as presented in the previous chapter) :
\begin{enumerate}
	\item \textbf{Policy evaluation :}\newline
	The policy evaluation, is a method used to find the true state-value function associated to an MDP while following a specific policy. It's based on the properties of the Bellman operator. Starting from a random distribution of value functions, until reaching the true one as follows :
	\begin{itemize}
		\item Start with $v_0$ a random initial value function
		\item  Update the value function using : $v_{k+1}\leftarrow T^{\pi}v_k$
	\end{itemize}
	By the Banach Fixed-point Principle, it will converge to $v_{\pi}$ as $k\rightarrow \infty$
	\item \textbf{Policy iteration :}\newline
	The policy iteration algorithm is used to find the best policy, by updating concurrently the policy (policy improvement) as well as the associated value-functions. It is done in the following way :
	\begin{itemize}
		\item Start with a random policy $\pi_0$
		\item Concurrent iterations :
		\begin{itemize}
			\item[*] Policy evaluation : $v_{k+1}\leftarrow T^{\pi_i}v_k$
			\item [*] Policy improvement : Greedily improve the policy using : $\pi_{i+1} = \underset{a\in\mathcal{A}(s)}{argmax~}q_{\pi_i}(s,a)$
		\end{itemize}
	\end{itemize}
	Again, using the Banach Fixed-point Principle, it will converge to the best policy $\pi$ and the associated value function $v_\pi$ as $k\rightarrow \infty$.
\end{enumerate}
This is sufficient to justify how Reinforcement Learning algorithms work because it explains how the two foundational methods work.
\section*{Chapter Conclusion}
In this chapter we have talked about Bellman operators. We first presented the Bellman optimality operators (for both the state and the state-action value functions), before presenting a general formulation via the Bellman expectation operators (again for both the state and the state-action value functions). We have shown that these operators are $\gamma-$contraction and monotonic mappings, i.e. they have only one fixed point and that fixed point is the optimum. This is very important because it justifies why and how RL algorithms work from a fundamental point of view. In the next chapter we will now discuss some limitations of the Bellman operators in their classical expression and show some alternatives that seem interesting, even if they require further investments.
\chapter{Contributions, implementations and analysis}\label{chap4}
In this chapter we discuss some limitations of the classical Bellman operators as introduced in the previous chapter. We present how we have to make a trade-off between optimality and efficiency, and finally we show some practical results that we have obtained through experiments, which suggest that we need to think more about some refinements that should be made either to the Bellman operators or to the associated value functions.
\section{Alternatives to the Bellman Operator}
As we have seen, value-based reinforcement learning algorithms solve decision making problems through iterative application of a convergent operator, and an initial value function is recursively improved. Many other research works have examined alternatives to the Bellman operator \cite{asadi2017alternative,azar2011speedy, bellemare2016increasing, bertsekas2012q,lu2018general, watkins1989learning}, but there are two which seem to be more promising, this is the reason why we will take inspiration from them. The first one is the \textbf{consistent Bellman Operator} \cite{bellemare2016increasing}, and the second one is the family of \textbf{Robust Stochastic Operators} \cite{lu2018general}. These two operators appear to be more interesting than those suggested before \cite{lu2018general}, but for the second, \textbf{we'll suggest a refinement of it which will remain good (we postulate) even without the stochasticity, improving the stability}.
\subsection{Motivation \cite{bellemare2016increasing, lu2018general}}
The motivation for new expressions of the Bellman Operator is simple. While Q-learning based methods continue to be successfully used in RL to determine the optimal policy,  there is always a need to improve the convergence speed, accuracy and robustness of these algorithms. Now considering the fact that most of the time there are intrinsic error approximation (for example in the usual case of using discrete MDP to approximate a continuous system), the optimal state-action value function obtained through the Bellman
operator does not always describe the value of stationary policies. And most importantly, when the differences between the
optimal state-action value function and the suboptimal state-action value functions are small, this can lead to errors in identifying the true optimal actions. This is why, even if the classical Bellman Operators can solve problems in a perfect discrete setting, we need to refine it to be general because we know that for many real-world problems of our interest, discrete-time systems come from approximating continuous systems. So, there is always an intrinsic error, thus a need to integrate the corrector in the Operator.\newline
Let us now study the effectiveness of the two aforementioned operators and find some insights on the general method of determining optimal policies in RL.
\subsection{The Consistent Bellman Operator}
This operator, is primarily defined by the authors of \cite{bellemare2016increasing} for the action-value function. Here is the definition :
\begin{equation}\label{ConsistentBellman}
	\mathcal{T}_c f(s,a) = r(s,a) + \gamma\cdot\sum_{s'}p(s'|s,a)\cdot \left[\mathbb{I}_{\{s\neq s'\}}\underset{a'}{\max}~f(s',a')+\mathbb{I}_{\{s= s'\}}f(s,a)\right] ~~~ \text{with $f\in \mathcal{Q}$}
\end{equation}
Knowing that $\mathbb{I}$ is the indicator function, we claim also that, the following properties holds for the consistent Bellman Operator given by the Expression \ref{ConsistentBellman} :
\begin{enumerate}
	\item $\mathcal{T}_c$ is a contraction mapping,
	\item $\mathcal{T}_c$ is monotonic.
\end{enumerate}
\begin{proof}
	First, let's do some refinements :
	\begin{itemize}
		\item We will replace for simplicity 
		\[\sum_{s'}p(s'|s,a)\cdot(f) \text{~~~~by~~~~} \mathbb{E}_\mathbb{P} (f)\]
		\item We will refine the action-value function as follows :
		\begin{equation}\label{QRefinement}
			f_s(s',a') = \left\{\begin{array}{lcl}
				f(s',a') & \text{if} & s\neq s'\\
				f(s,a) & \text{if} & s= s'\\
			\end{array} ~~~\text{for}~~~f\in \mathcal{Q}\right.
		\end{equation}
	\end{itemize}
	Now we can rewrite the consistent operator in \ref{ConsistentBellman} as follows :
	\begin{equation}\label{ConsistentBellmanRef}
		\mathcal{T}_c f(s,a) = r(s,a) + \gamma\cdot\mathbb{E}_\mathbb{P}\left[\max_{a'}~f_s(s',a')\right] ~~~ \text{with $f\in \mathcal{Q}$}
	\end{equation}
	We can now start the proof :
	\begin{enumerate}
		\item Let $u, v \in \mathcal{Q}$, we will have :
		\begin{align}
			\left|\mathcal{T}_c u(s,a) - \mathcal{T}_c v(s,a)\right| &= \left| \gamma\cdot\mathbb{E}_\mathbb{P}\left(\underset{a'}{\max}~u_s(s',a')\right)-\gamma\cdot\mathbb{E}_\mathbb{P}\left(\underset{b'}{\max}~v_s(s',b')\right) \right|\nonumber\\
			&= \gamma\cdot\left| \mathbb{E}_\mathbb{P}\left(\underset{a'}{\max}~u_s(s',a')\right)-\mathbb{E}_\mathbb{P}\left(\underset{a'}{\max}~v_s(s',a')\right) \right|\nonumber\\
			&\leq \gamma\cdot\left| \underset{a'}{\max}~\mathbb{E}_\mathbb{P}\left(u_s(s',a')-v_s(s',a')\right) \right|\nonumber\\
			&\leq \gamma\cdot\underset{a'}{\max}~\left| \mathbb{E}_\mathbb{P}\left(u_s(s',a')-v_s(s',a')\right) \right|\nonumber\\
			&\leq \gamma\cdot\underset{s',a'}{\max}~\left|u_s(s',a')-v_s(s',a') \right|\nonumber\\
			&= \gamma\cdot\underset{s,a}{\max}~\left|u_s(s,a)-v_s(s,a) \right|\nonumber\\
			&= \gamma\cdot||u_s(s,a)-v_s(s,a) ||_\infty\nonumber\\
			\Rightarrow \left|\mathcal{T}_c u(s,a) - \mathcal{T}_c v(s,a)\right| &\leq \gamma\cdot||u_s(s,a)-v_s(s,a) ||_\infty\nonumber\\
			\Rightarrow ||\mathcal{T}_c u(s,a) - \mathcal{T}_c v(s,a)||_\infty &\leq \gamma\cdot||u_s(s,a)-v_s(s,a) ||_\infty
		\end{align}
		\item Now, for us to prove the monotonicity, we have to consider two state-action value functions $u$ and $v$ such that $u(s,a)\leq v(s,a) ~~\forall (s,a)\in \mathcal{S}\times\mathcal{A}$. From the definition we gave through the Expression \ref{QRefinement}, this also means that $u_s(s,a)\leq v_s(s,a)$. Based now on the proof we just provided for the monotonicity, we have :
		\begin{align}
			\mathcal{T}_c u(s,a) - \mathcal{T}_c v(s,a) &\leq \gamma\cdot\underset{a'}{\max}~\left( \mathbb{E}_\mathbb{P}\underset{u_s\leq v_s}{\underbrace{\left(u_s(s',a')-v_s(s',a')\right)}}\right)\nonumber\\
			\Rightarrow \mathcal{T}_c u(s,a) - \mathcal{T}_c v(s,a) &\leq 0 \nonumber\\
			\Rightarrow \mathcal{T}_c u(s,a) &\leq \mathcal{T}_c v(s,a) ~~~\forall~ u,v\in \mathcal{Q}
		\end{align}
	\end{enumerate}
\end{proof}
From this proof, we can conclude that $\mathcal{T}_c$ has only one fixed point and that fixed point is optimal for this setup (that is the setup in which we are using the consistent Bellman equation instead of the classical one).\newline
\textbf{The question that remains is how this fixed point relates to the one we have found using the classical Bellman operator.}\newline
The most important thing is that, we are sure to converge to a unique fixed-point. As we don't yet have the mathematical tool to say something about the relation between that fixed point and the one which will be found using the classical Bellman Operator, we can just wait and see how the operator will behave in practice in comparison to the Bellman Operator.
\subsection{Modified Robust Stochastic Operator}
Here, the proposed operator take inspirations in both the articles \cite{lu2018general} and \cite{bellemare2016increasing} but mostly that former article. This is more general (expectation operator) and different from the Robust Stochastic Operator suggested in \cite{lu2018general}.\newline
We know that, as an RL algorithm get better, if we express $v_\pi(s)$ by :
\[v_\pi(s) = \sum_{a}\pi(a|s)\cdot q_\pi(s,a),\]
the difference that we will call \textbf{advantage learning}, given by
\[A(s,a) = q_\pi(s,a)-v_\pi(s),\]
always gives an idea on how well is our policy (which should be learning how to often choose better actions) as well as the value function. The quantity $\left|A(s,a)\right|$ should get smaller as we are learning the good policy. \newline
Now, let's say we modify the \textbf{Bellman Expectation Operator for the action-value function} as defined by the Relation \ref{eq:BellExpeOperQValues} in such a way that we integrate the idea of the \textbf{advantage learning} directly in the Bellman Operator instead of waiting applying it in the learning process as for some policy gradient methods \cite{graesser2019foundations}. Now, if we call that new operator $\mathcal{T}_a$, it will be defined for all $f\in\mathcal{Q}$ as follows :
\begin{equation}\label{AdvantageOperator}
	\left(\mathcal{T}_a f\right) = r(s,a)+\gamma\cdot\sum_{s'}p(s'|s,a)\left(\sum_{a'}\pi(a'|s')\cdot f(s',a')\right) + \beta\cdot\underset{A(s,a)\text{:~advantage learning}}{\underbrace{\left[f(s,a)-\sum_{a}\pi(a|s) f(s,a)\right]}}
\end{equation}
We can now study the properties of the operator defined by this Expression \ref{AdvantageOperator}. The coefficient $\beta$ is any real now but we will define it properly at the end of this theoretical investigation.\newline
\begin{pro}
	The operator $\mathcal{T}_a$ as defined by the Relation \ref{AdvantageOperator} is not a contraction mapping.
\end{pro}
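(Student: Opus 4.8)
The plan is to exploit the fact that $\mathcal{T}_a$ is affine, so that the whole contraction question collapses to a statement about a single linear operator, and then to exhibit an explicit eigenfunction whose eigenvalue is exactly $\beta$. First I would note that the reward term $r(s,a)$ cancels in differences, so that for any $u,v\in\mathcal{Q}$ the difference $\mathcal{T}_a u-\mathcal{T}_a v$ depends only on $g:=u-v$ and equals $Lg$, where
\[
(Lg)(s,a)=\gamma\sum_{s'}p(s'|s,a)\sum_{a'}\pi(a'|s')\,g(s',a')+\beta\Big[g(s,a)-\sum_{b}\pi(b|s)\,g(s,b)\Big].
\]
Writing $\bar g(s):=\sum_{b}\pi(b|s)g(s,b)$ for the action-averaged part, this reads $(Lg)(s,a)=\gamma\,\mathbb{E}_{s'|s,a}\bar g(s')+\beta\big(g(s,a)-\bar g(s)\big)$. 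Proving that $\mathcal{T}_a$ is not a contraction then reduces to producing a single $g\neq 0$ with $||Lg||_\infty\geq ||g||_\infty$, since any admissible contraction constant $\alpha\in[0,1)$ would have to dominate the ratio $||Lg||_\infty/||g||_\infty$ for every $g$.

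Second, I would construct $g$ so that the averaging part vanishes identically. Taking a setting with at least two actions and, for concreteness, a uniform two-action policy $\pi(a_1|s)=\pi(a_2|s)=\tfrac{1}{2}$, set $g(s,a_1)=+1$ and $g(s,a_2)=-1$ at every state (for a non-uniform $\pi$ one instead puts $g(s,a_1)=1-\pi(a_1|s)$, $g(s,a_2)=-\pi(a_1|s)$, which also has zero mean). Then $\bar g(s')=0$ for all $s'$, so both the discounted expectation term and the subtracted mean inside the advantage term drop out, leaving $(Lg)(s,a)=\beta\,g(s,a)$ irrespective of the transition kernel $p$. Hence $g$ is an eigenfunction of $L$ with eigenvalue $\beta$, and $||\mathcal{T}_a u-\mathcal{T}_a v||_\infty=|\beta|\cdot||u-v||_\infty$. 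Because $\beta$ is an unrestricted real parameter, this pins the best Lipschitz constant of $\mathcal{T}_a$ at least at $|\beta|$; as soon as $|\beta|\geq 1$ no $\alpha\in[0,1)$ can work, so $\mathcal{T}_a$ fails to be a contraction, and it strictly expands this $g$ once $|\beta|>1$.

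The main obstacle is being honest about the role of $\beta$. A triangle-inequality bound gives $||L||\leq\gamma\,||M||+|\beta|\,||P||\leq\gamma+2|\beta|$, where $M$ is the averaging operator and $P$ the advantage projection $Pg=g-\bar g$; this is strictly below $1$ when $|\beta|<(1-\gamma)/2$, so for small $|\beta|$ the operator actually \emph{is} a contraction. The proposition can therefore only be read as a statement about the operator with $\beta$ left free — precisely the regime preceding the later remark that $\beta$ will be ``defined properly.'' The decisive structural fact the proof must make explicit is that the zero-advantage-mean subspace is invariant under $L$ and that $L$ acts there as multiplication by $\beta$, which is exactly what destroys contraction once $|\beta|\ge 1$. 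I would close by observing that this is precisely why a constraint on $\beta$ (or a further modification of $\mathcal{T}_a$) is required before the Banach-contraction machinery of Proposition \ref{thm:RefinedBCP} can be invoked to guarantee a unique fixed point.
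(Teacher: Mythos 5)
Your proof is correct, and it is genuinely stronger than the paper's own argument, which follows the same opening move but never completes it. The paper expands $\mathcal{T}_a u - \mathcal{T}_a v$ exactly as you do and then simply asserts that the result exceeds $\beta\cdot|u-v|$ ``for some couples $(u,v)$ and $\beta$,'' without exhibiting any such pair; your contribution is to actually produce one. The zero-advantage-mean function $g$ (e.g.\ $g(s,a_1)=+1$, $g(s,a_2)=-1$ under a uniform two-action policy, so that $\bar g\equiv 0$ kills both the discounted-expectation term and the subtracted mean) makes the difference operator act as exact multiplication by $\beta$, independently of the transition kernel, so the best Lipschitz constant of $\mathcal{T}_a$ is at least $|\beta|$ and contraction fails as soon as $|\beta|\geq 1$. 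This turns the paper's gesture into a theorem. Moreover, your caveat about the small-$\beta$ regime is a genuine correction rather than a quibble: since $||\mathcal{T}_a u-\mathcal{T}_a v||_\infty\leq\left(\gamma+2|\beta|\right)\cdot||u-v||_\infty$, the operator \emph{is} a contraction whenever $|\beta|<(1-\gamma)/2$, so the paper's concluding sentence --- that ``whatever $\beta$ will be'' there exist situations where $\mathcal{T}_a$ is not a contraction --- is false as stated; the proposition is only defensible when read, as you read it, as a statement about $\beta$ left unrestricted (or taken $\geq 1$). Two further points you make that the paper misses and that are worth recording: the logical observation that a lower bound of $\beta$ on the Lipschitz ratio refutes contraction only when $|\beta|\geq 1$ (so the paper's displayed inequality, even granted, does not by itself prove the claim for $\beta<1$), and the structural fact that with a singleton action set the advantage term vanishes identically and $\mathcal{T}_a$ collapses to the ordinary contractive Bellman expectation operator, so your standing assumption of at least two actions is necessary, not cosmetic.
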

\begin{proof}
	Let $u(s,a)\equiv u$ and $v(s,a)\equiv v$ be two elements of $\mathcal{Q}$ :
	\begin{align}
		\left|\mathcal{T}_a u - \mathcal{T}_a v\right| &= \left|\gamma\mathbb{E}_\mathbb{P}\left(\sum_{a'}\pi(a'|s')\left(u(s',a')-v(s',a')\right)\right) + \beta\left[(u-v)-\sum_{a}\pi(a|s)(u-v)\right]\right|\nonumber\\
		&> \beta\cdot\Big|u(s,a)-v(s,a)\Big| ~~~\text{for some couples $(u,v)$~ and ~ $\beta$}\nonumber\\
		\Rightarrow ||\mathcal{T}_a u - \mathcal{T}_a v||_\infty &> \beta\cdot||u-v||_\infty
	\end{align}
	So, as we can see, whatever $\beta$ will be, there exist situations where the operator $\mathcal{T}_a$ is not a contration.
\end{proof}
Since the operator $\mathcal{T}_a$ is not a contraction, we cannot assess its convergence. But, even if we are not sure about the convergence, we can still say something about its behavior. For that, inspired by the definitions given in \cite{lu2018general,bellemare2016increasing}, let's define two important properties that we will call the properties of a \textbf{well-behaving operator}. There are the optimality preservation and the gap increasing properties.\newline
\textbf{Note} : in the following definitions, we'll change a little bit our conventions by writing value functions in capital letters. This will help us to write properly operators as indices.
\begin{defn}[Optimality preservation]\label{OptimPreserv}
	Let call $\mathcal{T}_a$ an alternative to the Bellman operator and $\mathcal{T}_b$ the classical Bellman operator. We will say that $\mathcal{T}_a$ preserves the optimality property if :
	\[Q_{k,\mathcal{T}_b}<V_{k,\mathcal{T}_b}\Rightarrow Q_{k,\mathcal{T}_a}<V_{k,\mathcal{T}_a}~~~~\text{when $k\to \infty$}\]
	$Q_{k,\mathcal{T}}$ represents the action value function that we find at the iteration $k$, using the operator $\mathcal{T}$ and $V_{k,\mathcal{T}}$ the associated state value function.
\end{defn}
\begin{defn}[Gap increasing]\label{GapIncreasing}
	Using the same notation as previously, we will say that an alternative operator $\mathcal{T}_a$ to the Bellman one $\mathcal{T}_b$ induces the gap increasing property if, for each state $s\in\mathcal{S}$ and each feasible action in that state $a\in\mathcal{A}(s)$ :
	\[\left|\lim\limits_{k\to\infty}\left[Q_{k,\mathcal{T}_b} - V_{k,\mathcal{T}_b}\right]\right| \leq \left|\lim\limits_{k\to\infty} \left[Q_{k,\mathcal{T}_a} - V_{k,\mathcal{T}_a}\right]\right|\]
	Knowing that at each state, $V_{k,\mathcal{T}}$ is an average of all $Q_{k,\mathcal{T}}$ that we can get by making a certain action from that state.
\end{defn}
The \textbf{optimality preservation} and the \textbf{gap increasing} properties represent respectively :
\begin{itemize}
	\item How well the operator is likely to behave while looking for the fixed point and,
	\item How is the model able to differentiate the values of suboptimal actions with those for optimal actions.
\end{itemize}
\begin{pro}
	We claim that, the operator defined by the Expression \ref{AdvantageOperator}, even if it is not a contraction, is both optimality preserving and gap increasing.
\end{pro}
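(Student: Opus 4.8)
The plan is to exploit the decomposition $\mathcal{T}_a Q = \mathcal{T}^\pi_Q Q + \beta\bigl(Q - V\bigr)$, where $V(s) = \sum_a \pi(a|s)Q(s,a)$ is exactly the state value attached to the action value $Q$, so the bracketed term is the advantage $A(s,a)$. The entire argument rests on one structural observation: this correction averages to zero against the policy. First I would check that $\sum_a \pi(a|s)\bigl[Q(s,a) - V(s)\bigr] = 0$, so that applying $\mathcal{T}_a$ and then averaging over actions annihilates the $\beta$-term. Writing $V_{k,\mathcal{T}_a}(s) = \sum_a \pi(a|s)\,Q_{k,\mathcal{T}_a}(s,a)$, this gives $V_{k+1,\mathcal{T}_a} = \mathcal{T}^\pi_v V_{k,\mathcal{T}_a}$, which is the very recursion the classical operator produces for $V_{k,\mathcal{T}_b}$. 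Since both state-value sequences start from the same initialisation and obey the same recursion driven by the contraction $\mathcal{T}^\pi_v$, they coincide for every $k$ and both converge to the common fixed point $V^\pi$.

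Next I would track the advantage sequences $A^a_k := Q_{k,\mathcal{T}_a} - V_{k,\mathcal{T}_a}$ and $A^b_k := Q_{k,\mathcal{T}_b} - V_{k,\mathcal{T}_b}$. Because $\mathcal{T}^\pi_Q Q_k(s,a) = r(s,a) + \gamma\sum_{s'}p(s'|s,a)V_k(s')$ depends on $Q_k$ only through $V_k$, and the $V$-iterates agree, the one-step advantage $\mathcal{T}^\pi_Q Q_k - \mathcal{T}^\pi_v V_k$ is identical under both operators and equals $A^b_{k+1}$. A short computation then yields the affine recursion $A^a_{k+1} = A^b_{k+1} + \beta A^a_k$, solved by $A^a_k = \sum_{j=0}^{k-1}\beta^j A^b_{k-j} + \beta^k A_0$. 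Using $A^b_k \to A^\pi := Q^\pi - V^\pi$ together with $\beta \in [0,1)$, I would pass to the limit to obtain $\lim_{k\to\infty} A^a_k = \tfrac{1}{1-\beta}A^\pi$.

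With this closed form both properties fall out. For optimality preservation (Definition \ref{OptimPreserv}), the hypothesis $Q_{k,\mathcal{T}_b}<V_{k,\mathcal{T}_b}$ forces the limiting advantage $A^\pi$ to be non-positive, and multiplying by the positive factor $\tfrac{1}{1-\beta}$ preserves its sign, so $Q_{k,\mathcal{T}_a}<V_{k,\mathcal{T}_a}$ holds in the limit. For gap increasing (Definition \ref{GapIncreasing}), the limiting gap satisfies $\bigl|\tfrac{1}{1-\beta}A^\pi\bigr| = \tfrac{1}{1-\beta}\,|A^\pi| \ge |A^\pi|$ precisely because $0 \le 1-\beta \le 1$ on $\beta \in [0,1)$, which is exactly the required inequality and simultaneously pins down the admissible range of $\beta$ promised earlier in the text.

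The hard part will be the limit step: since $\mathcal{T}_a$ is not a contraction I cannot invoke Banach directly, so I must justify that $\lim_k A^a_k$ exists and equals $A^\pi/(1-\beta)$. I would do this by splitting the convolution $\sum_j \beta^j A^b_{k-j}$ into a finite head, on which $A^b_{k-j}$ is uniformly close to $A^\pi$, and a geometric tail bounded by $\|A^b\|_\infty\,|\beta|^{J+1}/(1-|\beta|)$ — an Abel/Ces\`aro-type estimate that drives the sum to $A^\pi\sum_{j\ge 0}\beta^j$. The only delicate points are confirming that the limits in the two definitions are well defined (which the $V$-convergence secures even though the $Q$-iterates lack an a priori contraction) and verifying that this analysis is exactly what singles out $\beta \in [0,1)$ as the correct regime.
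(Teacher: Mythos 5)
Your proof is correct, but it takes a genuinely different---and substantially sharper---route than the paper's. The paper argues by direct inequality manipulation: it sets $K=\min_a\bigl(Q_{k-1}(s,a)-V_{k-1}(s)\bigr)$, adds $\beta K$ to both sides of the hypothesis $Bell(Q_{k-1})<\sum_a\pi(a|s)\cdot Bell(Q_{k-1})$, and then passes from $K$ back to $Q_{k-1}-V_{k-1}$ via heuristic appeals to the policy ``getting better'' and to inequalities holding ``almost surely'' as $k\to\infty$; the gap-increasing part is then read off from the same inequality together with a sign claim at convergence. Your argument instead rests on the single algebraic observation that the advantage correction is policy-orthogonal ($\sum_a\pi(a|s)\,[Q(s,a)-V(s)]=0$), so the state-value iterates under $\mathcal{T}_a$ and $\mathcal{T}_b$ obey the \emph{same} contraction recursion $V_{k+1}=\mathcal{T}^\pi_v V_k$ and therefore coincide; this yields the exact affine recursion $A^a_{k+1}=A^b_{k+1}+\beta A^a_k$, whose solution as a discrete convolution, passed to the limit by your Abel-type estimate, gives the closed form $\lim_k A^a_k=\tfrac{1}{1-\beta}A^\pi$. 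That closed form delivers both properties simultaneously, quantifies the gap inflation as exactly the factor $\tfrac{1}{1-\beta}\geq 1$, pins down $\beta\in[0,1)$ as the admissible regime (the paper leaves $\beta$ unconstrained at this point and only imposes conditions afterwards), and---as a bonus the paper does not obtain---shows that the $\mathcal{T}_a$-iterates actually \emph{converge} (to $V^\pi+\tfrac{1}{1-\beta}A^\pi$) even though $\mathcal{T}_a$ is not a contraction, thereby resolving the question the paper explicitly leaves open about how its limit relates to the classical fixed point. Two small caveats: in the optimality-preservation step, limits of strict inequalities are a priori only weak, so you need the limiting classical advantage $A^\pi$ to be strictly negative to conclude strict negativity for $\mathcal{T}_a$ (the paper's own ``when $k\to\infty$'' phrasing has the same ambiguity); and your analysis treats a fixed constant $\beta$, not the iteration-dependent $\beta_{i,j}$ the paper introduces later, which is however exactly the setting of the proposition as stated.
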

\begin{proof} Let's prove those properties in the same order :
	\begin{enumerate}
		\item \textbf{Optimality preservation} : We know that $Q_{k,\mathcal{T}} = \mathcal{T}Q_{k-1}$ and $V_{k, \mathcal{T}_b} = \displaystyle\sum_a \pi(a|s). Bell(Q_{k-1})$. Now, using the Definition \ref{OptimPreserv}, we assume that $Q_{k,\mathcal{T}_b}<V_{k,\mathcal{T}_b}$ and we will deduce the relation between $Q_{k,\mathcal{T}_a}$ and $V_{k,\mathcal{T}_a}$. To make the notation easy to manage, we will do this replacement :
		\[r(s,a)+\gamma\cdot\mathbb{E}_\mathbb{P}\left(\sum_{a'}\pi(a'|s')\cdot Q_{k-1}(s',a')\right) = Bell\left(Q_{k-1}\right)\]
		Now we can start (knowing that we are working with $k\to\infty$) :
		\begin{align}
			Q_{k,\mathcal{T}_b} &< V_{k,\mathcal{T}_b}\nonumber\\
			\Rightarrow \mathcal{T}_bQ_{k-1} &< V_{k,\mathcal{T}_b}\nonumber\\
			\Rightarrow Bell\left(Q_{k-1}\right) &< \sum_{a}\pi(a|s)\cdot Bell\left(Q_{k-1}\right)\nonumber
            \end{align}
   without a loss of generality, let $K=\min_a{\Big(Q_{k-1}(s,a)-V_{k-1}(s)\Big)}$. Then
            \begin{align}
            Bell\left(Q_{k-1}\right) &< \sum_{a}\pi(a|s)\cdot Bell\left(Q_{k-1}\right)\nonumber\\
			\Rightarrow Bell\left(Q_{k-1}\right) + \beta\cdot K &< \sum_{a}\pi(a|s)\cdot Bell\left(Q_{k-1}\right)+\beta\cdot K\cdot1\nonumber\\
			\Rightarrow Bell\left(Q_{k-1}\right) + \beta\cdot K &< \sum_{a}\pi(a|s)\cdot Bell\left(Q_{k-1}\right)+\beta\cdot K\cdot\sum_{a}\pi(a|s)\nonumber\\
			\Rightarrow Bell\left(Q_{k-1}\right) + \beta\cdot K &< \sum_{a}\pi(a|s)\cdot Bell\left(Q_{k-1}\right)+\beta\cdot\sum_{a}\pi(a|s)\cdot K\nonumber
            \end{align}
            we know that $K\leq\sum_{a}\pi(a|s)\cdot\left(Q_{k-1}(s,a)-V_{k-1}(s)\right) ~~\forall~ (s,a)  \in \mathcal{S}\times\mathcal{A}$, and when $k\to\infty$ as we are getting better policy, we have almost surely that $\left(Q_{k-1}-V_{k-1}\right)\leq \sum_{a}\pi(a|s)\cdot\left(Q_{k-1}-V_{k-1}\right)$
            \begin{align}
                Bell\left(Q_{k-1}\right) + \beta\cdot K &< \sum_{a}\pi(a|s)\cdot Bell\left(Q_{k-1}\right)+\beta\cdot\sum_{a}\pi(a|s)\cdot K\nonumber\\
			\Rightarrow Bell\left(Q_{k-1}\right) + \beta\cdot \left(Q_{k-1}-V_{k-1}\right) &< \sum_{a}\pi(a|s)\cdot Bell\left(Q_{k-1}\right)+\beta\cdot\sum_{a}\pi(a|s)\cdot \left(Q_{k-1}-V_{k-1}\right)\nonumber\\
			\Rightarrow \underset{Q_{k,\mathcal{T}_a}}{\underbrace{Bell\left(Q_{k-1}\right) + \beta\cdot \left(Q_{k-1}-V_{k-1}\right)}} &< \sum_{a}\pi(a|s)\cdot\Big[ Bell\left(Q_{k-1}\right)+\beta\cdot \left(Q_{k-1}-V_{k-1}\right)\Big]\nonumber\\
			\Rightarrow Q_{k,\mathcal{T}_a} &< \sum_{a}\pi(a|s)\cdot Q_{k,\mathcal{T}_a}(s,a)\nonumber\\	
			\Rightarrow Q_{k,\mathcal{T}_a} & < V_{k,\mathcal{T}_a}	~~~~~~~~~\text{as $k\to\infty$}					
		\end{align}
		\item \textbf{Gap increasing} : Using the Definition \ref{GapIncreasing}, we have now to show that :
		\[\left|\lim\limits_{k\to\infty}\left[Q_{k,\mathcal{T}_b} - V_{k,\mathcal{T}_b}\right]\right| \leq \left|\lim\limits_{k\to\infty} \left[Q_{k,\mathcal{T}_a} - V_{k,\mathcal{T}_a}\right]\right|\]
		In fact, we know that, using the Bellman Operator $\mathcal{T}_b$, if we are improving the policy at the same time, the state value function $V_{k,\mathcal{T}_b}$ will converge to the action value function $ Q_{k,\mathcal{T}_b} $ as the policy get better. So, for optimal actions we will have :
		\[\lim\limits_{k\to\infty} \left[Q_{k,\mathcal{T}_b} - V_{k,\mathcal{T}_b}\right] = 0,\]
		i.e. also using $\mathcal{T}_a$, we can say surely for optimal actions :
		\[\left|\lim\limits_{k\to\infty}\left[Q_{k,\mathcal{T}_b} - V_{k,\mathcal{T}_b}\right]\right| = 0 ~~~~\leq~~~~ \left|\lim\limits_{k\to\infty} \left[Q_{k,\mathcal{T}_a} - V_{k,\mathcal{T}_a}\right]\right|\]
  For other actions, the inequality also holds because on convergence $\left(Q_{k-1}-V_{k-1}\right) < 0$ almost surely and from the proof of optimality preservation we have seen that :\newline
  $_{}$\newline
  $Bell\left(Q_{k-1}\right) + \beta\cdot \left(Q_{k-1}-V_{k-1}\right) < \sum_{a}\pi(a|s)\cdot Bell\left(Q_{k-1}\right)+\beta\cdot\sum_{a}\pi(a|s)\cdot \left(Q_{k-1}-V_{k-1}\right),$
  i.e. also $\left|\lim\limits_{k\to\infty}\left[Q_{k,\mathcal{T}_b} - V_{k,\mathcal{T}_b}\right]\right| \leq \left|\lim\limits_{k\to\infty} \left[Q_{k,\mathcal{T}_a} - V_{k,\mathcal{T}_a}\right]\right|$ for any $(s,a)\in \mathcal{S}\times\mathcal{A}$.
	\end{enumerate}
\end{proof}
So, the operator that we suggested through the Expression \ref{AdvantageOperator} is a \textbf{well behaving operator}, even if we just assume the convergence (modulo the value of $\beta$).\newline
Let us now give some comments about the possibility of finding a fixed point for this operator. But, before doing so, we can first say something about the continuity and the boundedness of that operator because without that, we cannot talk about fixed points.
\begin{pro}
	Let $r(s,a)$, the reward function, be bounded and continuous. The operator $\mathcal{T}_a$ as defined by the Expression \ref{AdvantageOperator}, will also be \textbf{bounded} and \textbf{continuous}.
\end{pro}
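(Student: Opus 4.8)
The plan is to exploit the \emph{affine} structure of $\mathcal{T}_a$. Writing $\mathcal{T}_a f = r + L f$, where
\[
(Lf)(s,a) = \gamma\sum_{s'}p(s'|s,a)\sum_{a'}\pi(a'|s')f(s',a') + \beta\left[f(s,a)-\sum_{a}\pi(a|s)f(s,a)\right]
\]
is linear in $f$, I would reduce both claims to elementary estimates on $L$, using repeatedly that $p(\cdot|s,a)$, $\pi(\cdot|s')$ and $\pi(\cdot|s)$ are probability distributions and hence sum to $1$.

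For boundedness, fix $f\in\mathcal{Q}$ and bound $|(\mathcal{T}_a f)(s,a)|$ pointwise via the triangle inequality. The reward term is controlled by $||r||_\infty$ (finite by hypothesis); in the expectation term, $\sum_{s'}p(s'|s,a)\sum_{a'}\pi(a'|s')|f(s',a')|\leq||f||_\infty$ because both weightings are convex combinations; and the advantage term obeys $|f(s,a)-\sum_{a}\pi(a|s)f(s,a)|\leq 2||f||_\infty$. Collecting these gives
\[
||\mathcal{T}_a f||_\infty \leq ||r||_\infty + \left(\gamma + 2|\beta|\right)||f||_\infty < \infty,
\]
so $\mathcal{T}_a$ maps $\mathcal{Q}$ into $\mathcal{Q}$ and is bounded.

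For continuity, the affine structure makes the constant term cancel: for $f,g\in\mathcal{Q}$ one has $\mathcal{T}_a f - \mathcal{T}_a g = L(f-g)$. Running the same three estimates on $f-g$ yields the Lipschitz bound
\[
||\mathcal{T}_a f - \mathcal{T}_a g||_\infty \leq \left(\gamma + 2|\beta|\right)||f-g||_\infty,
\]
which immediately gives continuity of $\mathcal{T}_a$ on $\mathcal{Q}$; note that the constant $\gamma+2|\beta|$ may exceed $1$, consistent with the earlier result that $\mathcal{T}_a$ is not a contraction.

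I do not expect a genuine obstacle: the argument is a routine triangle-inequality computation. The only points requiring care are (i) treating $\mathcal{T}_a$ as affine rather than linear, so that ``bounded'' is read as ``maps $\mathcal{Q}$ to $\mathcal{Q}$ with a Lipschitz image estimate'' rather than as a homogeneous operator-norm bound, and (ii) keeping the factor $2$ coming from the advantage term. If the intended reading of ``continuous'' is instead pointwise continuity of the output $(s,a)\mapsto(\mathcal{T}_a f)(s,a)$ on a continuous state--action space, then the continuity hypothesis on $r$ enters directly: one checks that averaging against $p$ and $\pi$ and the finite advantage combination preserve continuity, so $\mathcal{T}_a f$ is continuous whenever $f$ is.
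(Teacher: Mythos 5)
Your proof is correct, and it is actually tighter than the paper's own argument. The paper splits $\mathcal{T}_a$ into three pieces (the reward, the discounted expectation term, and the advantage term), handles the middle piece by introducing a separate operator $\mathcal{T}_{small}$, proving it is linear and a $\gamma$-contraction, and invoking the fact that bounded linear operators are continuous; but it then disposes of the advantage term with only the remark that it stays bounded and continuous provided ``$\beta$ has good properties,'' leaving that part essentially unproven. You instead absorb the advantage term into a single linear operator $L$ with $\mathcal{T}_a f = r + Lf$ and prove one estimate, $\|Lh\|_\infty \leq (\gamma + 2|\beta|)\,\|h\|_\infty$, that settles everything at once: taking $h = f$ gives the mapping bound $\|\mathcal{T}_a f\|_\infty \leq \|r\|_\infty + (\gamma + 2|\beta|)\,\|f\|_\infty$, so $\mathcal{T}_a$ sends $\mathcal{Q}$ into $\mathcal{Q}$, and taking $h = f - g$ gives Lipschitz continuity of $\mathcal{T}_a$ itself, with no condition on $\beta$ beyond finiteness. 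What the paper's finer split buys is the identification of the middle term with the linear part of the classical Bellman expectation operator, a $\gamma$-contraction, which echoes Chapter \ref{chap3}; what your route buys is completeness and an explicit Lipschitz constant $\gamma + 2|\beta|$, whose possible excess over $1$ you rightly note is consistent with the earlier proposition that $\mathcal{T}_a$ is not a contraction. Your closing remark also resolves an ambiguity the paper glosses over: under the sup-norm reading of the statement only boundedness of $r$ is used, while the continuity hypothesis on $r$ genuinely enters only under the pointwise reading, i.e.\ to conclude that $(s,a) \mapsto (\mathcal{T}_a f)(s,a)$ is a continuous function when the state--action space is continuous.
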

\begin{proof}
	We can split our operator in three parts for simplicity :
	\begin{equation*}
		\left(\mathcal{T}_a f\right) = \underset{\text{part 1}}{\underbrace{r(s,a)}}+\underset{\text{part 2}}{\underbrace{\gamma\cdot\sum_{s'}p(s'|s,a)\left(\sum_{a'}\pi(a'|s')\cdot f(s',a')\right)}} + \underset{A(s,a)\text{:~advantage learning :: part 3}}{\underbrace{\beta\cdot\left[f(s,a)-\sum_{a}\pi(a|s) f(s,a)\right]}}
	\end{equation*}
	\begin{enumerate}
		\item \textbf{Part 1} : the reward function is assumed to be continuous and bounded.
		\item \textbf{Part 2} : If we take that part as an operator by itself, it will be bounded (because it is a contraction) and also linear. And being a bounded linear operator implies that it is also continuous. So, if we call this new operator $\mathcal{T}_{small}$ written as follows :
		\[\Big(\mathcal{T}_{small} f\Big)(s,a) =\gamma\cdot\sum_{s'}p(s'|s,a)\left(\sum_{a'}\pi(a'|s')\cdot f(s',a')\right) = \gamma\cdot\mathbb{E}_\mathbb{P} \left(\sum_{a'}\pi(a'|s')\cdot f(s',a')\right),\]
		We can firstly prove the linearity, assuming that $u,v\in \mathcal{Q}$ and $\alpha,\beta\in \mathbb{R}$. We have to show that $$\mathcal{T}_{small}\Big(\alpha\cdot u+\beta\cdot v\Big) = \alpha\cdot\mathcal{T}_{small}~u+\beta\cdot\mathcal{T}_{small}~v $$
		And, that is true because the operator is just formed with average operations, which are in fact linear. So, $\mathcal{T}_{small}$ is linear.\newline
		Let's now prove the contraction. For that, using again $u$ and $v$ defined previously, we have :
		\begin{align}
			\Big|\mathcal{T}_{small}~u-\mathcal{T}_{small}~v\Big| &= \Big|\gamma\cdot\mathbb{E}_\mathbb{P} \sum_{a'}\pi(a'|s')\cdot \left(u(s',a')-v(s',a')\right)\Big|\nonumber\\
			&\leq \gamma\cdot\mathbb{E}_\mathbb{P}\Big|\sum_{a'}\pi(a'|s')\cdot \left(u(s',a')-v(s',a')\right)\Big|\nonumber\\
			&\leq \gamma\cdot\mathbb{E}_\mathbb{P}\sum_{a'}\pi(a'|s')\cdot\Big| \left(u(s',a')-v(s',a')\right)\Big|\nonumber\\
			&\leq \gamma\cdot\underset{s',a'}{\max}\Big| \left(u(s',a')-v(s',a')\right)\Big|\nonumber\\
			&= \gamma\cdot\underset{s,a}{\max}\Big| \left(u(s,a)-v(s,a)\right)\Big|\nonumber\\
			\Rightarrow \Big|\mathcal{T}_{small}~u-\mathcal{T}_{small}~v\Big| &\leq \gamma\cdot\underset{s,a}{\max}\Big| \left(u(s,a)-v(s,a)\right)\Big|\nonumber\\
			\Rightarrow \underset{s,a}{\max}\Big|\mathcal{T}_{small}~u-\mathcal{T}_{small}~v\Big| &\leq \gamma\cdot\underset{s,a}{\max}\Big| \left(u(s,a)-v(s,a)\right)\Big|\nonumber\\
			\Rightarrow ||\mathcal{T}_{small}~u-\mathcal{T}_{small}~v||_\infty &\leq \gamma\cdot|| \left(u(s,a)-v(s,a)\right)||_\infty
		\end{align}
		Thus, the second part is also continuous and bounded.
		\item \textbf{Part 3} : We know that, if $f$ is continuous and bounded, then $f(s,a)-\sum_{a}\pi(a|s) f(s,a)$ will also be continuous and bounded. We just need to ensure that $\beta$ has good properties, ensuring convergence.
	\end{enumerate}
	Finally, knowing that the summation of continuous and bounded operators is also continuous and bounded, we have proved the assertion, and $\mathcal{T}_a$ is continuous and bounded (the notion of good behavior for $\beta$ is just to be precised).
\end{proof}
From the fact that $\mathcal{T}_a$ is continuous and bounded (modulo the behavior of $\beta$), we are sure that it will not diverge if we set well the values of $\beta$. And, looking carefully on that operator, we can now say something about $\beta$.\newline
We will always need an operator which is well behaving (improving speed and optimality) but also close enough to the classical Bellman Operator as we are evolving. That is the only way we can ensure that we will get closer enough to the \textbf{same fixed point} while maintaining some nice properties of this alternative operator.\newline
If now we consider a family of operators based on $\mathcal{T}_a$ by varying $\beta$, and the case of a $\beta$ which changes per iteration in one operator during the learning process, to ensure convergence, they should fulfill the following conditions :
\begin{equation}\label{eq:ConditionsBeta}
	\left\{ \begin{array}{lcl}
		\sum\limits_{j=1}^{\infty} \beta_{i,j} &<& \infty\\
		&&\\
		\{\beta_{i,j}\} &\to& 0 ~~~\text{with $j$ fixed.}
	\end{array} \right. ~~\text{$i$ : the operator,~~ $j$ : iteration}
\end{equation}
For an operator which has a varying $\beta$ across training, as it will be applied iteratively, we can directly notice that after many iterations, we will have some terms like this
\[\sum\beta_{i,j}\cdot\left(u(s,a)-v(s,a)\right)_j \equiv \sum\beta_{i,j}\cdot A_j,\]
and as the difference $\left(u(s,a)-v(s,a)\right)_j$ is bounded, we can replace it by its bound. Let's call the bound of  $\left(u(s,a)-v(s,a)\right)_j\equiv A_j$ simply $A$. Then :
\[\sum\beta_{i,j}\cdot A_j \leq A\cdot\sum_{j}\beta_{i,j} < \infty \iff \sum_{j}\beta_{i,j}< \infty\]
So, for one operator, with varying $\beta$ across iterations, the sum of values should be finite.\newline
Also, with the conditions we gave, it allows us to construct directly a sequence of operators converging to the classical Bellman Operator. So, that's why we need :
\[\{\beta_{i,j}\} \to 0 ~~~\text{with $j$ fixed.}\]
\begin{exa}
	One example of a family of operators with the form given by the Expression \ref{AdvantageOperator} and fulfilling the conditions on $\beta$ given by \ref{eq:ConditionsBeta}, can be :
	\begin{enumerate}
		\item Operator 1 : $\beta$ is a sequence given by $\{\beta_{1,j}\}$ with $\beta_{1,j}=\dfrac{1}{j^2}$
		\item Operator 2 : $\beta$ is a sequence given by $\{\beta_{2,j}\}$ with $\beta_{2,j}=\dfrac{1}{j^3}$
		\item Operator $k$ : $\beta$ is a sequence given by $\{\beta_{k,j}\}$ with $\beta_{k,j}=\dfrac{1}{j^{k+1}}$.
	\end{enumerate}
\end{exa}
As last comment on this part we can observe : during our diverse proofs, we just require to the mappings to be contractions and \textbf{monotonic} in order to establish the existence of an \textbf{optimal} fixed point. The use of specific definitions of value functions was not really important. This is why we can conjecture that :
\begin{conj}
	Any monotonic contraction mapping integreting a certain notion of policy is a suitable candidate as operator in Reinforcement Learning (we just have to refine accordingly the definitions of the associated value functions for them to be suitable to the RL framework).
\end{conj}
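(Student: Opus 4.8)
Because this is stated as a \emph{conjecture}, the first task is to make its terms precise enough to admit a proof at all. The plan is to fix a Banach space $(\mathcal{X},||\cdot||_\infty)$ playing the role of either $\mathcal{V}$ or $\mathcal{Q}$, and to read ``suitable candidate as operator in Reinforcement Learning'' as the conjunction of two requirements: (a) the operator admits a unique fixed point that is reachable by iteration from every starting point, and (b) that fixed point is optimal in the sense of the partial order of Relation \ref{PartialOrdering}, so that a greedy policy read off from it is optimal. The clause ``integrating a certain notion of policy'' I would formalize by asking that the operator depend on a policy $\pi$ in the same structural way as the Bellman expectation operators of Expressions \ref{eq:BellmanExpOperat} and \ref{eq:BellExpeOperQValues}, so that its fixed point $f^\pi$ is interpretable as the return of following $\pi$. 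With these readings fixed, the conjecture becomes an abstraction of the very proofs already carried out in this chapter.

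The existence, uniqueness and reachability of the fixed point is then essentially free. Since $\mathcal{T}$ is assumed to be a $\gamma$-contraction on a Banach space, the refined Banach Contraction Principle (Proposition \ref{thm:RefinedBCP}) delivers a unique fixed point $f^*$ together with geometric convergence $f_{n+1}=\mathcal{T}f_n\to f^*$ from any $f_0\in\mathcal{X}$. This reproduces verbatim, for the abstract operator, the conclusion drawn earlier for $\mathcal{T}^*$ and $\mathcal{T}^\pi_v$, and it is the part of the statement that does not depend on the explicit averaging-or-maximization form of the classical operators.

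Next I would invoke monotonicity to control the \emph{direction} of the iteration. If the starting iterate is a lower bound (e.g.\ $f_0\equiv 0$), monotonicity forces the orbit $\{f_n\}$ to be non-decreasing, and combined with the geometric convergence this identifies $f^*$ as the supremum of the orbit rather than an arbitrary fixed point. In the policy-iteration template of the previous section, this is exactly the property that makes successive policy-improvement steps non-decreasing in the order of Relation \ref{PartialOrdering}, so that the limit is the optimal value function. Thus monotonicity is precisely what upgrades ``a unique fixed point'' to ``the optimal fixed point,'' which is why the conjecture pairs it with the contraction hypothesis.

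The hard part, and the reason this is a conjecture rather than a theorem, is making the optimality clause rigorous \emph{without} reference to the explicit Bellman structure. Concretely, one must show that the greedy policy extracted from $f^*$ genuinely improves, i.e.\ that a policy-improvement step is well defined and non-decreasing, using only the abstract contraction and monotonicity properties and not the specific form that ties $q$ to $v$ through Relation \ref{equation : relation-action-and-state-value-function}. I expect this to require an extra compatibility hypothesis linking the operator to the greedy-policy map, and the genuine mathematical content of the conjecture lies in pinning down the \emph{minimal} such hypothesis; that is the main obstacle to converting the informal statement into a theorem.
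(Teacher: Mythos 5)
Your proposal matches the paper's own treatment of this statement: the paper offers no proof at all — it is deliberately left as a conjecture, motivated only by the observation that all of the chapter's proofs used nothing beyond the contraction property (which yields existence, uniqueness, and iterative convergence of the fixed point via Proposition \ref{thm:RefinedBCP}) and monotonicity (which ties that fixed point to optimality). Your analysis reproduces exactly this motivation, and you correctly diagnose why the statement stays a conjecture rather than becoming a theorem: the optimality clause cannot be made rigorous in the abstract without an additional compatibility hypothesis linking the operator to policy improvement, which is precisely the gap the paper leaves open.
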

\section{Practical implementation and analysis}
To prove the effectiveness of the suggested \textit{Modified Robust Stochastic Operator}, we have conducted experiments on 3 groups of classical problems in reinforcement learning using the \textbf{Q-Learning} algorithm, in \textbf{Open-AI Gymnasium environments}. The implementation of Q-Learning we used is inspired by \cite{GithubVmayoral} and our own implementation with all the parameters is in our Github Repository \cite{MyGithub}.\newline
Here are the necessary details on the environments we used :
\subsection{Environments and results}
\begin{itemize}
	\item[1.] Mountain Car environment :
	The theory about this environment is presented in \cite{moore1990efficient}. The state vector is 2-dimensional, continuous with a total of three possible actions. As long as the goal is not yet reached, depending on the action, a negative reward is given to the agent, until it reaches the goal.  Following \cite{lu2018general}, we have discretized the state space into a $40\times 40$ grid, but differently we did 10000 training steps, with 10000 episodes
each. The following Figure \ref{fig:MountainCar} shows the averages across episodes.
	\begin{figure}[htbp!]
		\centering
		\includegraphics[width=0.6\textwidth]{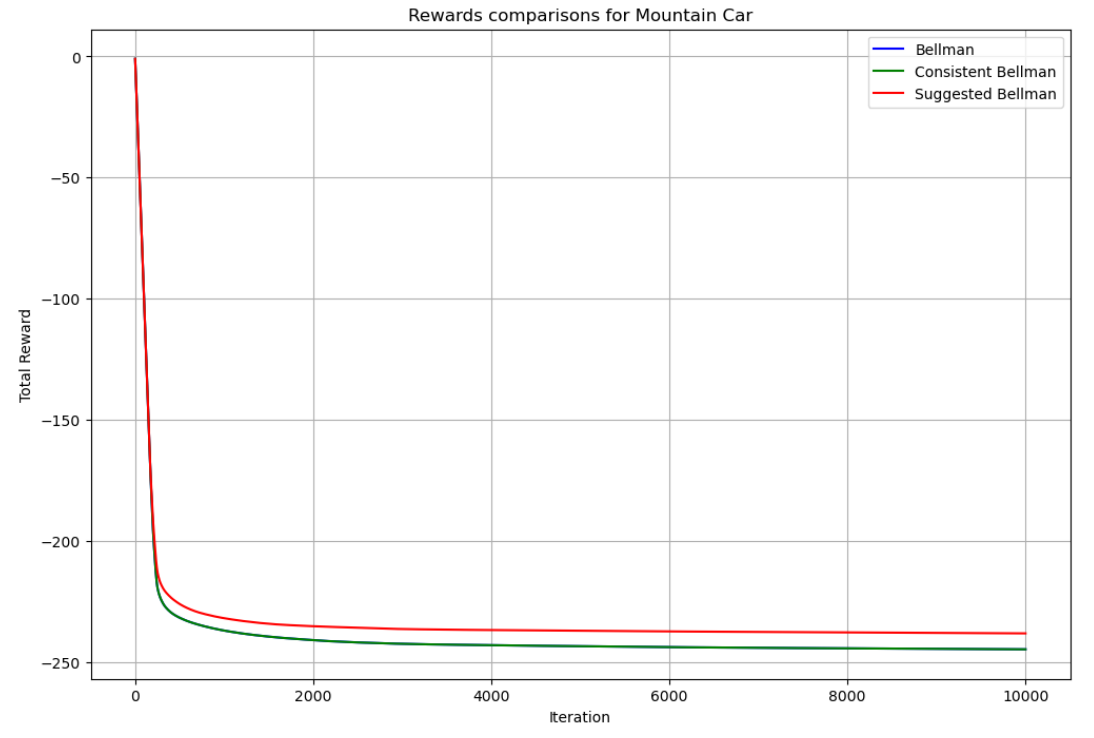}
		\caption{The 3 operators on MountainCar environment.}\label{fig:MountainCar} 
	\end{figure}
	\item[2.] Cart Pole environment :
	The theory about Cart Pole is presented in \cite{barto1983neuronlike}. The state vector is 4-dimensional and continuous with a total of two possible actions. The aim is to keep the pole upright for as long as possible, with a reward of $+1$ for each step up to the failure, including the final step. So, the reward is positive at the end.  Here, we have discretized the state space into a $150\times 150\times 150 \times 150$ grid and again we did 10000 training steps, with 10000 episodes each. The following Figure \ref{fig:CartPole} shows the averages across episodes.
	\begin{figure}[htbp!]
		\centering
		\includegraphics[width=0.6\textwidth]{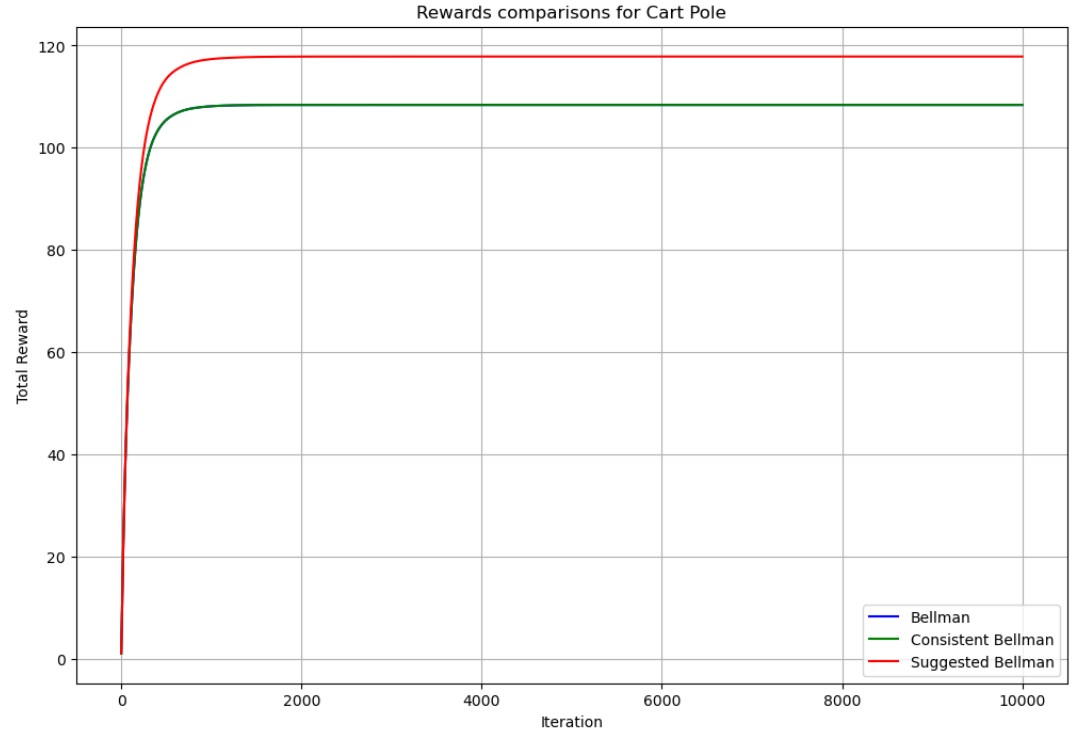}
		\caption{The 3 operators on CartPole environment.}\label{fig:CartPole}
	\end{figure}

	\item[3.] Acrobot environment :
	The theory about Acrobot is presented in \cite{sutton1995generalization}. The state vector is 6-dimensional and continuous with a total of three possible actions. The goal is to have the free end of the Acrobot to reach the target height (represented by a horizontal line) in as few steps as possible, with each step not reaching the target being rewarded with -1. So, the reward is negative again as for Mountain Car.  Here, we have discretized the state space into a $30\times 30\times 30 \times 30\times 30 \times 30$ grid, due to the limitations in memory allocation of the computer we was using, and again we did 10000 training steps, with 10000 episodes each. The following Figure \ref{fig:Acrobot} shows the averages across episodes.
	\begin{figure}[htbp!]
		\centering
		\includegraphics[width=0.6\textwidth]{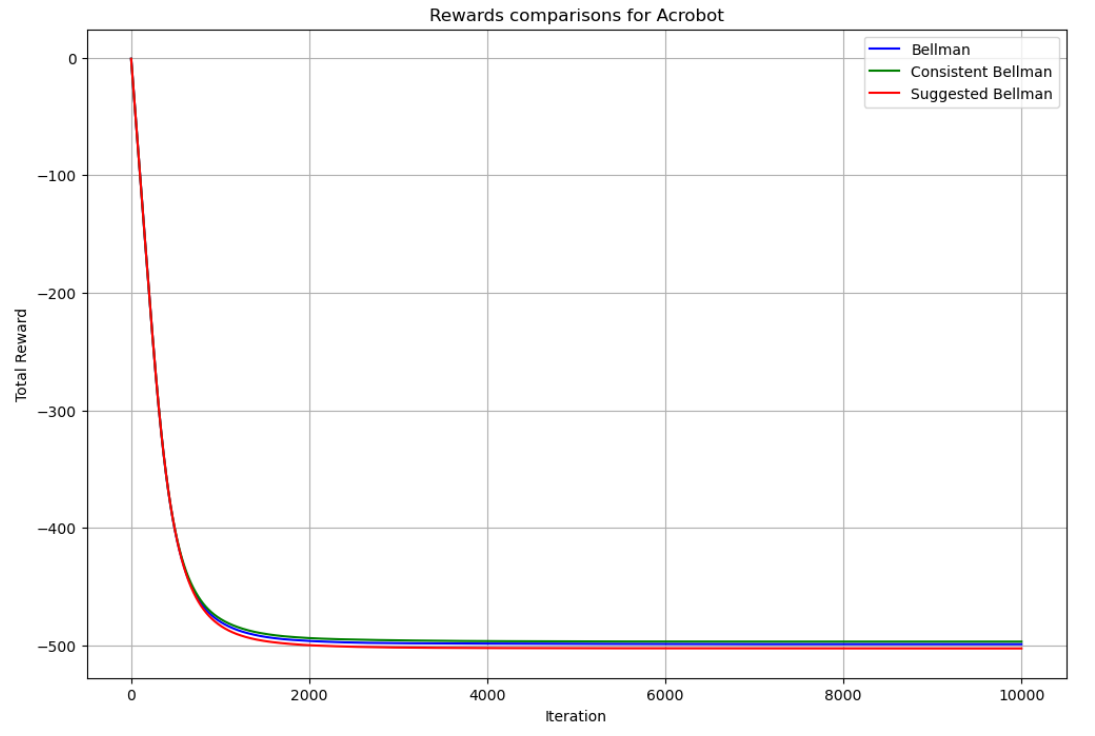}
		\caption{The 3 operators on Acrobot environment.}\label{fig:Acrobot}
	\end{figure}
\end{itemize} \newpage
\subsection{Interpretation and discussion}
\begin{itemize}
	\item[1.] Mountain Car : From the problem presented above, we know that the reward for mountain car can be negative because it depends with how long the agent takes to reach the goal or to be stopped. Now, looking on the Figure \ref{fig:MountainCar} we can directly see how the average reward is better using the \textit{Modified Robust Stochastic Operator} than using the  Bellman Operator. Also, the classical and the consistent operators are performing exactly the same for this specific case. So, overall, the suggested Operator finishes the episodes with higher reward than for the other two operators (the classical and the consistent).
	\item[2.] Cart Pole : For Cart Pole, we can see in Figure \ref{fig:CartPole} that the higher reward is reached again while using the Modified Bellman (suggested Bellman) compared to the other operators. And the difference in terms of performances is really clear.
	\item[3.] Acrobot : Acrobot was more challenging and as we can see in Figure \ref{fig:Acrobot}, the results using those 3 operators are about the same. So, we think this is due to the fact that we were not able to make a finer discretization for the Acrobot environment. In fact this was due to the memory limitations. So, this experiment needs further investigations to establish clearly what is the great attainable difference between those operators.
\end{itemize}
So, generally we can say that the Consistent Bellman Operator gives about the same result as the classical one, but our modified version of the Robust Stochastic Operator gives most of the time better results comparing to the other previous two operators.
\section*{Chapter Conclusion}
In this last chapter, we have introduced the Consistent Bellman Operator as the first alternative to the classical Bellman Operator presented in the previous chapter, and we showed that it fulfills exactly the same properties as the classical one (uniqueness of the optimal fixed point), although this raises the question of the relation between a fixed point found by the Consistent Bellman and the one we can find by the classical Bellman. Secondly, we presented our version of the robust stochastic operator. We modified the one proposed in the literature in such a way that it no longer needs stochasticity, but still preserves some good properties (gap increasing and optimality preservation), even if it loses the properties of a classical Bellman operator. Finally, with results from a practical implementation, we clearly show that our proposed operator must be taken seriously, since it unexpectedly outperforms the classical Bellman operator on some difficult tasks, as hypothesised.
\chapter{Conclusion}

%The following work has served as the first stone on studying deeply and in a precise mathematical way Reinforcement Learning concepts. This is done in order to facilitate breakthrough discoveries or investigations after, and to allow even pure mathematicians to work on the same subject and to suggest insightful adjustments, necessary for making better Reinforcement Learning algorithms. This has been done through different chapters in this work as follow :

This work serves as a foundational effort in the deep and precise mathematical study of Reinforcement Learning concepts. The aim is to facilitate breakthrough discoveries and subsequent investigations, and to enable even pure mathematicians to contribute to this subject by suggesting insightful adjustments necessary for improving Reinforcement Learning algorithms. This has been achieved through the various chapters in this work as follows:
 
%After the general introduction, in chapter \ref{chap1}, we have mainly discussed about contraction mappings in a metric space as well as the Banach Fixed Point Theorem. We have also shown how the theorem can be used to prove the existence and uniqueness of solutions for first order Ordinary Differential Equations with initial conditions.
After the general introduction, in Chapter \ref{chap1}, we primarily discussed contraction mappings in a metric space and the Banach Fixed Point Theorem. We also demonstrated how the Banach Fixed Point Theorem can be used to prove the existence and uniqueness of solutions to first-order Ordinary Differential Equations with initial conditions.

%In the chapter \ref{chap2} then, we have expressed Reinforcement Learning problem in terms of Markov Decision Process. This has now allow us to introduce different expressions of the value function (in terms of state and in terms of state-action pairs). After that, we have presented the concepts of Bellman Equations which now are the first step on introducing the Bellman Operators.
In Chapter \ref{chap2}, we expressed the Reinforcement Learning problem in terms of a Markov Decision Process. This allowed us to introduce different expressions of the value function, both in terms of state and state-action pairs. Subsequently, we presented the concepts of Bellman Equations, which served as the first step in introducing the Bellman Operators.

%The chapter \ref{chap3} were dedicated then especially to Bellman Operators. We have shown that those operators are $\gamma$-\text{contraction} and monotonic mappings, implying that they have only one fixed point and the mentioned fixed point is the optimum. This have justified why and how, in a fundamental point of view, Reinforcement Learning algorithms work.
Chapter \ref{chap3} was dedicated to Bellman Operators. We have demonstrated that these operators are $\gamma$-contraction and monotonic mappings, implying that they have a unique fixed point that is the optimum. This justification sheds light on why and how Reinforcement Learning algorithms work from a fundamental perspective.
%At the end, in chapter \ref{chap4}, we have presented the Consistent Bellman Operator as the first alternative to the classical Bellman Operator. After that, we have shown that it fulfill exactly the same properties as the classical Bellman Operator (uniqueness of the optimal fixed point) but this raised the question about the relation between a fixed point that we found through Consistent Bellman Operator and the one we can found through the classical Operator. We then presented a modification of the Robust Stochastic Operator which is in fact deterministic but defined in a more general and precise way, showing that there is no longer need of stochasticity to get better results that the classical Bellman Operator. At the end, with the results we got from an implementation in Python, using Open-AI Gymnasium environments, we have clearly show that our proposed operator needs to be taken seriously because it outperforms the classical Bellman Operator on many of the considered tasks.

At the end, in chapter \ref{chap4}, we presented the Consistent Bellman Operator as the first alternative to the classical Bellman Operator. After that, we showed that it fulfills exactly the same properties as the classical Bellman Operator (uniqueness of the optimal fixed point). However, this raised questions about the relationship between a fixed point found through the Consistent Bellman Operator and one found through the classical Operator. We then introduced a modification of the Robust Stochastic Operator, which is deterministic but defined in a more general and precise way. This demonstrates that stochasticity is no longer necessary to achieve better results than the classical Bellman Operator. Finally, using an implementation in Python and OpenAI Gymnasium environments, we clearly demonstrated that our proposed operator merits serious consideration, as it outperforms the classical Bellman Operator in many of the considered tasks.

%Based on the foundations laid by this work, the perspectives are many. The mathematical properties of value function spaces have been clearly presented in this work (in fact, value functions can be properly studied in Banach spaces), but we also need to deeply study the state space as well as the action space. We can even study the more challenging policy space. The most relevant attempts we have seen during our research are in \cite{ValuePolytope}, where the geometric and topological properties of value functions in finite-state action Markov decision processes are explored, and in \cite{MetricAndContinuityInRL}, where a unified formalism for defining state similarity metrics in Reinforcement Learning is introduced, hierarchies among metrics are established, and their implications for Reinforcement Learning algorithms are studied. We expect that they can be well understood and extended with the help of the investigations done in this work.
Based on the foundations laid by this work, the perspectives are numerous. The mathematical properties of value function spaces have been clearly presented (value functions can indeed be properly studied in Banach spaces), but future researchers can explore in depth the state space and the action space as well as the policy space. They can also extend the analysis of the efficiency of our proposed operator. The most relevant attempts we have seen during our research are in \cite{ValuePolytope}, where the geometric and topological properties of value functions for Markov decision processes are explored, and in \cite{MetricAndContinuityInRL}, where a unified formalism for defining state similarity metrics in Reinforcement Learning is introduced, hierarchies among metrics are established, and their implications for Reinforcement Learning algorithms are studied. We expect that these can be well understood and extended with the help of the investigations done in this work.

%This work can allow researchers, even those not involved in the field, to quickly and meaningfully engage with Reinforcement Learning, as the most important foundations are mathematically clarified.
This work enables researchers, even those not currently involved in the field, to quickly and meaningfully engage with Reinforcement Learning, as the most important foundations are mathematically clarified.

\appendix
\renewcommand*{\theHchapter}{\Alph{chapter}}
\endappendix
%-----------------------------------------------------------------------------
% See the acknowledgement.tex file and follow the instructions there.
\chapter*{Acknowledgements}
\addcontentsline{toc}{chapter}{Acknowledgements}
% Don't change anything above this.
% Overly long acknowledgements are not professsional.
% You are free to use your native language 

\begin{quote}
	``It is not that we think we are qualified to do anything on our own. Our qualification comes from God."
	\href{https://www.biblegateway.com/passage/?search=2%20Corinthians%203%3A5&version=NLT}{2 Corinthians 3:5}
\end{quote}
\begin{quote}
	``Transire suum pectus mundoque potiri. Deus ex machina.``
\end{quote}
	
%My sincere thanks to all the respectful individuals who have made AIMS such a beautiful and transformative experience for me, especially Professor Neil Turok, Professor Mama Foupouagnigni and Dr. Daniel Duviol. This work would not have been possible without the invaluable guidance of my supervisor, Dr. Ya\'{e} Ulrich Gaba, who, despite all his responsibilities, was always there to guide, advise, and help me, working with him was very amazing. My heartfelt thanks go to my mentor, Sir Domini Leko, whose simplicity, advice, and availability for help went beyond science and always inspire me. I can't forget to thank my beloved Sifa Shamavu Rose, who support me beyond measure and whose love and patience always remind me that I'm on the right path. Many thanks to all my friends at AIMS, especially Sinenkhosi Mamba, who made me feel at home. Thank you very much to all those who have been a constant support on my journey. May this work also serve as a testimony to the effort you have put into helping me, as well as the kindness you have shown me.

My sincere thanks to all the esteemed individuals who have made AIMS such a beautiful and transformative experience for me, especially Professor Neil Turok, Professor Mama Foupouagnigni, and Dr. Daniel Duviol. This work would not have been possible without the invaluable guidance of my supervisor, Dr. Ya\'{e} Ulrich Gaba, who, despite his many responsibilities, was always there to guide, advise, and help me; and working with him was truly amazing. My heartfelt thanks go to my mentor, Sir Domini Leko, whose simplicity, advice, and willingness to help went beyond science and always inspired me. I also extend my gratitude to my beloved Sifa Shamavu Rose, whose unwavering support, love, and patience constantly remind me that I am on the right path. Many thanks to all my friends at AIMS, especially Sinenkhosi Mamba, who made me feel at home. Thank you to everyone who has been a constant support on my journey. May this work also serve as a testament to the effort you have put into helping me and the kindness you have shown me.

%-----------------------------------------------------------------------------
% Note the errata page is not for now, it is for use during the examination.
% Not that you're going to have any errata.
%-----------------------------------------------------------------------------

\renewcommand{\bibname}{References}

\bibliographystyle{plain}
\bibliography{main}
\addcontentsline{toc}{chapter}{References}
%-----------------------------------------------------------------------------
\end{document}